\documentclass{article}

\usepackage[preprint]{neurips_2026}

\usepackage[utf8]{inputenc} 
\usepackage[T1]{fontenc}    
\usepackage{hyperref}       
\usepackage{url}            
\usepackage{booktabs}       
\usepackage{amsfonts}       
\usepackage{nicefrac}       
\usepackage{microtype}      
\usepackage{xcolor}         

\usepackage{amsmath, amsthm, amssymb}
\usepackage{bm}
\usepackage{cite} 
\usepackage{booktabs}
\usepackage{graphicx}
\usepackage{multirow}
\usepackage{tikz}

\newcommand{\Hspace}{\mathcal{H}}
\newcommand{\Xspace}{\mathcal{X}}
\newcommand{\E}{\mathbb{E}}
\newcommand{\phiMap}{\phi}
\newcommand{\KME}{\mu}
\newcommand{\empKME}{\hat{\mu}}

\newtheorem{theorem}{Theorem}
\newtheorem{definition}{Definition}
\newtheorem{lemma}{Lemma}
\newtheorem{proposition}{Proposition}
\usepackage{enumitem}
\usepackage{algorithm}

\usetikzlibrary{patterns}

\newtheorem{assumption}{Assumption}
\newtheorem{corollary}{Corollary}
\newtheorem{remark}{Remark}

\usepackage{enumitem}
\newlist{myitems}{enumerate}{1}
\setlist[myitems, 1]
{label=\arabic{myitemsi}., 
leftmargin=15pt,
rightmargin=10pt
}

\title{Why does Greedy Search produce Optimal Clustering Outcomes? A Fixed-Core Assignment Theory}

\author{%
Kaifeng Zhang \\   
  School of Artificial Intelligence\\
  Nanjing University, China \And
  Kai Ming Ting\\  
   School of Artificial Intelligence\\
  Nanjing University, China 
  \AND
  Sanjay Chawla \\ Qatar Computing Research Institute, HBKU
}

\begin{document}

\maketitle

\begin{abstract}
Many existing clustering methods are designed based on a set-oriented  definition---a cluster is a set of similar points---relying a point-to-point similarity function to find similar points. This works well for compact clusters, but clustering performance can deteriorate badly when cluster shapes are irregular, and densities or sizes vary between clusters. Recent `Cluster-as-Distribution' (CaD) clustering has been shown to discover these generic types of clusters  in practice by treating each cluster as a set of independent and identically distributed points generated from some unknown distribution via a greedy search, achieving a clustering objective equivalent to that of Spectral Clustering, but with better clustering outcomes without eigen-decomposition. However, a theoretical analysis of this phenomenon is still lacking. Our analyses are from two angles. First, we analyze the approximation error between the true  and empirical distribution embeddings. Second, we show that the greedy search employed to achieve the CaD clustering objective can be mapped to a partition matroid---yielding greedy optimality. These yield a near-optimality guarantee for the CaD clustering objective, with regret controlled by the approximation error. This is the first analysis that explains why CaD clustering via greedy search can discover clusters of arbitrary shapes, densities and sizes (where all set-oriented clustering methods have failed to discover) when the estimated cluster embeddings faithfully approximate the underlying cluster distributions.
\end{abstract}


\section{Introduction}

Most existing clustering methods treat \emph{each cluster as a set of similar points} and optimize local similarity structure based on some assumption (e.g., each cluster can be represented with a centroid or Gaussian distribution \citep{aggarwal2015data,macnaughton1964dissimilarity}) relying on a point-to-point similarity/distance function. This set-oriented paradigm is effective for compact, homogeneous clusters, but it is often less reliable when clusters are non-convex, multi-scale, or density-heterogeneous \citep{bhattacharjee2021survey,jain2010data,zhu2016density}. Even more sophisticated clustering such as Spectral Clustering \citep{von2007tutorial,dhillon2004kernel} and Deep Clustering \citep{wu2022deep} have their fundamental limitations \citep{lu2024survey,nadler2006fundamental,ros2024deep,ting2026achieve,zhou2024comprehensive}.  This is because the assumed cluster structure (in either the input space or the learned representation space) often does not match the distributions in which the clusters are generated.  In addition, all these algorithms have time complexity worse than quadratic, except $k$-means clustering.

A recent `Cluster-as-Distribution' (CaD) Clustering \citep{ting2026achieve,zhang2025kernel,zhu2023kernel,ting2026achieve} addresses this mismatch by treating each cluster as a set of independent and identically distributed (i.i.d.) samples generated from an underlying distribution. In this setting, the task can be stated as follows:
\textit{Given unlabeled points in $\mathbb{R}^d$, discover $k$ clusters as $k$ sets of i.i.d. samples generated from $k$ distinct unknown distributions with potentially arbitrary shapes, sizes, and densities.}

Example methods of CaD Clustering are Kernel-Bounded Clustering (KBC) \citep{zhang2025kernel,zhang2025kernel2}, Isolation Distributional Kernel Clustering (IDKC) \citep{zhu2023kernel}, and Point-Set Kernel Clustering (psKC) \citep{ting2022point}, and they have produced better clustering outcomes than kernel $k$-means, Spectral Clustering and deep clustering. The unique difference of CaD Clustering is utilizing the distributional information in a given dataset to perform clustering via a distributional kernel, without parametric estimation (as in Gaussian Mixture Modeling \citep{gormley2023model,reynolds2009gaussian}) or density estimation \citep{rinaldo2012stability} (as in density-based clustering \citep{bhattacharjee2021survey,ester1996density}). \textit{The shared clustering procedure of CaD Clustering is simple: construct a distribution proxy $\mathcal{P}_G$ of core cluster $G \subset C$, for each to-be-discovered cluster $C$, then assign each point $x$ in the given dataset to the most similar proxy based on a distributional kernel $K(\delta(x), \mathcal{P}_G)$}, without using a sophisticated optimization such as deep learning, Expectation-Maximization (EM) optimization or  eigen-decomposition, where $\delta(x)$ is the Dirac measure which converts a point into a distribution, and $K$ measures the similarity between two distributions. The CaD clustering objective is to maximize the total  similarity between points and their assigned proxies, as measured by $K$, via a greedy search.


Despite this strong clustering performance, a fundamental question remains unanswered:
Why does a greedy search method achieve better clustering outcomes than deep learning, EM optimization, or eigen-decomposition based methods (as reported in the literature \citep{ting2022point,zhang2025kernel,zhang2025kernel2,ting2026achieve,zhu2023kernel})?
This paper provides an explanation towards answering this question.
In particular, we answer two central questions: (i) How accurately does a core cluster $G \subset C$ represent a cluster distribution $\mathcal{P}_C$? (ii) Once the core clusters (having approximation errors) are fixed, does the one-pass assignment, via a greedy search based on  $K(\delta(x), \mathcal{P}_G)$, achieve the clustering  objective optimally?

This paper answers these two questions within a single framework. Our contributions are:
\begin{myitems}
    \item \textbf{Decomposition of distribution embedding approximation error}
    into three components: truncation, estimation, and core selection. The core selection term depends on how representative the selected proxies are and it is the main factor that affects the approximation quality.
    \item \textbf{Fixed-core assignment optimality} is
    formulated via an one-pass assignment with fixed-core embeddings as a
    maximum-weight basis selection problem on a partition matroid \citep{edmonds1971matroids}, showing that
    the greedy rule is optimal for the CaD clustering objective.
\item \textbf{Distribution regret under embedding approximation error} links
    the  error to the  CaD clustering objective via the partition matroid---providing a margin-based label
    recovery corollary.

    \item \textbf{Diagnostic experiments} validate the error
    decomposition on synthetic benchmarks, showing that all the bounds are valid, while the core bias bound is
    conservative because it is model-agnostic. We also compare CaD clustering with kernel $k$-means, where both share many similarities but have important differences.
\end{myitems}

These results provide the first theoretical understanding on why CaD clustering via a greedy search can be effective from a statistical and combinatorial perspective. Notations are summarized in Table~\ref{tab:notation}.

\section{Related Work and Preliminaries}



\subsection{`Cluster-as-Distribution' Clustering}


`Cluster-as-Distribution' (CaD) clustering methods replace the typical set view with a distribution view, where each cluster is defined to be a set of i.i.d. samples generated from an underlying distribution. This can be represented via a distributional kernel based on Kernel Mean Embedding (KME) \citep{muandet2017kernel,smola2007,sriperumbudur2010hilbert}, without Gaussian distribution assumption or density estimation. 


Representative methods
KBC \citep{zhang2025kernel,zhang2025kernel2}, psKC \citep{ting2022point}, IDKC \citep{zhu2023kernel} can be understood to operate under a two-stage approach (see Appendix \ref{appendix:algorithm-details} for the detailed algorithmic descriptions): (1) Representing the distribution of each cluster in input space, via KME, as a point in a Reproducing Kernel Hilbert Space (RKHS). 
(2) Performing clustering via one-pass assignment where each individual data point is assigned to the most similar cluster, as measured by a distributional kernel and computed in RKHS.

Let $k: \Xspace \times \Xspace \to \mathbb{R}$ be the Gaussian kernel $k(x, y) = \exp(-\|x-y\|^2/2\sigma^2)$, and let $\phiMap(x) = k(x, \cdot) \in \Hspace$ be its feature map in RKHS. Denote by $\mathcal{P}$ the underlying continuous distribution of a target cluster and by $G = \{x_1, \dots, x_m\}$ the core cluster (a finite subset of the support of $\mathcal{P}$) found by a specific method (e.g., connected component). KBC optimizes the total self-similarity objective, i.e., $\max_{\mathcal{C}} \sum_{C\in\mathcal{C}}\sum_{x\in C} K(\delta(x),\mathcal{P}_C)$, where $K$ is the distributional kernel induced by point kernel $k$, $\mathcal{C}$ is the set of all possible clustering results and $\mathcal{P}_C$ is the distribution of cluster $C$.

\begin{definition}
The mean embedding of distribution $\mathcal{P}$ is defined as $\KME_{\mathcal{P}} = \int_{\Xspace} \phiMap(x) d\mathcal{P}(x).$
\end{definition}

\begin{definition}
The empirical mean embedding based on core cluster $G$, which is a set of i.i.d. samples from $\mathcal{P}$, is defined as $\empKME_G = \frac{1}{|G|} \sum_{x \in G} \phiMap(x).$
\end{definition}


The central quantity that we are about to analyze in Section \ref{section: KME error} is the  discrepancy between the two embeddings in RKHS: $\Delta = \|\KME_\mathcal{P} - \empKME_G \|_{\Hspace}$. Our analysis is also related to the broader literature on Kernel Mean
Embeddings \citep{muandet2017kernel,smola2007,sriperumbudur2010hilbert} and Maximum Mean Discrepancy \citep{gretton2006kernel}. The standard KME theory
provides approximation tools for distributional
representations in RKHS.
Our work differs in that it decomposes the  approximation
error $\Delta$ of the core cluster embedding into truncation, estimation, and core selection terms, and then links
this decomposition to one-pass assignment via a greedy search in achieving the optimality of the CaD clustering objective.

\subsection{Matroid Theory}
A matroid \citep{edmonds1971matroids} abstracts the notion of linear independence in vector spaces to general sets. Matroids identify an important class of feasible-set systems in which the locally greedy choice is guaranteed to produce a globally optimal solution for linear objectives \citep{edmonds1971matroids,faigle2009general}.

\begin{definition}[Matroid \citep{edmonds1971matroids}]
A matroid $M$ is a pair $(S, \mathcal{I})$, where $S$ is a finite ground set and $\mathcal{I} \subseteq 2^S$ is a collection of independent sets satisfying:
(1) \textbf{Non-emptiness:} $\emptyset \in \mathcal{I}$. (2) \textbf{Hereditary Property:} If $A \in \mathcal{I}$ and $B \subseteq A$, then $B \in \mathcal{I}$.
(3) \textbf{Exchange Property:} If $A, B \in \mathcal{I}$ and $|A| < |B|$, there exists an element $e \in B \setminus A$ such that $A \cup \{e\} \in \mathcal{I}$.
\end{definition}

\begin{theorem}[Rado-Edmonds Theorem \citep{edmonds1971matroids,rado1957note}]
\label{theorem: Rado-Edmonds Theorem}
Let $M=(S, \mathcal{I})$ be a matroid. For any non-negative weight function $w: S \to \mathbb{R}^+$, the greedy algorithm (which iteratively picks the heaviest element that maintains independence) is guaranteed to find a set $A \in \mathcal{I}$ that maximizes $\sum_{e \in A} w(e)$.
\end{theorem}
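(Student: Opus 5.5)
The plan is the classical exchange argument, organised so that it uses only the three matroid axioms and the non-negativity of $w$. Let the greedy algorithm sort $S$ by non-increasing weight (ties broken arbitrarily). It scans this list and adds an element $e$ to the current set whenever independence is preserved. Call the output $A=\{a_1,\dots,a_r\}$, listed in the order of selection, so $w(a_1)\ge \dots \ge w(a_r)$. Non-emptiness guarantees the algorithm starts from a feasible set, and each step preserves membership in $\mathcal{I}$, so $A\in\mathcal{I}$.

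First I show that $A$ is a maximal independent set. Suppose $A\cup\{e\}\in\mathcal{I}$ for some $e\notin A$. When $e$ was scanned, the current set was some $A'\subseteq A$. By the hereditary property, $A'\cup\{e\}\subseteq A\cup\{e\}$ is independent, so greedy would have added $e$, a contradiction. Next, every maximal independent set has the same cardinality: if $B$ is maximal and $|B|<|A|$, the exchange property gives $e\in A\setminus B$ with $B\cup\{e\}\in\mathcal{I}$, contradicting maximality of $B$.

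Since $w\ge 0$, any independent set can be extended to a maximal one without decreasing its weight. It therefore suffices to compare $A$ with a maximal independent set $B=\{b_1,\dots,b_r\}$, sorted so that $w(b_1)\ge\dots\ge w(b_r)$. The key claim is that $w(a_i)\ge w(b_i)$ for every $i$; summing over $i$ then gives $w(A)\ge w(B)$.

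The claim is the main step. Suppose it fails, and let $i$ be the smallest index with $w(b_i)>w(a_i)$. Set $A_{i-1}=\{a_1,\dots,a_{i-1}\}$ and $B_i=\{b_1,\dots,b_i\}$; both are independent by the hereditary property, and $|A_{i-1}|<|B_i|$. The exchange property yields some $b_j\in B_i\setminus A_{i-1}$ with $A_{i-1}\cup\{b_j\}\in\mathcal{I}$, and $w(b_j)\ge w(b_i)>w(a_i)$. Because the scan is in non-increasing weight order, $b_j$ was examined before $a_i$, at a moment when the current set was contained in $A_{i-1}$. By the hereditary property, adding $b_j$ at that moment would have kept the set independent, so greedy would have picked $b_j$. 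Since $b_j\notin A_{i-1}$, this contradicts the selection sequence.

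The main care point is handling the timing in this step correctly. One must argue via the hereditary property applied to the set held at the time $b_j$ was scanned, and one must account for ties so that the element "examined before $a_i$" is well defined. Treating all weight comparisons using the fixed sorted order handles both issues.
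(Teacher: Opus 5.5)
Your proof is correct. It is the standard exchange argument: the greedy output is a basis, bases are equicardinal, and the exchange axiom forces $w(a_i)\ge w(b_i)$ for the sorted elements. Your timing step is sound. Because $w(b_j)>w(a_i)$ is strict, $b_j$ precedes $a_i$ in the scan whatever the tie-breaking, and the set held at that moment is contained in $A_{i-1}$.

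There are two small points of exposition. First, your equicardinality sentence only excludes $|B|<|A|$. You also need $|B|\le|A|$, since otherwise the comparison over $i\le r$ says nothing about the surplus elements of $B$. This follows from the same exchange step applied to $A$, together with the maximality of $A$ you proved just before. Second, the statement describes greedy as repeatedly taking the heaviest element whose addition keeps independence, whereas you analyse a single sorted scan. These coincide because, by the hereditary property, an element rejected against a set $C$ stays rejected against every superset of $C$.

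The paper does not prove this statement at all. It imports it from Edmonds and Rado and uses it only in Theorem~\ref{theorem: fixed_core_opt}, on the partition matroid with disjoint blocks $S_i$. For that use, the general machinery is more than is needed. Any $A\in\mathcal{I}$ meets each block $S_i$ in at most one element, so with $w\ge 0$ we get $\sum_{e\in A}w(e)\le\sum_i\max_j w(x_i,j)$, with equality for the per-point argmax. Your argument gives the result for arbitrary matroids. The blockwise bound proves exactly the paper's claim in one line, and it also avoids the paper's unstated step of identifying its per-point argmax rule with the globally sorted greedy scan.
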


\section{Error Analysis of KME approximation}
\label{section: KME error}
We derive an upper bound for $\Delta$ by decomposing it into truncation, estimation, and core selection components. All proofs are provided in Appendix \ref{appendix: proofs}. 

\begin{figure}[ht]
    \centering
    \begin{minipage}[t]{0.3\textwidth}
        \centering
        \includegraphics[width=\linewidth]{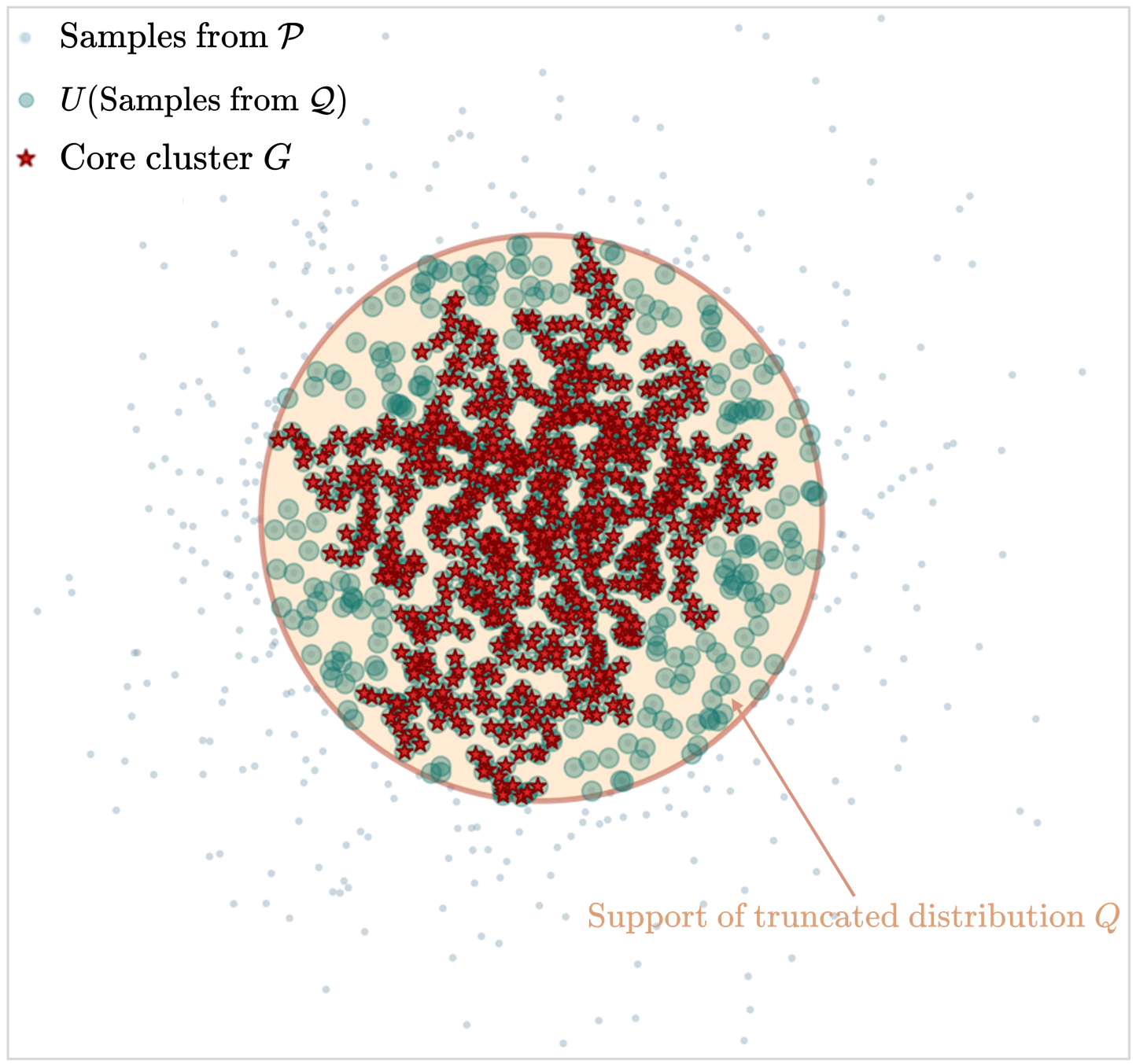}
    \end{minipage}\hspace{0.003\linewidth}
    \begin{minipage}[t]{0.42\textwidth}
        \centering
        \includegraphics[width=\linewidth]{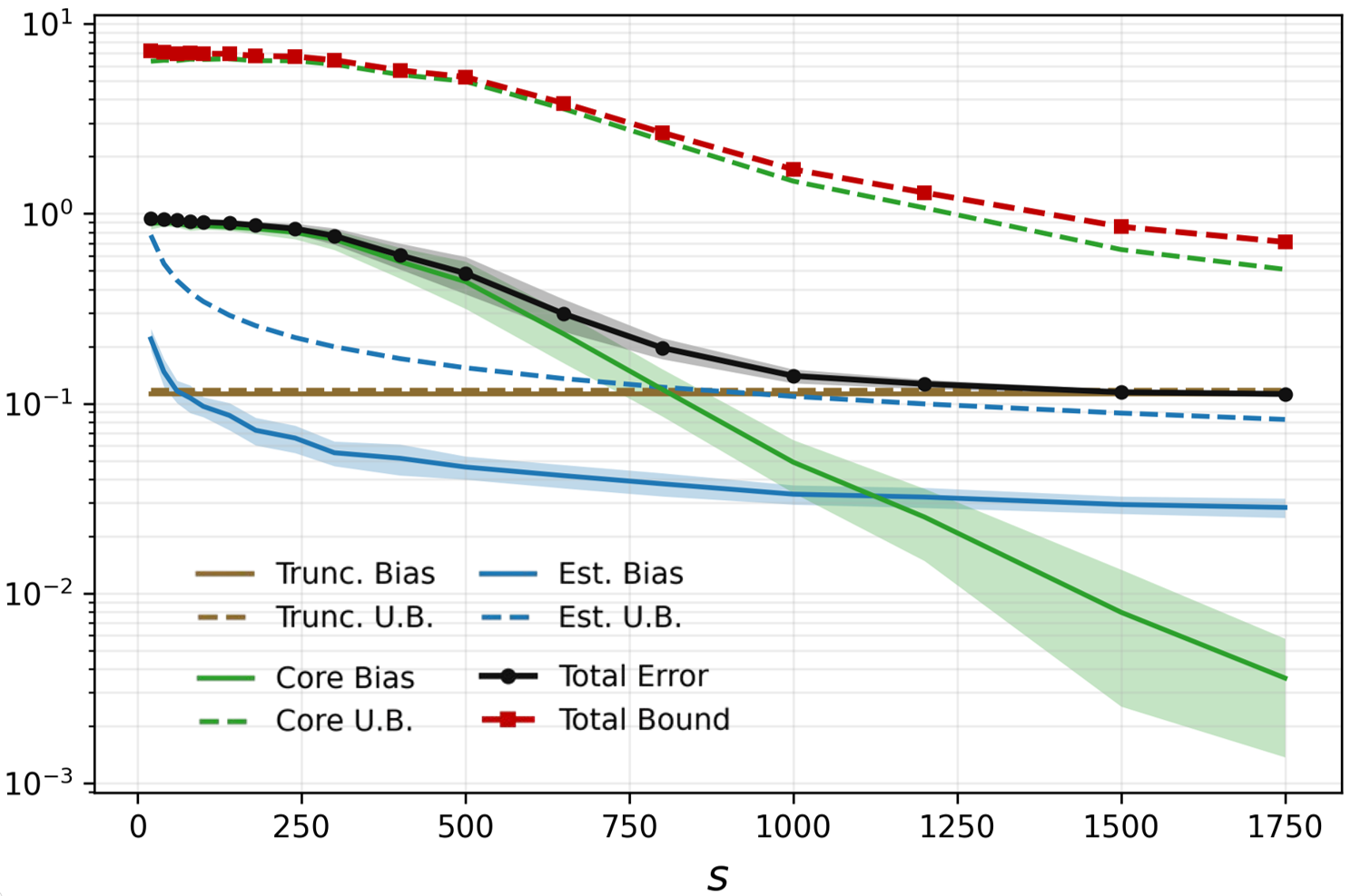}
    \end{minipage}
    \caption{\textit{Left}: visualization on a 2D Gaussian distribution $\mathcal{P}=\mathcal{N}(\mathbf{0}, I)$, showing dense region $\mathcal{Q}$, sampled points $U$ from $\mathcal{Q}$, and core cluster $G$ obtained by KBC (largest connected component under kernel-threshold graph). \textit{Right}: decomposition view of Theorem \ref{theorem: total error} versus $s$ (the sample size from $\mathcal{Q}$, i.e., $|U|$), including the empirical and upper bound curves for truncation bias (Lemma \ref{lemma: tbias}), estimation bias (Lemma \ref{lemma ebias}), core bias (Lemma \ref{lemma: cbias}), together with empirical total error and the Theorem \ref{theorem: total error} bound. Each point is averaged over $30$ repeated samplings, and the band is mean$\pm$1 std.}
    \label{fig:error demo}
\end{figure}


\subsection{Error Decomposition}

The total error $\Delta = \|\KME_\mathcal{P} - \empKME_G\|_{\Hspace}$ is decomposed into three components by introducing the truncated distribution $\mathcal{Q}=\mathcal{P}|\mathcal{S}_{\lambda}$ ($\mathcal{S}_{\lambda} = \{x \in \text{supp}(\mathcal{P}) \mid f_\mathcal{P}(x) \ge \lambda\}$), i.e., the distribution supported on high density region of $\mathcal{P}$ . $\empKME_{U}$ is the empirical estimated KME of the set $U$ of finite points  sampled from $\mathcal{Q}$, which are the candidates of core cluster $G$:
\begin{equation*}
    \|\KME_\mathcal{P} - \empKME_G\|_{\Hspace} \le \underbrace{\|\KME_\mathcal{P} - \KME_\mathcal{Q}\|_{\Hspace}}_{\text{Truncation Bias}} + \underbrace{\|\KME_\mathcal{Q} - \empKME_U\|_{\Hspace}}_{\text{Estimation Bias}} + \underbrace{\|\empKME_U - \empKME_G\|_{\Hspace}}_{\text{Core Bias}}.
\end{equation*}

A demonstration of this decomposition is provided in Figure \ref{fig:error demo}.

\subsection{Truncation Bias}

We separate $\mathcal{P}$ into truncated distribution $\mathcal{Q}$ and tail distribution $\mathcal{T}$. Let $\eta = \mathcal{P}(\mathcal{X} \setminus \mathcal{S}_{\lambda})$ be the probability mass of the tail. We show the relation between the truncation bias (difference in kernel space) and $\mu_\mathcal{T} - \mu_\mathcal{Q}$ (difference in original space) in Lemma \ref{lemma: tbias}.
\begin{lemma}
\label{lemma: tbias}
    When using Gaussian kernel $k_\sigma(x, y)$, the truncation bias satisfies $        \|\mu_\mathcal{P} - \mu_\mathcal{Q}\|_{\mathcal{H}}
        \leq \|\mathbb{E}[\mathcal{Q}]-\mathbb{E}[\mathcal{T}]\|\cdot \frac{\eta}{\sigma} + O\!\left(\frac{1}{\sigma^2}\right),$
    where $\eta = \mathcal{P}(\mathcal{X} \setminus \mathcal{S}_{\lambda})$ is the probability mass of the tail.
\end{lemma}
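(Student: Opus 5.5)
The first step is to write $\mathcal{P}$ as a mixture of its truncated part and its tail. By definition $\mathcal{Q}=\mathcal{P}|\mathcal{S}_\lambda$, and $\mathcal{T}$ is $\mathcal{P}$ conditioned on $\mathcal{X}\setminus\mathcal{S}_\lambda$, so $\mathcal{P}=(1-\eta)\mathcal{Q}+\eta\mathcal{T}$. The mean embedding is linear in the measure, hence
\[
\mu_\mathcal{P}-\mu_\mathcal{Q}=\eta\,(\mu_\mathcal{T}-\mu_\mathcal{Q}),\qquad \|\mu_\mathcal{P}-\mu_\mathcal{Q}\|_{\mathcal{H}}=\eta\,\|\mu_\mathcal{T}-\mu_\mathcal{Q}\|_{\mathcal{H}}.
\]
This exact identity is where the factor $\eta$ comes from. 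What remains is to show $\|\mu_\mathcal{T}-\mu_\mathcal{Q}\|_{\mathcal{H}}\le \|\mathbb{E}[\mathcal{Q}]-\mathbb{E}[\mathcal{T}]\|/\sigma+O(1/\sigma^2)$, i.e.\ that the Gaussian-kernel MMD reduces to a mean difference in the large-bandwidth regime.

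For this we expand the squared MMD through the kernel:
\[
\|\mu_\mathcal{T}-\mu_\mathcal{Q}\|_{\mathcal{H}}^2=\mathbb{E}\,k(T,T')+\mathbb{E}\,k(Q,Q')-2\,\mathbb{E}\,k(T,Q),
\]
with independent copies $T,T'\sim\mathcal{T}$ and $Q,Q'\sim\mathcal{Q}$. We then Taylor-expand $k_\sigma(x,y)=1-\|x-y\|^2/(2\sigma^2)+R$, where $|R|\le \|x-y\|^4/(8\sigma^4)$ (using $|e^{-u}-1+u|\le u^2/2$ for $u\ge 0$). The constant terms cancel. The quadratic terms give
\[
\frac{1}{2\sigma^2}\Bigl(2\,\mathbb{E}\|T-Q\|^2-\mathbb{E}\|T-T'\|^2-\mathbb{E}\|Q-Q'\|^2\Bigr).
\]
Expanding each squared norm, the second moments cancel and this equals $\|\mathbb{E}T-\mathbb{E}Q\|^2/\sigma^2$, the familiar fact that the quadratic-kernel MMD is the mean difference. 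The remainders contribute $O(M_4/\sigma^4)$, where $M_4$ bounds the fourth moments of $\mathcal{P}$; finiteness of $M_4$ is a mild moment assumption I would state explicitly. Applying $\sqrt{a+b}\le\sqrt a+\sqrt b$ then gives $\|\mu_\mathcal{T}-\mu_\mathcal{Q}\|_{\mathcal{H}}\le \|\mathbb{E}[\mathcal{Q}]-\mathbb{E}[\mathcal{T}]\|/\sigma+O(1/\sigma^2)$. Multiplying by $\eta\le 1$ yields the statement; the $O(1/\sigma^2)$ term in fact carries an extra factor $\eta$, which we may drop.

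The main obstacle is controlling the remainder uniformly. The tail $\mathcal{T}$ may have heavy spread, so the $O(\cdot)$ constant depends on the fourth moments of $\mathcal{T}$ and $\mathcal{Q}$, and the expansion is only meaningful when $\sigma$ is large relative to the data diameter. I would first try to assume bounded support, or finite fourth moments of $\mathcal{P}$, and treat the bound as asymptotic in $\sigma\to\infty$, consistent with the $O$-notation in Lemma~\ref{lemma: tbias}. As a fallback, if only second moments are available, the bound $|R|\le \|x-y\|^2/(2\sigma^2)$ is too weak, so a truncation argument on $\|x-y\|$ would be needed.
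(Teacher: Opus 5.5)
Your proposal is correct and follows the paper's proof essentially step for step. Both use the identity $\mu_\mathcal{P}-\mu_\mathcal{Q}=\eta(\mu_\mathcal{T}-\mu_\mathcal{Q})$, expand the squared MMD through the kernel, Taylor-expand $k_\sigma$ to second order so the quadratic terms collapse to $\|\mathbb{E}[\mathcal{Q}]-\mathbb{E}[\mathcal{T}]\|^2/\sigma^2$, and take a square root. The minor differences favour you: you state the fourth-moment assumption and the remainder constant explicitly, which the paper leaves implicit, and you take the square root via $\sqrt{a+b}\le\sqrt a+\sqrt b$ rather than the paper's $|f-g|=|f^2-g^2|/(f+g)$, so you avoid its non-degeneracy condition $\eta\,\|\mathbb{E}[\mathcal{Q}]-\mathbb{E}[\mathcal{T}]\|>0$ and get a one-sided bound valid for every $\sigma>0$, which is all the lemma asserts.
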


Note that the $\|\mu_\mathcal{P} - \mu_\mathcal{Q}\|_{\mathcal{H}} \approx \|\mathbb{E}[\mathcal{Q}]-\mathbb{E}[\mathcal{T}]\|\cdot \frac{\eta}{\sigma}$ only holds when $\sigma$ is very large. In practice, when using Gaussian kernel with a small $\sigma$, there is a significant difference between the truncation bias and the first term of the right side in the equality above. Appendix \ref{appendix:tbias-empirical} confirms this trend: for $\mathcal{P}=\mathcal N(0,I)$, the gap between empirical truncation bias and its leading term is larger at small $\sigma$ and shrinks as $\sigma$ increases. The $O\!\left(\frac{1}{\sigma^2}\right)$ term can be either negative or positive due to hyperparameter setting, as shown in Figure \ref{fig: truncation_sigma_lambda_appendix}.

In Lemma \ref{lemma: tbias}, a smaller tail mass excluded will lead to a more accurate approximation of $\mathcal{Q}$ to $\mathcal{P}$. For regular symmetric distributions such as a 2D Gaussian, this truncation effect can be quite small. For asymmetric or complex-shaped distributions, the truncated region may deviate more substantially due to skewness, heavy tails, or irregular support, and the resulting truncation bias can therefore be larger.

\subsection{ Estimation Bias}

For the truncation distribution $\mathcal{Q}$ and a set of points $U$ sampled from $\mathcal{Q}$ with size $s$, we prove the $O(1/\sqrt{s})$ convergence of the estimation bias in Lemma \ref{lemma ebias} .

\begin{lemma}
[Convergence of the Empirical Embedding of Dense Region]
\label{lemma ebias}
For a sample of $s$ i.i.d. points $U$ from the truncated distribution $\mathcal{Q}$ and a Gaussian kernel with $k(x,x) \le 1$, it holds with probability at least $1-\delta$ that $\|\KME_\mathcal{Q} - \empKME_U\|_{\Hspace} \le {\left(1 + \sqrt{2 \ln(1/\delta)}\right)}/{\sqrt{s}}$.
\end{lemma}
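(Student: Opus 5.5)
The plan is the standard two-step argument for Hilbert-space-valued sample means: first bound the expectation of the deviation, then show concentration around that expectation with McDiarmid's bounded-differences inequality. Write $U=\{u_1,\dots,u_s\}$ with $u_i$ i.i.d.\ from $\mathcal{Q}$. Set $\xi_i=\phiMap(u_i)-\KME_\mathcal{Q}\in\Hspace$, so that $\empKME_U-\KME_\mathcal{Q}=\frac1s\sum_i\xi_i$. We work with the random variable $F(u_1,\dots,u_s)=\|\KME_\mathcal{Q}-\empKME_U\|_{\Hspace}$.

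\textbf{Step 1 (expectation).} By Jensen, $\E F\le(\E F^2)^{1/2}$. Since the $\xi_i$ are independent and mean zero in $\Hspace$, the cross terms $\E\langle\xi_i,\xi_j\rangle$ vanish for $i\neq j$. This gives
\[
\E F^2=\frac{1}{s^2}\sum_{i=1}^s\E\|\xi_i\|_{\Hspace}^2=\frac1s\left(\E k(u,u)-\|\KME_\mathcal{Q}\|_{\Hspace}^2\right)\le\frac1s,
\]
using $k(x,x)\le 1$. Hence $\E F\le 1/\sqrt{s}$. To justify that the embedding exists and that the interchange of expectation and inner product is legitimate, we note that $\phiMap$ is bounded, so the Bochner integral is well defined.

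\textbf{Step 2 (bounded differences).} Replace one sample $u_j$ by $u_j'$ and call the resulting function $F'$. By the reverse triangle inequality,
\[
|F-F'|\le\frac1s\|\phiMap(u_j)-\phiMap(u_j')\|_{\Hspace}.
\]
The squared norm on the right equals $k(u_j,u_j)+k(u_j',u_j')-2k(u_j,u_j')\le 2$, because the Gaussian kernel is nonnegative. So each bounded-difference constant satisfies $c_j\le \sqrt2/s$ (the crude bound $2/s$ also works, at the cost of a constant). McDiarmid then gives
\[
\Pr(F-\E F\ge t)\le\exp\!\left(-\frac{2t^2}{\sum_j c_j^2}\right)=\exp(-s t^2).
\]
Setting this equal to $\delta$ yields $t=\sqrt{\ln(1/\delta)/s}$.

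\textbf{Step 3 (combine).} With probability at least $1-\delta$,
\[
F\le\frac{1+\sqrt{\ln(1/\delta)}}{\sqrt s}\le\frac{1+\sqrt{2\ln(1/\delta)}}{\sqrt s}.
\]
The stated constant is looser than what we obtain. If one instead uses the cruder difference bound $c_j\le 2/s$, the constant becomes $\sqrt{2\ln(1/\delta)}$, exactly matching the statement. The only point needing care is the constant bookkeeping in McDiarmid; everything else is routine. Independence of the $u_i$ is assumed in the statement, so it requires no argument.
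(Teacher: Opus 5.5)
Your proof is correct and takes essentially the paper's route: Jensen's inequality gives $\mathbb{E}F\le 1/\sqrt{s}$, and McDiarmid's inequality controls the deviation from that mean. The only difference is that you use $k\ge 0$ to get the sharper bounded-difference constant $\sqrt{2}/s$, which yields $\sqrt{\ln(1/\delta)}$ and implies the stated bound; the paper uses the cruder constant $2/s$, which gives exactly the stated $\sqrt{2\ln(1/\delta)}$ term.
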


Appendix \ref{appendix:ebias-empirical} supports Lemma \ref{lemma ebias}: with fixed $\delta=0.01$, the empirical estimation bias decreases with $s$ by following a $O(1/\sqrt{s})$. This estimation bias is only related to subsample size $s$, which has a linear ratio of the dataset size $n$. So we can also treat the estimation bias as $O(1/\sqrt{n})$.

\subsection{Core Bias}

The core bias term depends on the quality of the core cluster, a subset of the set of points sampled from $\mathcal{Q}$ with size $s$. Lemma \ref{lemma: cbias} provides a \textbf{method-agnostic} upper bound under a coupling condition. The quality of the core cluster is directly reflected in parameter $r$ \footnote{In the experiments, we use nearest neighbor projection as mapping $\pi$.} and determined by a specific core cluster construction method. Appendix \ref{appendix:cbias-empirical} provides a direct empirical check of this inequality.

\begin{lemma}[A method-agnostic upper bound for core bias]
\label{lemma: cbias}
Let $U=\{q_i\}_{i=1}^s$ be sampled points from $\mathcal{Q}$ and let $G=\{g_j\}_{j=1}^m$ be core points. Assume there exists a mapping $\pi:\{1,\dots,s\}\to\{1,\dots,m\}$ such that $    \|q_i-g_{\pi(i)}\|\le r, \quad \forall i\in\{1,\dots,s\}.$ Define occupancy proportions $p_j=\frac{1}{s}\left|\{i:\pi(i)=j\}\right|,\quad j=1,\dots,m,$ and imbalance term $    B=\sum_{j=1}^m\left|p_j-\frac{1}{m}\right|.$ Then for Gaussian kernel $k_\sigma$, it holds that $    \|\empKME_U-\empKME_G\|_{\mathcal{H}}
    \le \frac{r}{\sigma} + B.$

\end{lemma}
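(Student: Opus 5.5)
Insert the intermediate embedding $\tilde\mu=\frac1s\sum_{i=1}^s\phiMap(g_{\pi(i)})=\sum_{j=1}^m p_j\,\phiMap(g_j)$, which is $\empKME_U$ with each sample point moved to its assigned core point. The triangle inequality then gives
\[
\|\empKME_U-\empKME_G\|_{\Hspace}\le \|\empKME_U-\tilde\mu\|_{\Hspace}+\|\tilde\mu-\empKME_G\|_{\Hspace}.
\]
The first term measures spatial displacement and should produce $r/\sigma$. The second compares two weightings of the same core atoms and should produce $B$.

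\textbf{Displacement term.} By convexity of the norm,
\[
\|\empKME_U-\tilde\mu\|_{\Hspace}\le \frac1s\sum_i\|\phiMap(q_i)-\phiMap(g_{\pi(i)})\|_{\Hspace}.
\]
For the Gaussian kernel,
\[
\|\phiMap(x)-\phiMap(y)\|_{\Hspace}^2=2-2\exp\!\left(-\|x-y\|^2/2\sigma^2\right)\le \|x-y\|^2/\sigma^2,
\]
using $1-e^{-t}\le t$. Hence each summand is at most $\|q_i-g_{\pi(i)}\|/\sigma\le r/\sigma$, and the average is at most $r/\sigma$.

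\textbf{Reweighting term.} We have $\tilde\mu-\empKME_G=\sum_j (p_j-1/m)\,\phiMap(g_j)$. Since $\|\phiMap(g_j)\|_{\Hspace}=\sqrt{k(g_j,g_j)}=1$, the triangle inequality gives a bound of $\sum_j|p_j-1/m|=B$. Adding the two bounds yields $r/\sigma+B$.

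The argument is elementary. The only step needing care is the Lipschitz bound on the Gaussian feature map: the elementary inequality must be applied to the squared RKHS distance before taking square roots. That is what yields the $1/\sigma$ constant rather than a looser one. The reweighting bound $B$ is crude, since the total-variation-type bound ignores cancellations from kernel similarity among core points. This crudeness is consistent with the paper's remark that the core bias bound is conservative, but the statement as given does not require anything sharper.
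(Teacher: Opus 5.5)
Your proposal is correct and matches the paper's proof step for step. It uses the same intermediate embedding $\sum_j p_j\phi(g_j)$ and the same Gaussian Lipschitz bound $\|\phi(x)-\phi(y)\|_{\mathcal{H}}\le\|x-y\|/\sigma$ obtained from $1-e^{-t}\le t$, and it bounds the reweighting term by $B$ with the triangle inequality and $\|\phi(g_j)\|_{\mathcal{H}}=1$, exactly as the paper does.
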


\subsection{Total Error}

With the analysis above, we reach Theorem \ref{theorem: total error}. Figure \ref{fig:error demo} (right) provides an empirical reflection of Theorem \ref{theorem: total error} under $\sigma=0.35$, $\tau=0.95$, $\delta=0.05$, and $\lambda=0.05$. 

\begin{theorem}
\label{theorem: total error}
When using Gaussian kernel $k_\sigma(x, y)$ in CaD clustering, with probability at least $1-\delta$, it holds that
\begin{equation*}
    \|\KME_\mathcal{P} - \empKME_G\|_{\Hspace}\leq  \underbrace{\|\mathbb{E}[\mathcal{Q}]-\mathbb{E}[\mathcal{T}]\|\cdot \frac{\eta}{\sigma} + O\!\left(\frac{1}{\sigma^2}\right)}_{\text{distribution-dependent}} +    \underbrace{\frac{1 + \sqrt{2 \ln(1/\delta)}}{\sqrt{s}}}_{\text{sample-dependent}}+\underbrace{\left(\frac{r}{\sigma}+B\right)}_{\text{method-dependent}}.
\end{equation*}
\end{theorem}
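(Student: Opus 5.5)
The plan is to combine the three-term error decomposition with Lemmas~\ref{lemma: tbias}, \ref{lemma ebias} and \ref{lemma: cbias}, applied one term at a time. The first step is the triangle inequality in $\Hspace$, inserting the intermediate embeddings $\KME_\mathcal{Q}$ and $\empKME_U$:
\begin{equation*}
\|\KME_\mathcal{P} - \empKME_G\|_{\Hspace} \le \|\KME_\mathcal{P} - \KME_\mathcal{Q}\|_{\Hspace} + \|\KME_\mathcal{Q} - \empKME_U\|_{\Hspace} + \|\empKME_U - \empKME_G\|_{\Hspace}.
\end{equation*}
This step is deterministic. It only needs each embedding to be a well-defined element of $\Hspace$. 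That holds because the Gaussian kernel is bounded ($k(x,x)=1$), so Bochner integrability of $\phiMap$ under $\mathcal{P}$ and $\mathcal{Q}$ is immediate.

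Next I bound each term separately.

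\emph{Truncation term.} Lemma~\ref{lemma: tbias} gives $\|\mathbb{E}[\mathcal{Q}]-\mathbb{E}[\mathcal{T}]\|\,\eta/\sigma + O(1/\sigma^2)$. The bound holds deterministically, since it concerns only population quantities.

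\emph{Estimation term.} Lemma~\ref{lemma ebias} applies to the i.i.d. sample $U$ of size $s$ from $\mathcal{Q}$. It gives $(1+\sqrt{2\ln(1/\delta)})/\sqrt{s}$ on an event $E$ with $\mathbb{P}(E)\ge 1-\delta$.

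\emph{Core term.} Lemma~\ref{lemma: cbias} holds whenever the coupling $\pi$ with radius $r$ exists, and gives $r/\sigma + B$. Here $r$ and $B$ are defined relative to the realized $U$ and $G$, so the bound is again deterministic given the coupling.

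The final step is to intersect events. Only the estimation bound is probabilistic, so all three bounds hold simultaneously on $E$. Summing them yields the claimed inequality with probability at least $1-\delta$, with no union bound needed.

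The main obstacle is making the probabilistic statement precise. $G$ is constructed from $U$, so it is data-dependent, and $r$ and $B$ are random quantities. The clean resolution is to treat the core bound as holding pointwise for every realization: Lemma~\ref{lemma: cbias} is method-agnostic and needs no independence. The theorem then reads as: on $E$, the inequality holds with $r,B$ evaluated at the realized core.

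A secondary subtlety is the $O(1/\sigma^2)$ remainder from Lemma~\ref{lemma: tbias}. It is not necessarily nonnegative, so I carry it through as a signed asymptotic term exactly as stated in that lemma, without trying to bound it in absolute value.
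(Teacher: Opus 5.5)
Your proposal is correct and is essentially the paper's proof: insert $\KME_\mathcal{Q}$ and $\empKME_U$ via the triangle inequality, bound the three terms by Lemmas~\ref{lemma: tbias}, \ref{lemma ebias} and \ref{lemma: cbias}, and sum. Your extra observations make explicit what the paper's short argument leaves implicit: only the estimation bound is random, so no union bound is needed, and the core bound holds pointwise for the realized, data-dependent $G$.
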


The bandwidth $\sigma$ plays a central role in the total error bound because it controls the distance sensitivity of the Gaussian kernel $k_\sigma(x,y)=\exp(-\|x-y\|^2/(2\sigma^2))$. When $\sigma$ is small, the kernel value decays rapidly as the distance increases, making the embedding more sensitive to tail discrepancy and mismatch between sampled points and selected cores. When $\sigma$ is larger, the kernel varies more smoothly with distance, so these discrepancies induce smaller perturbations in RKHS, which is reflected in the $1/\sigma$ dependence of the truncation bias and core bias. Hence, $\sigma$ determines a tradeoff between preserving fine local structure and improving approximation robustness.

\noindent The error analysis in Theorem \ref{theorem: total error} establishes that the reliability of CaD clustering depends on three factors: distribution truncation quality, sample size ($s$) of dense region, and core cluster selection quality. 

\begin{myitems}
    \item \textbf{Distribution-dependent Truncation Bias.} The analysis shows that the truncation bias depends on how well the dense region $\mathcal{Q}$ approximates the underlying distribution $\mathcal{P}$. This term is distribution-dependent: for a symmetric distribution like a 2d Gaussian distribution in $\mathbb{R}^2$, this term can be very small. For asymmetric or complex-shaped distributions, the truncation bias can be larger due to heavier tails or irregular support.
    \item \textbf{Sample-dependent Estimation Bias.} The parameter $s$ represents the number of sampled points drawn from the dense region $\mathcal{Q}$ to estimate $\mu_\mathcal{Q}$. In practice, the $O(1/\sqrt{s})$ convergence rate means that increasing $s$ reduces statistical estimation noise. 
    \item \textbf{Method-dependent Core Bias.} The core bias contribution quantifies how well core cluster represents sampled points from the dense region $\mathcal{Q}$. By Lemma \ref{lemma: cbias}, this term is upper bounded by $r/\sigma + B$, where $r$ is the geometric approximation radius and $B$ is the sample-to-core imbalance, which are both determined by the core cluster construction method.
\end{myitems}

For a practitioner, this provides a mathematical guarantee that a one-pass assignment is reliable provided core clusters are representative of the underlying distribution and the number of points in dataset is large enough.

\textbf{Conservativeness of the Total Error Bound Due to Core Bias.} The looseness of the total bound in Figure \ref{fig:error demo} is mainly caused by the core bias term, whose upper bound decreases more slowly than the empirical estimate. This is expected because the analysis uses a method-agnostic upper bound, which is intended to apply across different core construction procedures and is therefore conservative for any particular method. Nevertheless, Figure \ref{fig: core_bias_tau_appendix} in Appendix shows that the bound can become much tighter under certain hyperparameter settings.

\textbf{Remark on estimation and core bias.} Although the estimation bias decreases with sample size $s$, the overall approximation quality may still be bad when the initial core is not representative, since a small sample size could cause the quality of core clusters to deteriorate. For KBC, a specific CaD method, Appendix \ref{app:kbc cbias} gives a complementary interpretation of the core selection term through density level sets. In the Gaussian kernel case, the KBC threshold $\tau$ induces a graph radius $\varepsilon_\tau=\sigma\sqrt{2\log(1/\tau)}$. The KBC core bias can be controlled by the boundary band mass plus a finite sample deviation. This result is conditional on the level set connectivity assumption, so it should be viewed as a KBC-specific certificate rather than an unconditional replacement for Lemma \ref{lemma: cbias}.



\section{Modeling of the Assignment via Distributional Kernel as Matroid }
The `cluster-as-distribution' assumption has enabled the one-pass assignment via greedy search---such that proving it via matroid is simple---to produce the optimal clustering outcome. Once each core is represented as a distributional proxy, each point-to-cluster weight is independent of other assignments, yielding an additive objective that decomposes over points and fits a partition matroid.
\textit{The partition matroid 
simply formalizes the exact optimality of the fixed-core one-pass assignment and separates it from the core-cluster construction problem.}

\subsection{Matroid Axioms for CaD clustering}
\textbf{Ground Set.} Let $D = \{x_1, x_2, \dots, x_n\}$ be the dataset and $\mathcal{K} = \{1, 2, \dots, k\}$ be the set of cluster indices. We define the ground set $\mathcal{S}_{\mathrm{g}}$ as the set of all possible point-to-cluster assignments:
$\mathcal{S}_{\mathrm{g}} = D \times \mathcal{K} = \{ (x_i, j) \mid 1 \le i \le n, 1 \le j \le k \}$.

\textbf{Partition Matroid.} To ensure each point $x_i$ is assigned to at most one cluster, we partition $\mathcal{S}_{\mathrm{g}}$ into $n$ disjoint blocks $S_1, S_2, \dots, S_n$, where:
$S_i = \{ (x_i, 1), (x_i, 2), \dots, (x_i, k) \}$.
We define the family of independent sets $\mathcal{I}$ as $\mathcal{I} = \{ A \subseteq \mathcal{S}_{\mathrm{g}} \mid |A \cap S_i| \le 1, \forall i \in \{1, \dots, n\} \}$.

\begin{proposition}
\label{prop:matroid_structure}
The pair $(\mathcal{S}_{\mathrm{g}}, \mathcal{I})$ as defined for the CaD clustering is a matroid.
\end{proposition}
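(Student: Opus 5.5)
We verify the three axioms of the matroid definition directly for $(\mathcal{S}_{\mathrm{g}}, \mathcal{I})$, where $\mathcal{S}_{\mathrm{g}} = D \times \mathcal{K}$ is finite because $|D| = n$ and $|\mathcal{K}| = k$. The key fact used throughout is that the blocks $S_1,\dots,S_n$ are pairwise disjoint and cover $\mathcal{S}_{\mathrm{g}}$, since every element $(x_i,j)$ lies in exactly one block, namely $S_i$. Consequently, for any $A \in \mathcal{I}$ we have $|A| = \sum_{i=1}^n |A\cap S_i|$, which equals the number of blocks that $A$ meets.

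\textbf{Non-emptiness and heredity.} For $A=\emptyset$ we have $|\emptyset\cap S_i| = 0 \le 1$ for every $i$, so $\emptyset \in \mathcal{I}$. For heredity, let $A\in\mathcal{I}$ and $B\subseteq A$. Then $B\cap S_i \subseteq A\cap S_i$, so $|B\cap S_i|\le |A\cap S_i|\le 1$ for all $i$, and hence $B\in\mathcal{I}$.

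\textbf{Exchange property.} This is the only step needing an argument. Let $A,B\in\mathcal{I}$ with $|A|<|B|$, and define $N(A)=\{i : A\cap S_i\neq\emptyset\}$. By the counting identity above, $|N(A)| = |A| < |B| = |N(B)|$, so by pigeonhole there is an index $i^\ast \in N(B)\setminus N(A)$. Let $e$ be the unique element of $B\cap S_{i^\ast}$.
\begin{itemize}
\item Since $A\cap S_{i^\ast}=\emptyset$, we get $e\notin A$, so $e\in B\setminus A$.
\item Adding $e$ leaves every block $S_i$ with $i\neq i^\ast$ unchanged, because $e\notin S_i$ by disjointness. It raises $|A\cap S_{i^\ast}|$ from $0$ to $1$.
\end{itemize}
Therefore $A\cup\{e\}\in\mathcal{I}$.

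The only point requiring care is the identity $|A| = |N(A)|$ for independent sets, which relies on the blocks being disjoint; the rest is routine. With all three axioms verified, $(\mathcal{S}_{\mathrm{g}},\mathcal{I})$ is a matroid. It is a partition matroid with capacity $1$ on each block, which sets up the later application of Theorem~\ref{theorem: Rado-Edmonds Theorem}.
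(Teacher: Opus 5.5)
Your proof is correct and follows the paper's own argument: the same direct check of the three axioms, with exchange handled by the index set of covered blocks, the identity $|A| = |N(A)|$ for independent sets, and a pigeonhole choice of a block hit by $B$ but not by $A$. You also state explicitly that the chosen $e$ lies in $B \setminus A$, which the paper's version leaves implicit.
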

The proof is provided in Appendix \ref{appendix:proof-matroid-structure}. The rationale behind $|A\cap S_i|\leq 1$ is given in Appendix \ref{appendix: mr}. 




\subsection{Optimality of the One-Pass Assignment}
\label{sect: opt assignment}
To start the analysis on the assignment, we first define a weight function to link matroid and the objective of CaD clustering.   

\textbf{Weight Definition.}
Given the pre-identified cluster cores $G_1, \dots, G_k$ with empirical embeddings $\hat\mu_{G_j}$, the weight of an assignment $e = (x_i, j)$ is defined as the RKHS inner product score:
$w(x_i, j) = \langle \phiMap(x_i), \hat\mu_{G_j} \rangle_{\Hspace}$.
The objective of the CaD clustering during the assignment phase is to maximize $W(A) = \sum_{e \in A} w(e)$ for a set $A$ that covers all points, i.e., $|A|=n$.

The CaD clustering algorithm performs the following greedy step for each point $x_i \in D$:
$j^* = \arg\max_{j \in \mathcal{K}} \langle \phiMap(x_i), \hat\mu_{G_j} \rangle_{\Hspace}$. The one-pass assignment can be viewed as a greedy procedure on the partition matroid $(S_{\mathrm{g}}, \mathcal{I})$. Since the objective is additive over assignment elements, maximizing the total weight reduces to selecting, for each point $x_i$, the assignment $(x_i,j)$ with the largest weight $w(x_i,j)$. Therefore, the one-pass rule coincides with a greedy selection on the partition matroid. Finally, given fixed core clusters, the optimality is guaranteed by Theorem \ref{theorem: Rado-Edmonds Theorem}, as shown in Theorem \ref{theorem: fixed_core_opt}.

\begin{theorem}[Optimality for fixed-core assignment
]
\label{theorem: fixed_core_opt}
Given some fixed core clusters $G_1,\dots,G_k$, and define additive weights $w(x_i,j)=\langle \phiMap(x_i),\hat\mu_{G_j}\rangle_{\Hspace}$ on the partition matroid $(\mathcal{S}_{\mathrm{g}},\mathcal{I})$. The one-pass assignment that selects
$j_i^*\in \arg\max_{j\in\mathcal K} w(x_i,j)$ for all $i$, produces a maximum-weight basis and is optimal for the assignment objective $\max_{A\in\mathcal I} \sum_{e\in A} w(e)$.
\end{theorem}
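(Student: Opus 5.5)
I will prove the statement directly from the block structure of the partition matroid, using Proposition~\ref{prop:matroid_structure} and Theorem~\ref{theorem: Rado-Edmonds Theorem} only as a cross-check. The point is that the objective splits over blocks. Every $A\in\mathcal I$ meets each block $S_i$ in at most one element, so $W(A)=\sum_{i=1}^n W(A\cap S_i)$, and the $i$-th summand is either $0$ or $w(x_i,j)$ for a single $j$.

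\textbf{Step 1 (nonnegativity).} The Gaussian kernel is strictly positive, so
\[
w(x_i,j)=\langle\phiMap(x_i),\hat\mu_{G_j}\rangle_{\Hspace}=\frac{1}{|G_j|}\sum_{g\in G_j}k(x_i,g)>0
\]
for nonempty cores. The weight function is therefore nonnegative, as Theorem~\ref{theorem: Rado-Edmonds Theorem} requires. Strict positivity also means that adding an element to any independent set increases $W$, so some maximizer of $W$ over $\mathcal I$ is a basis, that is, it has $|A\cap S_i|=1$ for every $i$.

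\textbf{Step 2 (blockwise bound).} For any $A\in\mathcal I$ and each $i$,
\[
W(A\cap S_i)\le \max_{j\in\mathcal K} w(x_i,j)=w(x_i,j_i^*).
\]
Summing over $i$ gives $W(A)\le\sum_i w(x_i,j_i^*)=W(A^*)$, where $A^*=\{(x_i,j_i^*)\}_{i=1}^n$. The set $A^*$ meets each block exactly once, so $A^*\in\mathcal I$ and $|A^*|=n$, which is the rank of the matroid. Hence $A^*$ is a maximum-weight basis. This also settles the restricted objective with $|A|=n$.

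\textbf{Step 3 (agreement with the greedy algorithm).} I also need that $A^*$ is what the Rado--Edmonds greedy procedure returns. Sort all elements of $\mathcal S_{\mathrm g}$ by decreasing weight. Greedy accepts an element $(x_i,j)$ exactly when block $S_i$ is still empty at that moment. So the first element of $S_i$ in the sorted order is accepted, and that element is a maximizer of $w(x_i,\cdot)$. The point-wise order of the one-pass scan is therefore irrelevant, and the one-pass rule coincides with a run of greedy.

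\textbf{Main obstacle: ties.} When several $j$ attain the maximum in some block, the greedy output depends on the tie-breaking order. I will handle this by noting that every choice of $j_i^*$ from the $\arg\max$ gives the same value $W(A^*)$. Step 2 already shows optimality for an arbitrary selection from the $\arg\max$, so the conclusion holds without assuming distinct weights.
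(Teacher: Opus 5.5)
Your proof is correct, but it takes a different route from the paper. The paper's proof is a single appeal to the Rado--Edmonds theorem. It asserts that the one-pass rule is a run of the matroid greedy algorithm and notes that feasible assignments with $|A|=n$ are exactly the bases. It then concludes that greedy returns a maximum-weight basis.

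You instead prove optimality directly from the separability of the objective across blocks. Since every $A\in\mathcal I$ meets each $S_i$ at most once, $W(A)=\sum_i W(A\cap S_i)\le\sum_i w(x_i,j_i^*)$, so Rado--Edmonds is only a cross-check. Your route is self-contained and explicitly checks several points the paper leaves implicit:
\begin{itemize}
\item the nonnegativity hypothesis in the paper's version of Rado--Edmonds, which your Step 1 gets from positivity of the Gaussian kernel;
\item that the point-by-point scan really is a run of the sorted greedy algorithm, which is your Step 3;
\item optimality over all of $\mathcal I$ rather than only over bases, since the statement asks for the former while the paper's proof only addresses the latter;
\item the tie case.
\end{itemize}

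For the basis-restricted objective your blockwise bound needs no sign condition at all, so it would also cover a kernel with signed values. Nonnegativity is used only when $A\cap S_i=\emptyset$, to compare the zero summand with $w(x_i,j_i^*)$; say this explicitly in Step 2, since as written the dependence on Step 1 is silent.

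The paper's route buys the framing: it places the assignment step inside matroid theory, which is the paper's narrative point. Your argument shows that the optimality is really just blockwise separability and does not need the matroid machinery.
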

The proof is provided in Appendix \ref{appendix:proof-fixed-core-opt}. 

\textbf{The above optimality statement is conditional on fixed core clusters.
It does not claim global optimality for the full clustering procedure, since
the quality of the final result depends critically on how well the
core clusters approximate the cluster distributions.}


\section{From KME Approximation Error to CaD Clustering Objective}
Theorem \ref{theorem: fixed_core_opt} establishes the optimality of the objective induced by fixed-core embeddings. We now connect this result to the CaD clustering objective defined by the true distributions.

For cluster $j$, let $\mu_j=\KME_{\mathcal{P}_j}$ be the distribution embedding and $\hat\mu_j=\empKME_{G_j}$ be its empirical proxy. Define $w^\star(x_i,j)=\langle \phiMap(x_i), \mu_j \rangle_{\Hspace}$ and $\hat w(x_i,j)=\langle \phiMap(x_i), \hat\mu_j \rangle_{\Hspace}$. For any feasible assignment set $A\in\mathcal I$ with $|A|=n$, define $W^\star(A)=\sum_{(x_i,j)\in A} w^\star(x_i,j)$ and $\hat W(A)=\sum_{(x_i,j)\in A} \hat w(x_i,j)$.

\begin{proposition}[Near-optimality for the assignment objective]
\label{prop:near_opt_population_obj}
Assume $\|\phiMap(x)\|_{\Hspace}\le 1$ for all $x$, and define $\varepsilon=\max_{j\in\mathcal K}\|\mu_j-\hat\mu_j\|_{\Hspace}$. Let $A^\star\in\arg\max_{A\in\mathcal I,\,|A|=n} W^\star(A)$ and $\hat A\in\arg\max_{A\in\mathcal I,\,|A|=n} \hat W(A)$.
Then $W^\star(A^\star)-W^\star(\hat A)\le 2n\varepsilon.$

\end{proposition}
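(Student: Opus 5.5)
The plan is the standard ``uniform perturbation plus optimality sandwich'' argument. First, I will show that the true and proxy weights are uniformly close. For every ground-set element $(x_i,j)$, Cauchy--Schwarz in $\Hspace$ together with the assumption $\|\phiMap(x_i)\|_{\Hspace}\le 1$ gives
\[
|w^\star(x_i,j)-\hat w(x_i,j)| = |\langle \phiMap(x_i),\mu_j-\hat\mu_j\rangle_{\Hspace}| \le \|\phiMap(x_i)\|_{\Hspace}\,\|\mu_j-\hat\mu_j\|_{\Hspace}\le \varepsilon .
\]
Since every feasible $A\in\mathcal I$ with $|A|=n$ contains exactly one element from each block $S_i$, it has exactly $n$ elements. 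Summing the per-element bound therefore gives $|W^\star(A)-\hat W(A)|\le n\varepsilon$ uniformly over all such $A$.

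Next comes the three-term decomposition
\[
W^\star(A^\star)-W^\star(\hat A)=\bigl[W^\star(A^\star)-\hat W(A^\star)\bigr]+\bigl[\hat W(A^\star)-\hat W(\hat A)\bigr]+\bigl[\hat W(\hat A)-W^\star(\hat A)\bigr].
\]
The first and third brackets are each at most $n\varepsilon$ by the uniform bound. The middle bracket is $\le 0$ because $\hat A$ maximizes $\hat W$ over the same feasible family, and $A^\star$ belongs to that family. Adding the three gives $2n\varepsilon$.

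To connect this to the algorithm, I would note that by Theorem~\ref{theorem: fixed_core_opt} the one-pass greedy assignment is a valid choice of $\hat A$. The bound therefore applies directly to the output of CaD clustering. One can also combine it with Theorem~\ref{theorem: total error} to bound $\varepsilon$ with high probability.

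There is no real obstacle here. The only point needing care is that all compared sets have cardinality exactly $n$, so the $n\varepsilon$ factor is uniform. This is guaranteed by the constraint $|A|=n$ together with the partition structure.
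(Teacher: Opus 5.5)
Your proposal is correct and follows the paper's proof essentially step for step. Like the paper, you use Cauchy--Schwarz for the per-element bound $|w^\star(x_i,j)-\hat w(x_i,j)|\le\varepsilon$, sum it to get $|W^\star(A)-\hat W(A)|\le n\varepsilon$ on sets with $|A|=n$, and use the same three-term decomposition, whose middle term is non-positive because $\hat A$ maximizes $\hat W$.
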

The proof is provided in Appendix \ref{appendix:proof-near-opt-population}. 

The following links the KME error in Theorem \ref{theorem: total error} with Proposition \ref{prop:near_opt_population_obj} and show how the KME error upper bounds the objective in Proposition \ref{cor:regret_decomposed_terms} with the proof provided in Appendix \ref{appendix:proof-regret-decomposed}.
\begin{proposition}[Relation with KME error]
\label{cor:regret_decomposed_terms}
Suppose each cluster $j\in\mathcal K$ satisfies $\|\mu_j-\hat\mu_j\|_{\Hspace}\le b_j$,
where $b_j :=
\|\mathbb E[\mathcal{Q}_j]-\mathbb E[\mathcal{T}_j]\|\cdot\frac{\eta_j}{\sigma}
+ O\!\left(\frac{1}{\sigma^2}\right)
+ \frac{1+\sqrt{2\ln(1/\delta)}}{\sqrt{s_j}}
+ \frac{r_j}{\sigma}
+ B_j.$
Then the one-pass assignment $\hat A$ satisfies $W^\star(A^\star)-W^\star(\hat A)
\le 2n\max_{j\in\mathcal K} b_j.$
\end{proposition}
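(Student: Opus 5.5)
The plan is to combine Proposition~\ref{prop:near_opt_population_obj} with the per-cluster bounds $b_j$, which themselves come from Theorem~\ref{theorem: total error}. There is one preliminary identification to make. By Theorem~\ref{theorem: fixed_core_opt}, the one-pass assignment maximizes $\hat W$ over $\mathcal I$. The weights $\hat w$ are Gaussian-kernel averages and hence nonnegative, so a maximizer can be taken to be a basis ($|A|=n$). Therefore the one-pass assignment is a valid choice of $\hat A\in\arg\max_{A\in\mathcal I,|A|=n}\hat W(A)$. This ensures Proposition~\ref{prop:near_opt_population_obj} applies to the specific $\hat A$ produced by the algorithm, and not just to some abstract maximizer.

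\textbf{Regret inequality.} Next I would apply Proposition~\ref{prop:near_opt_population_obj} directly. Its hypothesis $\|\phiMap(x)\|_{\Hspace}\le 1$ holds because $\|\phiMap(x)\|^2_{\Hspace}=k(x,x)=1$ for the Gaussian kernel. This gives $W^\star(A^\star)-W^\star(\hat A)\le 2n\varepsilon$ with $\varepsilon=\max_j\|\mu_j-\hat\mu_j\|_{\Hspace}$.

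For completeness, I would recall why that bound holds: $W^\star(A^\star)-W^\star(\hat A)$ splits into three pieces,
\[
\bigl[W^\star(A^\star)-\hat W(A^\star)\bigr]+\bigl[\hat W(A^\star)-\hat W(\hat A)\bigr]+\bigl[\hat W(\hat A)-W^\star(\hat A)\bigr].
\]
The middle bracket is $\le 0$ by optimality of $\hat A$. Each outer bracket is a sum of $n$ terms $\langle\phiMap(x_i),\mu_j-\hat\mu_j\rangle_{\Hspace}$, and each such term is bounded by $\varepsilon$ via Cauchy--Schwarz.

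\textbf{Monotone substitution.} The last step is to use the hypothesis $\|\mu_j-\hat\mu_j\|_{\Hspace}\le b_j$ for each $j$. Taking the maximum over the finite set $\mathcal K$ gives $\varepsilon\le\max_j b_j$. Since $2n\varepsilon$ is monotone in $\varepsilon$, the claimed inequality follows.

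\textbf{Main obstacle.} The main subtlety is interpretive rather than technical. Each $b_j$ is justified by Theorem~\ref{theorem: total error} only with probability $1-\delta$ per cluster, and it contains an unspecified $O(1/\sigma^2)$ term. As stated, though, the proposition takes the bounds $\|\mu_j-\hat\mu_j\|\le b_j$ as a hypothesis, so the argument above is deterministic. I would add a remark that, to instantiate the hypothesis from Theorem~\ref{theorem: total error}, one applies a union bound over the $k$ clusters. Using $\delta/k$ in each $b_j$ then makes the conclusion hold with probability at least $1-\delta$.
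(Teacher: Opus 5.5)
Your proposal is correct and follows the paper's proof: since each $\|\mu_j-\hat\mu_j\|_{\Hspace}\le b_j$, you get $\varepsilon\le\max_j b_j$, and substituting into Proposition~\ref{prop:near_opt_population_obj} gives the bound. Your extra remarks are sound refinements that the paper leaves implicit: identifying the one-pass output with a maximizer $\hat A$ via nonnegativity of the weights, and using a union bound with $\delta/k$ per cluster to instantiate the hypothesis from Theorem~\ref{theorem: total error}.
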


\section{Label Recovery}

In Theorem \ref{thm:exact_label_recovery}, we formalize the `optimal clustering'---all  labels assigned by the distribution kernel equal to the ground truth: label recovery is guaranteed when the embedding error is smaller than half of the distribution score margin.

\begin{theorem}[Label recovery under error]
\label{thm:exact_label_recovery}
Let $y_i^\star \in \arg\max_{j\in\mathcal K} w^\star(x_i,j)$ and $w^\star(x_i,j)=\langle\phiMap(x_i),\mu_j\rangle_{\Hspace}$,
and assume the distribution top-two margin is strictly positive, i.e., $\gamma_\star := \min_{1\le i\le n}\Big(w^\star(x_i,y_i^\star)-\max_{j\neq y_i^\star}w^\star(x_i,j)\Big) > 0.$

If $\varepsilon:=\max_{j\in\mathcal K}\|\mu_j-\hat\mu_j\|_{\Hspace} < \frac{\gamma_\star}{2}$,
then the one-pass proxy assignment
$\hat y_i\in\arg\max_{j\in\mathcal K}\hat w(x_i,j)$ with $\hat w(x_i,j)=\langle\phiMap(x_i),\hat\mu_j\rangle_{\Hspace}$,
recovers all labels exactly, i.e., $\hat y_i=y_i^\star$ for all $i$.
\end{theorem}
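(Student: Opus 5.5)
The plan is a pointwise perturbation argument using Cauchy--Schwarz in $\Hspace$, followed by a margin comparison, applied separately to each point $x_i$. Since the one-pass rule decides each point independently (the weights are additive over the partition blocks $S_i$, as in Theorem~\ref{theorem: fixed_core_opt}), it suffices to show $\hat y_i = y_i^\star$ for a fixed but arbitrary $i$.

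\textbf{Step 1: uniform closeness of the weights.} For every $j\in\mathcal K$,
\[
|\hat w(x_i,j)-w^\star(x_i,j)| = |\langle \phiMap(x_i), \hat\mu_j-\mu_j\rangle_{\Hspace}| \le \|\phiMap(x_i)\|_{\Hspace}\,\|\hat\mu_j-\mu_j\|_{\Hspace} \le \varepsilon ,
\]
using $\|\phiMap(x)\|_{\Hspace}^2=k(x,x)=1$ for the Gaussian kernel. This is the same bound that drives Proposition~\ref{prop:near_opt_population_obj}; it is the only place the kernel enters.

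\textbf{Step 2: the margin survives perturbation.} For any $j\neq y_i^\star$, Step 1 and the definition of $\gamma_\star$ give
\[
\hat w(x_i,y_i^\star)-\hat w(x_i,j) \ge w^\star(x_i,y_i^\star)-w^\star(x_i,j)-2\varepsilon \ge \gamma_\star-2\varepsilon > 0 .
\]
Hence $y_i^\star$ is the unique maximizer of $\hat w(x_i,\cdot)$, so any $\hat y_i\in\arg\max_j \hat w(x_i,j)$ equals $y_i^\star$. Tie-breaking is not an issue because the inequality is strict. Note also that $\gamma_\star>0$ makes $y_i^\star$ itself well defined as the unique population argmax.

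\textbf{Step 3: all points.} Since $i$ was arbitrary and $\varepsilon$ is a maximum over clusters that does not depend on $i$, the conclusion holds for all $i$ at once. There is no real obstacle here. The only point needing care is making explicit the normalization $\|\phiMap(x)\|_{\Hspace}\le 1$, which is the assumption of Proposition~\ref{prop:near_opt_population_obj} and holds for the Gaussian kernel. As a remark, one can combine this with Theorem~\ref{theorem: total error} through Proposition~\ref{cor:regret_decomposed_terms}: exact recovery holds with probability at least $1-\delta$ whenever $\max_j b_j<\gamma_\star/2$.
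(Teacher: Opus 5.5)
Your proof is correct and follows essentially the same route as the paper: Cauchy--Schwarz with $\|\phi(x_i)\|_{\mathcal{H}}\le 1$ gives $|\hat w(x_i,j)-w^\star(x_i,j)|\le\varepsilon$ for every $j$, and the margin comparison $\hat w(x_i,y_i^\star)-\hat w(x_i,j)\ge\gamma_\star-2\varepsilon>0$ makes $y_i^\star$ the unique maximizer of $\hat w(x_i,\cdot)$ for each $i$. One small correction concerns only your closing side remark: Theorem~\ref{theorem: total error} holds with probability $1-\delta$ for a single cluster, so having all $k$ bounds $b_j$ hold simultaneously needs a union bound (probability at least $1-k\delta$, or confidence $\delta/k$ per cluster), which is why Propositions~\ref{cor:regret_decomposed_terms} and~\ref{prop:exact_recovery_via_decomposition} simply assume $\|\mu_j-\hat\mu_j\|_{\mathcal{H}}\le b_j$ for all $j$.
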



\begin{proposition}[Recovery condition via decomposition terms]
\label{prop:exact_recovery_via_decomposition}
Under the assumptions of Theorem \ref{thm:exact_label_recovery}, if
$\max_{j\in\mathcal K} b_j < \frac{\gamma_\star}{2}$,
where $b_j$ is defined in Proposition \ref{cor:regret_decomposed_terms}, then the label recovery holds for all points.
\end{proposition}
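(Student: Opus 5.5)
The plan is to reduce Proposition~\ref{prop:exact_recovery_via_decomposition} to Theorem~\ref{thm:exact_label_recovery} by showing that the hypothesis $\max_j b_j<\gamma_\star/2$ forces $\varepsilon<\gamma_\star/2$. Everything else is then inherited from the theorem, since its standing assumptions ($\gamma_\star>0$, $\|\phiMap(x)\|_{\Hspace}\le 1$) are assumed here.

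First, for each cluster $j$, I will invoke Theorem~\ref{theorem: total error} with $\mathcal{P}=\mathcal{P}_j$, $G=G_j$, and the cluster-specific quantities $\eta_j,\mathcal{Q}_j,\mathcal{T}_j,s_j,r_j,B_j$. This gives $\|\mu_j-\hat\mu_j\|_{\Hspace}\le b_j$, which is exactly the premise of Proposition~\ref{cor:regret_decomposed_terms}.

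Here the one subtlety is probabilistic. Theorem~\ref{theorem: total error} holds with probability $1-\delta$ per cluster, so a simultaneous statement over all $k$ clusters needs a union bound. The natural way to do this is to run each per-cluster bound at level $\delta/k$. Alternatively, I will read $\|\mu_j-\hat\mu_j\|\le b_j$ for all $j$ as a standing hypothesis, as Proposition~\ref{cor:regret_decomposed_terms} does, and note that the conclusion then holds on that event.

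On this event, take the maximum over $j$:
\[
\varepsilon=\max_{j\in\mathcal K}\|\mu_j-\hat\mu_j\|_{\Hspace}\le \max_{j\in\mathcal K} b_j<\frac{\gamma_\star}{2}.
\]
Theorem~\ref{thm:exact_label_recovery} then applies and yields $\hat y_i=y_i^\star$ for all $i$.

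For completeness, I would recall why the theorem works. By Cauchy--Schwarz, $|\hat w(x_i,j)-w^\star(x_i,j)|\le\|\phiMap(x_i)\|\,\|\hat\mu_j-\mu_j\|\le\varepsilon$. Hence, for every $j\ne y_i^\star$,
\[
\hat w(x_i,y_i^\star)-\hat w(x_i,j)\ge \gamma_\star-2\varepsilon>0,
\]
so the argmax of $\hat w(x_i,\cdot)$ is unique and equals $y_i^\star$.

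The main obstacle I expect is not the algebra but the status of the $O(1/\sigma^2)$ term in $b_j$. As an asymptotic term it does not define a concrete number, so "$\max_j b_j<\gamma_\star/2$" is only meaningful once that term is interpreted as an explicit remainder bound. That remainder bound should be the one produced in the proof of Lemma~\ref{lemma: tbias}, which may be signed. I will state that $b_j$ is understood with this explicit remainder and that the bound $\|\mu_j-\hat\mu_j\|\le b_j$ holds with it.
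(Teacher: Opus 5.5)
Your proposal is correct and follows the paper's proof. The paper likewise takes $\|\mu_j-\hat\mu_j\|_{\Hspace}\le b_j$ for every $j$ from Proposition~\ref{cor:regret_decomposed_terms}, effectively as a standing hypothesis (your second reading), concludes $\varepsilon\le\max_j b_j<\gamma_\star/2$, and applies Theorem~\ref{thm:exact_label_recovery}; your remarks on the union bound over the $k$ clusters (which would turn $\delta$ into $\delta/k$ inside $b_j$) and on reading the $O(1/\sigma^2)$ term as an explicit remainder are extra care the paper leaves implicit, not changes to the argument.
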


\noindent This shows that increasing dense region sample size $s_j$, improving core quality (smaller $r_j,B_j$); and reducing the truncation error tightens $b_j$, making the recovery condition easier to satisfy.

\section{Experiments}


\subsection{Empirical Validation of Error Decomposition}
\label{sec:exp_decomp}

We aim to verify empirically that the three error components identified in
Theorem~\ref{theorem: total error} (truncation, estimation, and core
bias) decrease as the sample size $n$ increases, and that the
corresponding theoretical upper bounds remain valid and capture the observed
decay trend. The core bias bound is expected to be conservative because Lemma~\ref{lemma: cbias} is model-agnostic and is
designed to apply across different core construction procedures. We use KBC \citep{zhang2025kernel2} as the representative CaD clustering method~\footnote{Additional GDKC \citep{zhu2023kernel} evidence are deferred to Appendix \ref{app:gdkc}.} in our experiments, since it has the simplest mechanism among the three methods considered and is the easiest to interpret in light of our analysis. We evaluate KBC on five synthetic datasets that represent a diverse range of
cluster geometries and size distributions
(Figure \ref{fig:datasets_and_decomp}, top row; see
Appendix \ref{app:datasets} for the full details).


\begin{figure}[ht]
  \centering
  \includegraphics[width=\textwidth]{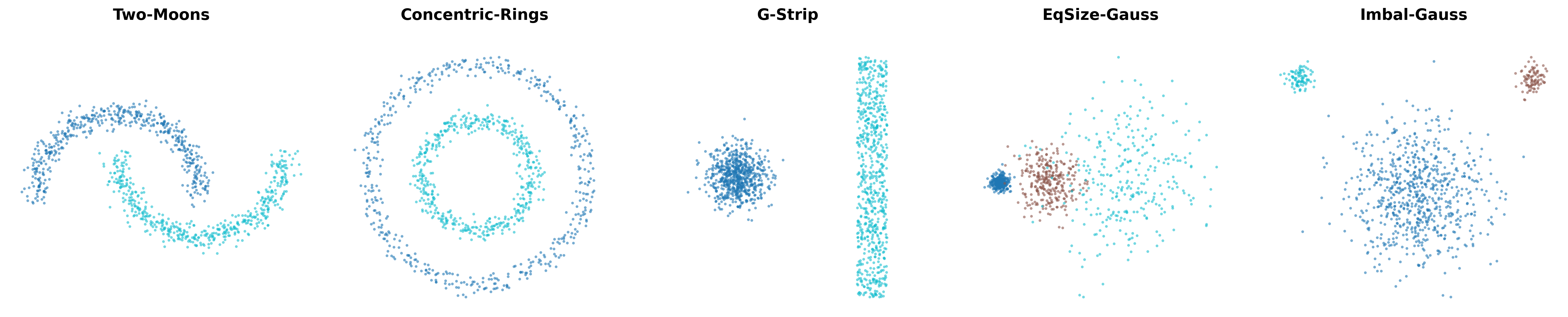}
  \includegraphics[width=\textwidth]{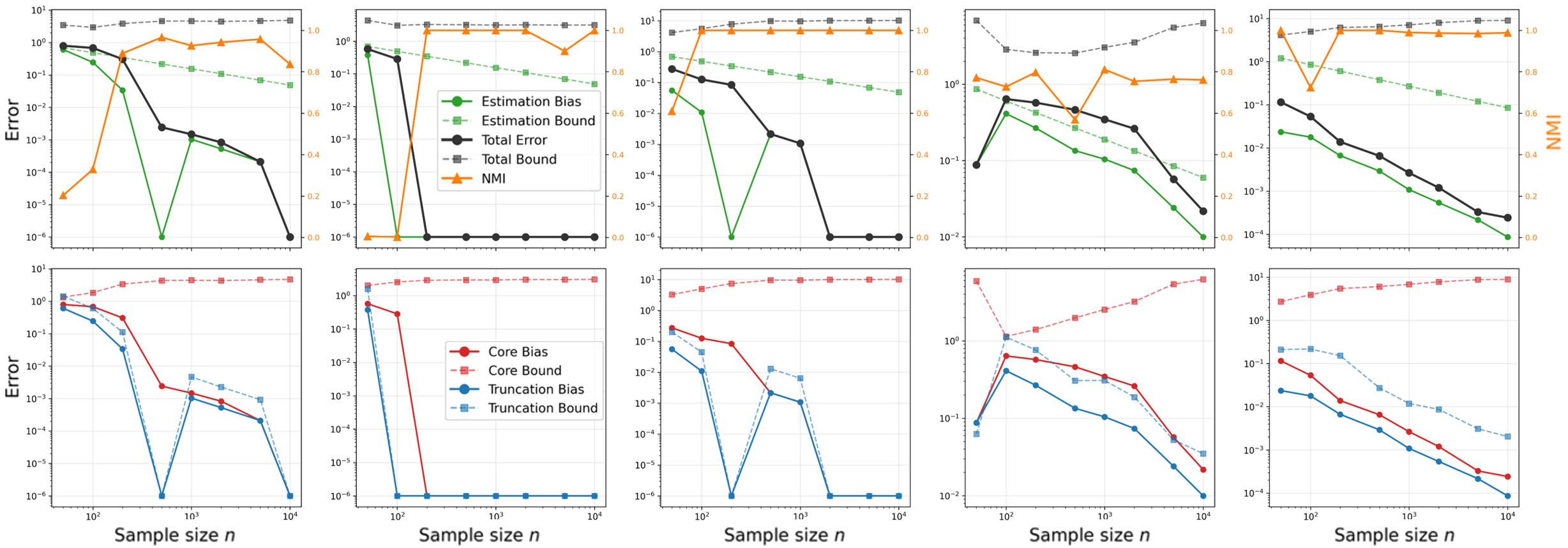}
  \caption{%
    \textit{Top.} Five benchmark datasets.
    \textit{Middle.} Mean (over the number of clusters) estimation bias and total error w.r.t. sample size~$n$; NMI on the right axis.
    \textit{Bottom row.} Core and truncation biases w.r.t. $n$.
    All results are averaged over 10 seeds. The detailed mean and std are reported in Table \ref{tab:std_all}.
  }
  \label{fig:datasets_and_decomp}
\end{figure}

Figure \ref{fig:datasets_and_decomp} (middle and bottom rows) reports all
components on log scale. Middle row shows that both the estimation bias and total error decay steadily
with $n$, and NMI converges toward $1$ across all five datasets, confirming
that KBC is consistent. The bottom row shows the core bias and truncation bias likewise decrease
monotonically. The dashed theoretical bounds remain valid across all tested
sample sizes and correctly reflect the overall decay trend. The core bias
bound is visibly more conservative than the empirical curve. This is expected
because the bound in Lemma~\ref{lemma: cbias} is model-agnostic and therefore
not tailored to the specific core extraction rule used by KBC.

Figure \ref{fig:datasets_and_decomp} serves as a diagnostic validation under an oracle hyperparameter protocol.  Appendix \ref{app_heuristic} provides a heuristic unsupervised method that selects a hyperparameter to form core clusters that yield high
coverage, strong within-core cohesion, and balanced core clusters. The heuristic is shown to provide a  good selection  that produces optimal or close to optimal CaD clustering outcomes.


\subsection{Comparison with Kernel $k$-means}
\label{sec:exp_kkm}

We choose Gaussian Kernel $k$-means (KKM) as the main baseline because it is the most directly comparable method to KBC. Both KBC methods rely on the same Gaussian kernel and perform clustering in RKHS with a similar objective function $\sum_{C\in\mathcal{C}}\sum_{x\in C} f(x,\hat{\mu}_C)$ having a minor difference in $f$: KKM employs squared Euclidean distance and KBC utilizes the dot product. Their two main differences are: (i) Cluster representation: KKM treats $\hat{\mu}_C$ as a mean vector  and it is updated in each iteration of an optimization process. Furthermore, $\hat{\mu}_C$ is a randomly initialized, relying on the optimization to significantly improve $\hat{\mu}_C$ as the iteration progresses. Most importantly, \emph{none of the mean vectors $\hat{\mu}_C$---the initialization and its updates---are treated as a distribution} in the whole clustering process. In contrast, KBC treats $\hat{\mu}_C$ as a distributional proxy of cluster $C$---in the initialization of cluster cores as well as those in the point assignment---in the entire clustering process. (ii) Optimization required: KKM must employ the Expectation-Maximization (EM) optimization, but KBC uses a greedy search in the one-pass assignment---equivalent to the only one iteration in the EM optimization! Therefore, comparing against KKM focuses specifically on the benefits of the distributional representation of core clusters and the point assignment can be achieved in a greedy search in one iteration.

We compare KBC against Kernel $k$-means (KKM) on the same five benchmark
datasets to assess (1) the values of the objective functions KBC and KKM, (2) the embedding approximation
error (w.r.t. the ground truth embedding), and (3) empirical clustering performance in terms of NMI, as the number of iterations increases. Two variants of KKM are also used: (I) \emph{KKM+KBC assign}: KKM completes the ordinary run in each iteration followed by the point assignment step of KBC: each point is reassigned to the cluster $j^*$ to maximize the inner
    product $\langle \phi(x), \hat{\mu}_j \rangle_{\mathcal{H}}$. 
    (II) \emph{KBC-guided KKM}: the KKM updates $\hat{\mu}_j$ in each iteration by employing the labels assigned by the distributional kernel, as used in the second step of KBC. All methods use Gaussian kernel with bandwidth $\sigma$ tuned jointly by a grid
search.
Each of KKM and its two variants runs with $20$ random initializations per dataset, and the best result is
reported for comparisons. Unlike KKM, KBC does not rely on random initialization; its procedure is deterministic and free of stochastic components.

\begin{figure}[ht]
  \centering
\includegraphics[width=\textwidth]{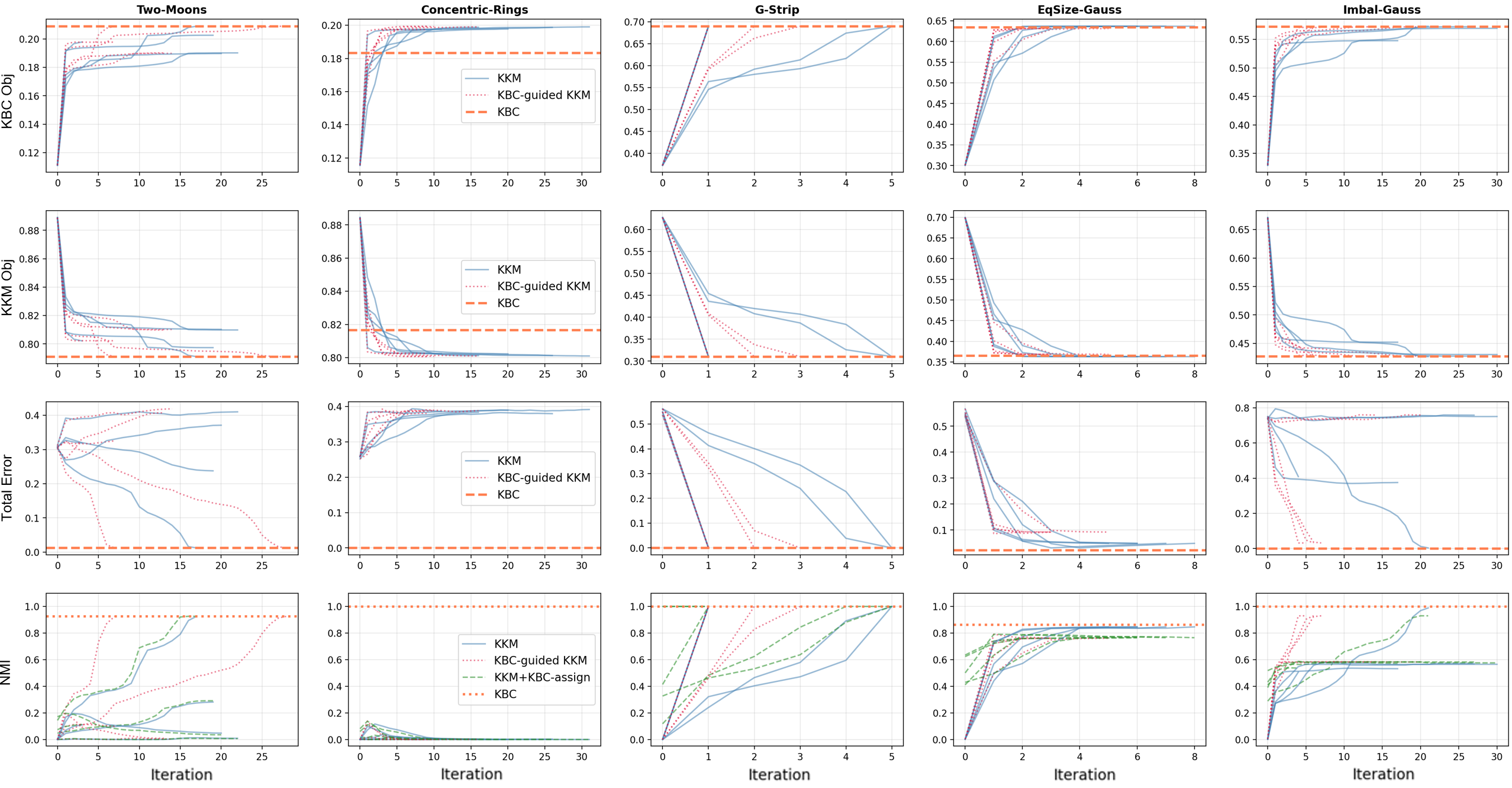}
  \caption{%
    \textbf{Per-iteration dynamics.}
    \textit{Row 1.} KBC objective function value. \textit{Row 2.} KKM objective function value. \textit{Row 3.} Total Error.
    \textit{Row 4.} NMI at each iteration for the same runs.    
  }
  \label{fig:kkm_iter}
\end{figure}

\textbf{The observations of the per-iteration dynamics shown in Figure \ref{fig:kkm_iter} are given as follows:} 
\begin{myitems}
    \item As shown in the first two rows, KBC-guided KKM consistently achieves higher distributional objective (Row 1) and lower within-cluster RKHS difference (Row 2) at each iteration than the vanilla KKM across all five datasets. This demonstrates that the `warm' start of KBC places each iteration in a more favorable region of the objective landscape, enabling faster convergence to better solutions and mitigating the sensitivity to random initialization that afflicts the vanilla KKM. Note that KBC has the best objective values, Total Error and NMI in all datasets, except Concentric-Ring (see the discussion in the next paragraph). 

    \item The result in terms of the total error of the embedding w.r.t. the ground-truth cluster embedding (Row 3) reveals the sharpest distinction: KBC-guided KKM rapidly reduces to near zero across almost all datasets, whereas the vanilla KKM plateaus at substantially higher values on non-convex datasets. This shows that
   optimizing the KKM objective alone does not necessarily recover the ground-truth clusters. Concentric-Rings (Column 2) is an example: although KKM's objective value (Row 2) decreases as the number of iterations increases, the ring-shaped clusters cannot be represented well by the mean vectors, and all versions of KKM converge to large total errors (Row~3) and have very poor NMIs (Row 4). Interestingly, KBC has NMI=1 and Total Error = 0 (perfect clustering outcome), even though it achieves its objective value which is lower than that achieved by all versions of KKM (shown in Row 1). This example demonstrates the power of the `cluster-as-distribution' assumption used in CaD Clustering over the `cluster as a set of similar points' assumption employed in KKM. The visualization of clustering results can be found in Appendix \ref{appendix:cvis}.
\end{myitems}

\section{Discussion}



\textbf{Comparison with more baselines.} Representative CaD clustering methods on real datasets have already been extensively evaluated in prior works. KBC has been compared with spectral clustering and other baselines (KKM, U-SPEC \citep{8661522}, Density Peak Clustering \citep{rodriguez2014clustering}) and their variants (Table A6 in \citep{zhang2025kernel2}), while IDKC/GDKC has been validated on 22 datasets (Table 3 in \citep{zhu2023kernel}). KBC and GDKC both outperform KKM in clustering result and runtime. The purpose of our experiments is 
to provide diagnostics for the proposed error decomposition and to explain why the distributional proxies of core clusters, even with some error, can lead to optimal one-pass assignment via greedy search.


\section{Conclusions}

This is the first analysis, from the statistical and combinatorial viewpoints, that explains why CaD clustering via greedy search can discover clusters of arbitrary shapes, densities and sizes (where all set-oriented clustering methods have failed to discover) when the estimated cluster embeddings faithfully approximate the underlying distributions of the clusters. We decompose the distribution-to-core embedding error into three components: truncation, estimation, and core selection terms, and show that the one-pass assignment via greedy search is optimal for the fixed-core assignment subproblem through a partition matroid formulation. This leads to a near-optimality guarantee for the CaD clustering objective and label recovery, with the distribution regret controlled by the
approximation error. The cluster-as-distribution (CAD) assumption has rendered the greedy-optimality proof via matroid straightforward, while the commonly-held set-oriented view of `cluster is a set of similar points' has made the correct cluster identification impossible for existing clustering algorithms \citep{kleinberg2002impossibility}. This is because it is difficult to ascertain the membership of a cluster based on a point-to-point similarity function, ignoring the distributional information in the dataset; whereas it is straightforward when a cluster is treated as a distribution via a distributional kernel, as shown in the CaD Clustering. Note that the impossibility theorem of clustering \citep{kleinberg2002impossibility} is derived based on the above set-oriented view, and thus it is not applicable to the CaD Clustering.
A further discussion of this issue,
the limitations of this investigation and future work are provided in Appendix \ref{appendix: future-work-and-limitation}.

{
\small
\bibliography{reference}

@article{edmonds1971matroids,
  title={Matroids and the greedy algorithm},
  author={Edmonds, Jack},
  journal={Mathematical Programming},
  volume={1},
  number={1},
  pages={127--136},
  year={1971},
  publisher={Springer}
}

@article{rado1957note,
  title={Note on independence functions},
  author={Rado, Richard},
  journal={Proceedings of the London Mathematical Society},
  volume={3},
  number={1},
  pages={300--320},
  year={1957},
  publisher={Oxford University Press}
}

@article{ting2022point,
  title={Point-set kernel clustering},
  author={Ting, Kai Ming and Wells, Jonathan R and Zhu, Ye},
  journal={IEEE Transactions on Knowledge and Data Engineering},
  volume={35},
  number={5},
  pages={5147--5158},
  year={2023},
  publisher={IEEE}
}

@article{zhu2023kernel,
  title={Kernel-based clustering via isolation distributional kernel},
  author={Zhu, Ye and Ting, Kai Ming},
  journal={Information Systems},
  volume={117},
  pages={102212},
  year={2023},
  publisher={Elsevier}
}

@article{zhang2025kernel,
title = {Kernel-bounded clustering: Achieving the objective of spectral clustering without eigendecomposition},
journal = {Artificial Intelligence},
volume = {350},
pages = {104440},
year = {2026},
issn = {0004-3702},
doi = {https://doi.org/10.1016/j.artint.2025.104440},
url = {https://www.sciencedirect.com/science/article/pii/S0004370225001596},
author = {Hang Zhang and Kai Ming Ting and Ye Zhu}
}

@article{zhang2025kernel2,
  title={Kernel-bounded clustering for spatial transcriptomics enables scalable discovery of complex spatial domains},
  author={Zhang, Hang and Zhang, Yi and Ting, Kai Ming and Zhang, Jie and Zhao, Qiuran},
  journal={Genome Research},
  volume={35},
  number={2},
  pages={355--367},
  year={2025},
  publisher={Cold Spring Harbor Lab}
}

@article{ting2026achieve,
  title={How to Achieve the Intended Aim of Deep Clustering Now, without Deep Learning},
  author={Ting, Kai Ming and Xu, Wei-Jie and Zhang, Hang},
  journal={arXiv preprint arXiv:2602.05749},
  year={2026}
}

@article{macnaughton1964dissimilarity,
  title={Dissimilarity analysis: a new technique of hierarchical sub-division},
  author={Macnaughton-Smith, P and Williams, WT and Dale, MB and Mockett, LG},
  journal={Nature},
  volume={202},
  number={4936},
  pages={1034--1035},
  year={1964},
  publisher={Nature Publishing Group UK London}
}

@book{aggarwal2015data,
  title={Data mining: the textbook},
  author={Aggarwal, Charu C},
  volume={1},
  number={3},
  year={2015},
  publisher={Springer}
}

@inproceedings{ting2018isolation,
  title={Isolation kernel and its effect on {SVM}},
  author={Ting, Kai Ming and Zhu, Yue and Zhou, Zhi-Hua},
  booktitle={Proceedings of the 24th ACM SIGKDD International Conference on Knowledge Discovery \& Data Mining},
  pages={2329--2337},
  year={2018}
}

@article{sriperumbudur2010hilbert,
  title={Hilbert space embeddings and metrics on probability measures},
  author={Sriperumbudur, Bharath K and Gretton, Arthur and Fukumizu, Kenji and Sch{\"o}lkopf, Bernhard and Lanckriet, Gert RG},
  journal={Journal of Machine Learning Research},
  volume={11},
  pages={1517--1561},
  year={2010},
  publisher={JMLR. org}
}

@article{muandet2017kernel,
author = {Muandet, Krikamol and Fukumizu, Kenji and Sriperumbudur, Bharath and Sch\"{o}lkopf, Bernhard},
title = {Kernel Mean Embedding of Distributions: A Review and Beyond},
year = {2017},
issue_date = {Jun 2017},
publisher = {Now Publishers Inc.},
address = {Hanover, MA, USA},
volume = {10},
number = {1–2},
issn = {1935-8237},
url = {https://doi.org/10.1561/2200000060},
doi = {10.1561/2200000060},
journal = {Found. Trends Mach. Learn.},
month = jun,
pages = {1–141},
numpages = {144}
}

@InProceedings{smola2007,
author="Smola, Alex
and Gretton, Arthur
and Song, Le
and Sch{\"o}lkopf, Bernhard",
title="A {H}ilbert Space Embedding for Distributions",
booktitle="Algorithmic Learning Theory",
year="2007",
publisher="Springer",
address="Berlin, Heidelberg",
pages="13--31",
isbn="978-3-540-75225-7"
}

@InProceedings{NN-meaningful-1999,
author="Beyer, Kevin
and Goldstein, Jonathan
and Ramakrishnan, Raghu
and Shaft, Uri",
editor="Beeri, Catriel
and Buneman, Peter",
title="When Is ``Nearest Neighbor'' Meaningful?",
booktitle="Database Theory --- ICDT'99",
year="1999",
publisher="Springer Berlin Heidelberg",
address="Berlin, Heidelberg",
pages="217--235"
}

@article{gretton2006kernel,
  title={A kernel method for the two-sample-problem},
  author={Gretton, Arthur and Borgwardt, Karsten and Rasch, Malte and Sch{\"o}lkopf, Bernhard and Smola, Alex},
  journal={Advances in Neural Information Processing Systems},
  volume={19},
  year={2006}
}

@ARTICLE{8661522,
  author={Huang, Dong and Wang, Chang-Dong and Wu, Jian-Sheng and Lai, Jian-Huang and Kwoh, Chee-Keong},
  journal={IEEE Transactions on Knowledge and Data Engineering}, 
  title={Ultra-Scalable Spectral Clustering and Ensemble Clustering}, 
  year={2020},
  volume={32},
  number={6},
  pages={1212-1226},
  doi={10.1109/TKDE.2019.2903410}}

@article{rodriguez2014clustering,
  title={Clustering by fast search and find of density peaks},
  author={Rodriguez, Alex and Laio, Alessandro},
  journal={Science},
  volume={344},
  number={6191},
  pages={1492--1496},
  year={2014},
  publisher={American Association for the Advancement of Science}
}

@article{zhu2016density,
  title={Density-ratio based clustering for discovering clusters with varying densities},
  author={Zhu, Ye and Ting, Kai Ming and Carman, Mark J},
  journal={Pattern Recognition},
  volume={60},
  pages={983--997},
  year={2016},
  publisher={Elsevier}
}

@article{jain2010data,
  title={Data clustering: 50 years beyond K-means},
  author={Jain, Anil K},
  journal={Pattern Recognition Letters},
  volume={31},
  number={8},
  pages={651--666},
  year={2010},
  publisher={Elsevier}
}

@article{bhattacharjee2021survey,
  title={A survey of density based clustering algorithms},
  author={Bhattacharjee, Panthadeep and Mitra, Pinaki},
  journal={Frontiers of Computer Science},
  volume={15},
  number={1},
  pages={151308},
  year={2021},
  publisher={Springer}
}

@article{nadler2006fundamental,
  title={Fundamental limitations of spectral clustering},
  author={Nadler, Boaz and Galun, Meirav},
  journal={Advances in Neural Information Processing Systems},
  volume={19},
  year={2006}
}

@article{zhou2024comprehensive,
  title={A comprehensive survey on deep clustering: Taxonomy, challenges, and future directions},
  author={Zhou, Sheng and Xu, Hongjia and Zheng, Zhuonan and Chen, Jiawei and Li, Zhao and Bu, Jiajun and Wu, Jia and Wang, Xin and Zhu, Wenwu and Ester, Martin},
  journal={ACM Computing Surveys},
  volume={57},
  number={3},
  pages={1--38},
  year={2024},
  publisher={ACM New York, NY}
}

@article{ros2024deep,
  title={Deep clustering framework review using multicriteria evaluation},
  author={Ros, Frederic and Riad, Rabia and Guillaume, Serge},
  journal={Knowledge-Based Systems},
  volume={285},
  pages={111315},
  year={2024},
  publisher={Elsevier}
}

@article{lu2024survey,
  title={A Survey on Deep Clustering: From the Prior Perspective},
  author={Lu, Yiding and Li, Haobin and Li, Yunfan and Lin, Yijie and Peng, Xi},
  journal={arXiv preprint arXiv:2406.19602},
  year={2024}
}

@article{reynolds2009gaussian,
  title={Gaussian mixture models.},
  author={Reynolds, Douglas A and others},
  journal={Encyclopedia of biometrics},
  volume={741},
  number={659-663},
  pages={3},
  year={2009},
  publisher={Springer City}
}

@inproceedings{ester1996density,
  title={A density-based algorithm for discovering clusters in large spatial databases with noise},
  author={Ester, Martin and Kriegel, Hans-Peter and Sander, J{\"o}rg and Xu, Xiaowei},
  booktitle={Proceedings of the Second International Conference on Knowledge Discovery and Data Mining},
  pages={226--231},
  year={1996}
}

@article{faigle2009general,
  title={A general model for matroids and the greedy algorithm},
  author={Faigle, Ulrich and Fujishige, Satoru},
  journal={Mathematical Programming},
  volume={119},
  number={2},
  pages={353--369},
  year={2009},
  publisher={Springer}
}

@article{kleinberg2002impossibility,
  title={An impossibility theorem for clustering},
  author={Kleinberg, Jon},
  journal={Advances in Neural Information Processing Systems},
  volume={15},
  year={2002}
}

@article{ting2024possible,
  title={Is it possible to find the single nearest neighbor of a query in high dimensions?},
  author={Ting, Kai Ming and Washio, Takashi and Zhu, Ye and Xu, Yang and Zhang, Kaifeng},
  journal={Artificial Intelligence},
  volume={336},
  pages={104206},
  year={2024},
  publisher={Elsevier}
}

@article{gormley2023model,
  title={Model-based clustering},
  author={Gormley, Isobel Claire and Murphy, Thomas Brendan and Raftery, Adrian E},
  journal={Annual Review of Statistics and Its Application},
  volume={10},
  number={1},
  pages={573--595},
  year={2023},
  publisher={Annual Reviews}
}

@inproceedings{dhillon2004kernel,
  title={Kernel k-means: spectral clustering and normalized cuts},
  author={Dhillon, Inderjit S and Guan, Yuqiang and Kulis, Brian},
  booktitle={Proceedings of the tenth ACM SIGKDD International Conference on Knowledge Discovery and Data Mining},
  pages={551--556},
  year={2004}
}

@article{von2007tutorial,
  title={A tutorial on spectral clustering},
  author={Von Luxburg, Ulrike},
  journal={Statistics and Computing},
  volume={17},
  number={4},
  pages={395--416},
  year={2007},
  publisher={Springer}
}

@article{wu2022deep,
  title={Deep clustering and visualization for end-to-end high-dimensional data analysis},
  author={Wu, Lirong and Yuan, Lifan and Zhao, Guojiang and Lin, Haitao and Li, Stan Z},
  journal={IEEE Transactions on Neural Networks and Learning Systems},
  volume={34},
  number={11},
  pages={8543--8554},
  year={2022},
  publisher={IEEE}
}

@article{rinaldo2012stability,
  title={Stability of Density-Based Clustering.},
  author={Rinaldo, Alessandro and Singh, Aarti and Nugent, Rebecca and Wasserman, Larry},
  journal={Journal of Machine Learning Research},
  volume={13},
  number={4},
  year={2012}
}
\bibliographystyle{plain}
}


\appendix

\section{Notations}
\label{appendix: notation}
We show the main notations used in the paper in Table \ref{tab:notation}.
\begin{table}[ht]
\centering
\caption{Notations used for basic symbols, clustering, and matroid analysis.}
\label{tab:notation}
\small
\begin{tabular}{cc}
\toprule
\textbf{Notation} & \textbf{Description} \\
\midrule
$D$ & Finite dataset $\{x_1,\dots,x_n\}$. \\
$\mathcal{H}$ & Reproducing Kernel Hilbert Space. \\
$\phi(x)$ & Feature map of point $x$ in $\mathcal{H}$. \\
$\mathcal{P}$ & Distribution of a cluster. \\
$\mathcal{Q}$ & Truncated distribution of $\mathcal{P}$, supported on $S_\lambda=\{x\in \text{supp}(\mathcal{P})|f_\mathcal{P} (x)\geq\lambda\}$ \\
$G$ & Core cluster, a representative subset of points used to characterize a cluster. \\
$\mu_\mathcal{P}$ & Kernel Mean Embedding of distribution $\mathcal{P}$. \\
$\hat{\mu}_G$ & Estimated Kernel Mean Embedding of a dataset $G$ with finite points. \\
\midrule
$\mathcal{I}$ & Family of independent sets in the partition matroid. \\
$w(x_i,j)$ & Weight of assigning point $x_i$ to cluster $j$. \\
\bottomrule
\end{tabular}
\end{table}

\section{Proofs}
\label{appendix: proofs}

\subsection{Proof of Lemma \ref{lemma: tbias}}

\textbf{Lemma \ref{lemma: tbias}:}

    When using Gaussian Kernel $k_\sigma(x, y)$ and $\sigma\rightarrow\infty$, for the truncation bias, it holds that 

\begin{equation}
\|\mu_\mathcal{P} - \mu_\mathcal{Q}\|_{\mathcal{H}}
= \|\mathbb{E}[\mathcal{Q}]-\mathbb{E}[\mathcal{T}]\|\cdot \frac{\eta}{\sigma} + O\!\left(\frac{1}{\sigma^2}\right).
\end{equation}

\begin{proof}
\noindent Expanding the inner product $\langle \cdot, \cdot \rangle_{\mathcal{H}}$, we have 
\begin{equation}
    \|\mathcal{E}\|_{\mathcal{H}}^2 = \eta^2 \left( \langle \mu_\mathcal{T}, \mu_\mathcal{T} \rangle_{\mathcal{H}} + \langle \mu_\mathcal{Q}, \mu_\mathcal{Q} \rangle_{\mathcal{H}} - 2\langle \mu_\mathcal{T}, \mu_\mathcal{Q} \rangle_{\mathcal{H}} \right).
\end{equation}
Using the reproducing property $\langle \mu_\mathcal{P}, \mu_\mathcal{Q} \rangle_{\mathcal{H}} = \mathbb{E}_{x \sim \mathcal{P}, y \sim \mathcal{Q}}[k(x, y)]$, we express the bound in terms of Expected Kernels:
\begin{equation}
    \|\mathcal{E}\|_{\mathcal{H}}^2 = \eta^2 \left( \underbrace{\mathbb{E}_{x,x' \sim \mathcal{T}}[k(x,x')]}_{\gamma_\mathcal{T}} + \underbrace{\mathbb{E}_{y,y' \sim \mathcal{Q}}[k(y,y')]}_{\gamma_\mathcal{Q}} - 2\underbrace{\mathbb{E}_{x \sim \mathcal{T}, y \sim \mathcal{Q}}[k(x,y)]}_{\delta_{\mathcal{T}\mathcal{Q}}} \right),
\end{equation}
where $\gamma_\mathcal{T}, \gamma_\mathcal{Q}$ represent the self-similarity (intra-cluster coherence) and $\delta_{\mathcal{T}\mathcal{Q}}$ represents the cross-similarity between $\mathcal{Q}$ and $\mathcal{T}$.

For a Gaussian kernel, $k_\sigma(x, y) \in (0, 1]$. As $\sigma \to \infty$, the second-order Taylor expansion with remainder gives
\begin{equation}
    k_\sigma(x,y) = 1 - \frac{\|x-y\|^2}{2\sigma^2} + O\!\left(\frac{\|x-y\|^4}{\sigma^4}\right).
\end{equation}
Substituting into the RKHS norm expression yields
\begin{equation}
    \|\mathcal{E}\|_{\mathcal{H}}^2 = \eta^2 \left [ \left(1 - \frac{\mathbb{E}\|x-x'\|^2}{2\sigma^2}\right) + \left(1 - \frac{\mathbb{E}\|y-y'\|^2}{2\sigma^2}\right) - 2\left(1 - \frac{\mathbb{E}\|x-y\|^2}{2\sigma^2}\right) \right] + O\!\left(\frac{1}{\sigma^4}\right).
\end{equation}
Simplifying the terms:
\begin{equation}
\begin{aligned}
    \|\mathcal{E}\|_{\mathcal{H}}^2
    &= \eta^2 \left[ \frac{2\mathbb{E}_{x \in \mathcal{T}, y \in \mathcal{Q}}\|x-y\|^2 - \mathbb{E}_{x,x' \in \mathcal{T}}\|x-x'\|^2 - \mathbb{E}_{y,y' \in \mathcal{Q}}\|y-y'\|^2}{2\sigma^2} \right] + O\!\left(\frac{1}{\sigma^4}\right) \\
    &= \eta^2\frac{\|\mathbb{E}[\mathcal{Q}]-\mathbb{E}[\mathcal{T}]\|^2}{\sigma^2} + O\!\left(\frac{1}{\sigma^4}\right).
\end{aligned}
\end{equation}

Now let
\begin{equation}
    A=\eta^2\|\mathbb{E}[\mathcal{Q}]-\mathbb{E}[\mathcal{T}]\|^2,
    \qquad
    f(\sigma)=\|\mu_\mathcal{P} - \mu_\mathcal{Q}\|_{\mathcal{H}}\ge 0.
\end{equation}
From the previous step,
\begin{equation}
    f(\sigma)^2 = \frac{A}{\sigma^2} + O\!\left(\frac{1}{\sigma^4}\right),
\end{equation}
so there exists $C>0$ such that, for sufficiently large $\sigma$,
\begin{equation}
    \left|f(\sigma)^2-\frac{A}{\sigma^2}\right|\le \frac{C}{\sigma^4}.
\end{equation}
Define $g(\sigma)=\sqrt{A}/\sigma\ge 0$. Then
\begin{equation}
    |f(\sigma)-g(\sigma)|
    = \frac{|f(\sigma)^2-g(\sigma)^2|}{f(\sigma)+g(\sigma)}.
\end{equation}
Moreover, since $f(\sigma)^2=A/\sigma^2+O(1/\sigma^4)$, we have $f(\sigma)+g(\sigma)=\Theta(1/\sigma)$ in the non-degenerate case $A>0$. Therefore
\begin{equation}
    |f(\sigma)-g(\sigma)|
    = O\!\left(\frac{1/\sigma^4}{1/\sigma}\right)
    = O\!\left(\frac{1}{\sigma^3}\right)
    \subseteq O\!\left(\frac{1}{\sigma^2}\right).
\end{equation}
Hence the conservative asymptotic form is
\begin{equation}
    \|\mu_\mathcal{P} - \mu_\mathcal{Q}\|_{\mathcal{H}} = \frac{\sqrt{A}}{\sigma} + O\!\left(\frac{1}{\sigma^2}\right).
\end{equation}

Thus, we have
$$
\|\mu_\mathcal{P} - \mu_\mathcal{Q}\|_{\mathcal{H}}
= \|\mathbb{E}[\mathcal{Q}]-\mathbb{E}[\mathcal{T}]\|\cdot \frac{\eta}{\sigma} + O\!\left(\frac{1}{\sigma^2}\right).
$$
\end{proof}

\subsection{Proof of Lemma \ref{lemma ebias}}

\textbf{Lemma \ref{lemma ebias} (Convergence of the Embedding of Dense Region ):}

For a sample of $s$ i.i.d. points $U$ from distribution $\mathcal{Q}$ and a Gaussian kernel with $k(x,x) \le 1$, it holds that with probability at least $1-\delta$
\begin{equation}
    \|\KME_\mathcal{Q} - \empKME_U\|_{\Hspace} \le \frac{1 + \sqrt{2 \ln(1/\delta)}}{\sqrt{s}}.
\end{equation}

\begin{proof}
Consider the function $f(x_1, \dots, x_s) = \|\KME_\mathcal{Q} - \frac{1}{s} \sum_{i=1}^s \phiMap(x_i)\|_{\Hspace}$. 
Changing one coordinate $x_i$ to $x_i'$ results in a maximum change:
\begin{equation}
    |f(\dots, x_i, \dots) - f(\dots, x_i', \dots)| \le \frac{1}{s} \|\phiMap(x_i) - \phiMap(x_i')\|_{\Hspace} \le \frac{2}{s}
\end{equation}
Applying McDiarmid's inequality:
\begin{equation}
    P(f - \E[f] \ge \epsilon) \le \exp\left( -\frac{2\epsilon^2}{\sum (2/s)^2} \right) = \exp\left( -\frac{s\epsilon^2}{2} \right)
\end{equation}
Setting $\delta = \exp(-s\epsilon^2/2)$ gives the deviation term $\epsilon=\sqrt{\frac{2\ln(1/\delta)}{s}}$. Then with probability at least $1-\delta$, it holds 
\begin{equation}
f-\mathbb{E}[f]=
    \|\mu_\mathcal{Q} - \hat{\mu}_U\|_{\mathcal{H}} - \mathbb{E}[\|\mu_\mathcal{Q} - \hat{\mu}_U\|_{\mathcal{H}}]\le \sqrt{\frac{2 \ln(1/\delta)}{s}}.
\end{equation}

By Jensen's inequality, the expected value is bounded by:
\begin{equation}
    \E[f] \le \sqrt{\E[f^2]} = \sqrt{\frac{1}{s^2} \sum _{i=1}^{s}\text{Var}(\phiMap(x_i))} \le \frac{1}{\sqrt{s}}.
\end{equation}
This holds because given $\|\phi(x)\| \le 1$, we have $\text{Var}_\mathcal{Q}(\phi(X)) \le \mathbb{E}\|\phi(x)\|^2 \le 1$.

Combining the two inequalities above, we finish the proof.
\end{proof}

\subsection{Proof of Lemma \ref{lemma: cbias}}

\textbf{Lemma \ref{lemma: cbias} (A generic upper bound for core bias):}

Let $U=\{q_i\}_{i=1}^s$ be sampled points from $\mathcal{Q}$ and let $G=\{g_j\}_{j=1}^m$ be core points. Assume there exists a mapping $\pi:\{1,\dots,s\}\to\{1,\dots,m\}$ such that
\begin{equation}
    \|q_i-g_{\pi(i)}\|\le r, \quad \forall i\in\{1,\dots,s\}.
\end{equation}
Define occupancy proportions
\begin{equation}
    p_j=\frac{1}{s}\left|\{i:\pi(i)=j\}\right|,\quad j=1,\dots,m,
\end{equation}
and imbalance term
\begin{equation}
    B=\sum_{j=1}^m\left|p_j-\frac{1}{m}\right|.
\end{equation}
Then for Gaussian kernel $k_\sigma$,
\begin{equation}
    \|\empKME_U-\empKME_G\|_{\mathcal{H}}
    \le \frac{r}{\sigma} + B.
\end{equation}

\begin{proof}
Let
\begin{equation}
    \widetilde{\mu}_G = \frac{1}{s}\sum_{i=1}^s\phiMap(g_{\pi(i)}) = \sum_{j=1}^m p_j\phiMap(g_j).
\end{equation}
By triangle inequality,
\begin{equation}
    \|\empKME_U-\empKME_G\|_{\mathcal{H}}
    \le \|\empKME_U-\widetilde{\mu}_G\|_{\mathcal{H}} + \|\widetilde{\mu}_G-\empKME_G\|_{\mathcal{H}}.
\end{equation}
For the first term,
\begin{equation}
\|\empKME_U-\widetilde{\mu}_G\|_{\mathcal{H}}
=\left\|\frac{1}{s}\sum_{i=1}^s\left(\phiMap(q_i)-\phiMap(g_{\pi(i)})\right)\right\|_{\mathcal{H}}
\le \frac{1}{s}\sum_{i=1}^s\|\phiMap(q_i)-\phiMap(g_{\pi(i)})\|_{\mathcal{H}}.
\end{equation}
For Gaussian kernel, using $k(x,x)=1$,
\begin{equation}
\|\phiMap(x)-\phiMap(y)\|_{\mathcal{H}}^2 = 2\left(1-e^{-\|x-y\|^2/(2\sigma^2)}\right) \le \frac{\|x-y\|^2}{\sigma^2},
\end{equation}
thus $\|\phiMap(x)-\phiMap(y)\|_{\mathcal{H}}\le \|x-y\|/\sigma$. Substituting $\|q_i-g_{\pi(i)}\|\le r$ for all $i$ gives
\begin{equation}
\|\empKME_U-\widetilde{\mu}_G\|_{\mathcal{H}}\le \frac{r}{\sigma}.
\end{equation}
For the second term,
\begin{equation}
\|\widetilde{\mu}_G-\empKME_G\|_{\mathcal{H}}
=\left\|\sum_{j=1}^m\left(p_j-\frac{1}{m}\right)\phiMap(g_j)\right\|_{\mathcal{H}}
\le \sum_{j=1}^m\left|p_j-\frac{1}{m}\right|\|\phiMap(g_j)\|_{\mathcal{H}}
\le B,
\end{equation}
where $\|\phiMap(g_j)\|_{\mathcal{H}}=\sqrt{k(g_j,g_j)}=1$. Combining the two bounds yields
\begin{equation}
\|\empKME_U-\empKME_G\|_{\mathcal{H}}\le \frac{r}{\sigma}+B.
\end{equation}
\end{proof}

\subsection{Proof of Theorem \ref{theorem: total error}}

\begin{proof}
Starting from the decomposition
\[
\|\KME_\mathcal{P} - \empKME_G\|_{\Hspace}
\le \|\KME_\mathcal{P} - \KME_\mathcal{Q}\|_{\Hspace}
+ \|\KME_\mathcal{Q} - \empKME_U\|_{\Hspace}
+ \|\empKME_U - \empKME_G\|_{\Hspace},
\]
apply Lemma \ref{lemma: tbias}, Lemma \ref{lemma ebias}, and Lemma \ref{lemma: cbias} to the three terms, respectively:
\begin{align}
\|\KME_\mathcal{P} - \KME_\mathcal{Q}\|_{\Hspace}
&\le \|\mathbb{E}[\mathcal{Q}]-\mathbb{E}[\mathcal{T}]\|\cdot \frac{\eta}{\sigma} + O\!\left(\frac{1}{\sigma^2}\right),\\
\|\KME_\mathcal{Q} - \empKME_U\|_{\Hspace}
&\le \frac{1 + \sqrt{2 \ln(1/\delta)}}{\sqrt{s}},\\
\|\empKME_U - \empKME_G\|_{\Hspace}
&\le \frac{r}{\sigma}+B.
\end{align}
Summing the three inequalities gives the stated bound.
\end{proof}

\subsection{Proof of Proposition \ref{prop:matroid_structure}}
\label{appendix:proof-matroid-structure}

\begin{proof}
We verify the three axioms:
\begin{enumerate}
    \item \textbf{Non-emptiness:} $\emptyset \cap S_i = \emptyset$, thus $|\emptyset \cap S_i| = 0 \le 1$. $\emptyset \in \mathcal{I}$.
    \item \textbf{Hereditary:} Let $A \in \mathcal{I}$ and $B \subseteq A$. For any $i$, $|B \cap S_i| \le |A \cap S_i| \le 1$. Thus $B \in \mathcal{I}$.
    \item \textbf{Exchange Property:} Let $A, B \in \mathcal{I}$ with $|A| < |B|$. Let $I(A) = \{i \mid A \cap S_i \neq \emptyset\}$ be the indices of points covered by $A$. Since $|A \cap S_i| \le 1$, $|I(A)| = |A|$. Since $|B| > |A|$, there must exist an index $m \in I(B) \setminus I(A)$. This means point $x_m$ is covered by $B$ but not by $A$. Let $e = (x_m, j) \in B \cap S_m$. Since $m \notin I(A)$, $A \cap S_m = \emptyset$, so $|(A \cup \{e\}) \cap S_m| = 1$. For all other $i \neq m$, $|(A \cup \{e\}) \cap S_i| = |A \cap S_i| \le 1$. Thus $A \cup \{e\} \in \mathcal{I}$.
\end{enumerate}
\end{proof}



\subsection{Proof of Theorem \ref{theorem: fixed_core_opt}}
\label{appendix:proof-fixed-core-opt}

\begin{proof}
The one-pass rule is a greedy algorithm on the partition matroid $(\mathcal{S}_{\mathrm{g}},\mathcal I)$ with additive weights $w(x_i,j)$. The feasible assignments with $|A|=n$ are exactly the bases of this partition matroid. Therefore, by the Rado-Edmonds theorem, greedy returns a maximum-weight basis, which is globally optimal for
\begin{equation}
\max_{A\in\mathcal I,\,|A|=n}\sum_{e\in A}w(e).
\end{equation}
\end{proof}

\subsection{Proof of Proposition \ref{prop:near_opt_population_obj}}
\label{appendix:proof-near-opt-population}

\begin{proof}
For any pair $(x_i,j)$,
\begin{equation}
    |w^\star(x_i,j)-\hat w(x_i,j)|
    =|\langle \phiMap(x_i),\mu_j-\hat\mu_j\rangle_{\Hspace}|
    \le \|\phiMap(x_i)\|_{\Hspace}\,\|\mu_j-\hat\mu_j\|_{\Hspace}
    \le \varepsilon.
\end{equation}
Hence for any feasible $A$ with $|A|=n$,
\begin{equation}
    |W^\star(A)-\hat W(A)|
    \le \sum_{(x_i,j)\in A}|w^\star(x_i,j)-\hat w(x_i,j)|
    \le n\varepsilon.
\end{equation}
Now decompose
\begin{align}
    W^\star(A^\star)-W^\star(\hat A)
    &= \bigl(W^\star(A^\star)-\hat W(A^\star)\bigr)
    +\bigl(\hat W(A^\star)-\hat W(\hat A)\bigr)
    +\bigl(\hat W(\hat A)-W^\star(\hat A)\bigr) \\
    &\le n\varepsilon + 0 + n\varepsilon
    = 2n\varepsilon,
\end{align}
where the middle term is non-positive because $\hat A$ maximizes $\hat W$ over $\{A\in\mathcal I:|A|=n\}$. This proves the claim.
\end{proof}

\subsection{Proof of Proposition \ref{cor:regret_decomposed_terms}}
\label{appendix:proof-regret-decomposed}

\begin{proof}
By definition, $\varepsilon=\max_j\|\mu_j-\hat\mu_j\|_{\Hspace}\le \max_j b_j$. Substituting this into Proposition \ref{prop:near_opt_population_obj} gives the result.
\end{proof}

\subsection{Proof of Theorem \ref{thm:exact_label_recovery}}
\label{appendix:proof-exact-label-recovery}

\begin{proof}
Fix any point $x_i$ and any cluster index $j$. By Cauchy-Schwarz and $\|\phiMap(x_i)\|_{\Hspace}\le 1$,
\begin{equation}
|\hat w(x_i,j)-w^\star(x_i,j)|
=|\langle\phiMap(x_i),\hat\mu_j-\mu_j\rangle_{\Hspace}|
\le \|\phiMap(x_i)\|_{\Hspace}\,\|\hat\mu_j-\mu_j\|_{\Hspace}
\le \varepsilon.
\end{equation}
Hence for the best class $y_i^\star$,
\begin{equation}
\hat w(x_i,y_i^\star)\ge w^\star(x_i,y_i^\star)-\varepsilon,
\end{equation}
while for any $j\neq y_i^\star$,
\begin{equation}
\hat w(x_i,j)\le w^\star(x_i,j)+\varepsilon.
\end{equation}
Therefore
\begin{equation}
\hat w(x_i,y_i^\star)-\hat w(x_i,j)
\ge \big(w^\star(x_i,y_i^\star)-w^\star(x_i,j)\big)-2\varepsilon
\ge \gamma_\star-2\varepsilon > 0.
\end{equation}
So $y_i^\star$ remains the unique maximizer of $\hat w(x_i,\cdot)$, i.e., $\hat y_i=y_i^\star$. Since $i$ is arbitrary, this holds for all points, proving exact recovery.
\end{proof}

\subsection{Proof of Proposition \ref{prop:exact_recovery_via_decomposition}}
\label{appendix:proof-exact-recovery-via-decomposition}

\begin{proof}
From Proposition \ref{cor:regret_decomposed_terms}, for every cluster index $j$,
\begin{equation}
\|\mu_j-\hat\mu_j\|_{\Hspace}\le b_j,
\end{equation}
thus
\begin{equation}
\varepsilon:=\max_{j\in\mathcal K}\|\mu_j-\hat\mu_j\|_{\Hspace}
\le \max_{j\in\mathcal K} b_j.
\end{equation}
If $\max_j b_j < \gamma_\star/2$, then $\varepsilon < \gamma_\star/2$. Applying Theorem \ref{thm:exact_label_recovery} yields $\hat y_i=y_i^\star$ for all $i$.
\end{proof}

\section{Algorithmic Details of CaD Clustering Methods}
\label{appendix:algorithm-details}

This section details how Kernel-Bounded Clustering (KBC) \citep{zhang2025kernel,zhang2025kernel2}, Isolation Distributional Kernel Clustering (IDKC) \citep{zhu2023kernel}, and Point-Set Kernel Clustering (psKC) \citep{ting2022point} instantiate the same high-level pipeline used in our analysis: (i) construct cluster-as-distribution proxies in RKHS, and (ii) assign each point by maximizing similarity (distributional kernel) to one of these proxies. Despite their differences in proxy construction, KBC, psKC, and IDKC share a fundamental common structure: each method assigns data points based on distributional kernel similarity. 

\subsection{KBC (Kernel-Bounded Clustering)}
\begin{figure}[h]
    \centering
\includegraphics[width=0.98\linewidth]{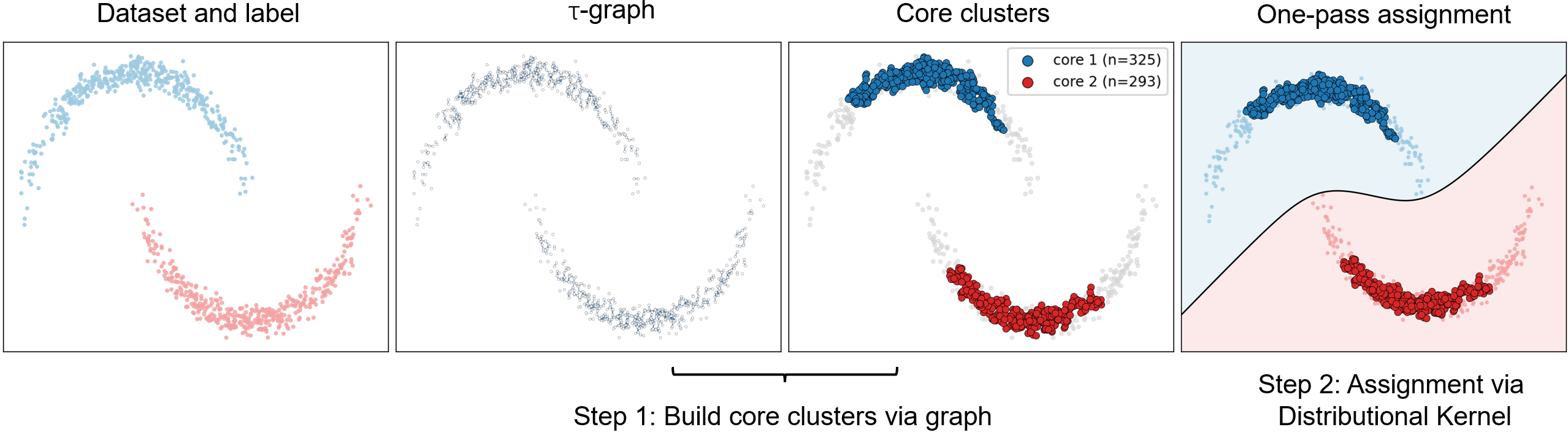}
    \caption{Demonstrate on how KBC works on Two-Moons dataset.}
    \label{fig:kbc demo}
\end{figure}
KBC instantiates the cluster-as-distribution clustering pipeline through
core extraction followed by one-pass assignment, as follows: 

\begin{enumerate}
    \item Build a kernel-threshold graph on a subsample of the data and extract representative high-density connected components as cores
    $G_1,\dots,G_k$.
    \item Compute their empirical embeddings
    \[
        \hat{\mu}_{G_j} = \frac{1}{|G_j|}\sum_{x \in G_j}\phi(x),
    \]
    And assign each point $x$ in the given dataset to
        $\arg\max_j \langle \phi(x), \hat{\mu}_{G_j}\rangle_{\mathcal H}$.
\end{enumerate}

The details are given in Algorithm \ref{alg:kbc} and Figure \ref{fig:kbc demo}.

\begin{algorithm}[ht]
\caption{Kernel Bounded Clustering}
\label{alg:kbc}
\begin{minipage}{0.96\linewidth}
\textbf{Input:} $D$ - dataset, $k$ - number of clusters, $s$ - sample size, $\tau$ - similarity threshold

\textbf{Output:} $\mathcal{C}=\{C_1,\ldots,C_k\}$

\vspace{0.4em}
\noindent\textbf{Step 1: Find the $k$ initial clusters}

\noindent From $D_s \subset D$, find the largest $k$ initial clusters $G_j$ via kernel $\kappa$ as follows: $\forall j\in[k]$,
\[
G_j=\{\mathbf{x},\mathbf{y}\in D \mid \text{there exists a chain } \mathbf{z}_1,\mathbf{z}_2,\cdots,\mathbf{z}_q;
\ \mathbf{z}_1=\mathbf{x},\ \mathbf{z}_q=\mathbf{y},\ \forall i,\ k_{\sigma}(\mathbf{z}_i,\mathbf{z}_{i+1})>\tau\}.
\]

\noindent if the number of clusters in $\{G_1,G_2,\ldots,G_j\}<k$ then

\hspace*{1.5em}Assert `Parameter $\tau$ is set too small!' and Exit

\noindent end if

\vspace{0.4em}
\noindent\textbf{Step 2: Assignment of data points to clusters}
\[
C_j=\{\mathbf{x}\in D \mid \arg\max_{i\in[1,k]} K(\delta(\mathbf{x}),\mathcal{P}_{G_i})=j\},\ \forall j\in[1,k].
\]

\vspace{0.4em}
\noindent{\color{gray}\textbf{Step 3: Refine $\mathcal{C}=\{C_1,\ldots,C_k\}$ to improve the objective:}}
\[
{\color{gray}\max_{\mathcal{C}} \sum_{C\in\mathcal{C}} \sum_{\mathbf{x}\in C} K(\delta(\mathbf{x}),\mathcal{P}_C).}
\]

\vspace{0.4em}
\noindent\textbf{return} $\mathcal{C}$
\end{minipage}
\end{algorithm}

\subsection{psKC and IDKC}
Both psKC and IDKC follow essentially the same cluster-as-distribution clustering process as KBC:
cluster-level proxies are first constructed in RKHS, and each point in the given dataset is then
assigned according to the maximum similarity to one of these proxies.
Their differences lie mainly in how the proxy is initialized and constructed.
psKC identifies the maximum-similarity point with respect to the dataset of all unassigned points as the seed for each proxy, and then grows each proxy via the same distributional kernel in a sequential order. IDKC identifies all $k$ seeds for $k$ proxies and then assign points to form $k$ clusters in parallel via the same distributional kernel.

Hence, from the perspective of our analysis, all psKC, IDKC and KBC \emph{employ the
same point assignment procedure to perform the clustering via a distributional kernel to achieve the same clustering objective}. The main difference is in the
cluster core/proxy initialization.

\subsection{Connection to Our Error Analysis}

Although the three methods differ in how they instantiate kernels and core/cluster representatives, they all use the same assignment based on a distributional kernel, and they all fit the decomposition analyzed in this paper:
\begin{equation}
    \|\mu_\mathcal{P}-\hat\mu_G\|_{\mathcal H}
    \leq
    \|\mu_\mathcal{P}-\mu_\mathcal{Q}\|_{\mathcal H}
    +
    \|\mu_\mathcal{Q}-\hat\mu_U\|_{\mathcal H}
    +
    \|\hat\mu_U-\hat\mu_G\|_{\mathcal H}.
\end{equation}
Therefore, the truncation, estimation, and core-selection terms provide a common lens for understanding approximation quality across KBC, psKC, and IDKC.
\section{Additional Notes}


\subsection{The rationale behind $|A\cap S_i|\leq 1$}
\label{appendix: mr}

$|A\cap S_i|\leq 1$ indicates that a point can be assigned to one cluster or no cluster. But in practice, each point should be assigned to just one cluster. The reason that $|A\cap S_i|\leq 1$ is valid lies in the interaction between the weight function (in Section \ref{sect: opt assignment}) and the global objective. The weight $w(x_i,j)=\langle \phi(x_i), \hat{\mu}_{G_j}\rangle_{\mathcal H}$ evaluates the compatibility between point $x_i$ and cluster $j$, given fixed-core cluster embeddings. The overall objective $W(A)=\sum_{e\in A} w(e)$ is then purely additive over assignment elements, with no cross-point interaction terms. As a result, once the feasible set is constrained by the partition matroid structure, global optimization reduces to selecting exactly one maximum-weight element from each block $S_i$. So in practice, $\forall i\in [n], |A\cap S_i|=1$, i.e., $|A|=n$. This is precisely why the one-pass assignment is not merely heuristic but coincides with the maximum weight basis for fixed core clusters. 

%

\subsection{Core Bias Analysis for KBC}
\label{app:kbc cbias}

Here we give a KBC interpretation of the core bias term. The
generic Lemma~\ref{lemma: cbias} in the main text bounds the core bias through a
sample-to-core coupling radius. Here we use the specific structure of KBC:
for a Gaussian kernel, thresholding kernel values is equivalent to building a
neighborhood graph, and KBC selects large connected components of that graph.
This controls the core bias term by the mass of a boundary
band of a density superlevel set.

\subsubsection{Reminder of our setup}

Let \(\mathcal{P}_j\) be the distribution of cluster \(j\), with density \(f_j\)
on \(\mathbb R^d\). For a density threshold \(\lambda>0\), define the density
superlevel set
\[
S_{j,\lambda}=\{x\in \mathbb R^d: f_j(x)\ge \lambda\}.
\]
Let \(\mathcal{Q}_j=\mathcal{P}_j(\cdot\mid S_{j,\lambda})\), and let
\[
U_j=\{q_1,\ldots,q_{s_j}\}\sim \mathcal{Q}_j^{s_j}
\]
be dense region samples.

For the Gaussian kernel
\[
k_\sigma(x,y)=\exp\!\left(-\frac{\|x-y\|^2}{2\sigma^2}\right),
\]
the KBC threshold condition \(k_\sigma(x,y)>\tau\) is equivalent to
\[
\|x-y\|<\varepsilon_\tau,
\qquad
\varepsilon_\tau=\sigma\sqrt{2\log(1/\tau)}.
\]
Thus KBC thresholding with \(\tau\) is equivalent, for the Gaussian kernel, to
building an \(\varepsilon_\tau\)-neighborhood graph.

Let \(G_j(\tau)\subseteq U_j\) denote the empirical core component selected by
KBC for cluster \(j\), restricted to the dense region samples \(U_j\). Define
the omitted dense region samples
\[
R_j(\tau)=U_j\setminus G_j(\tau),
\]
and the empirical embeddings
\[
\hat\mu_{U_j}=\frac{1}{s_j}\sum_{q\in U_j}\phi(q),
\qquad
\hat\mu_{G_j(\tau)}=\frac{1}{|G_j(\tau)|}\sum_{g\in G_j(\tau)}\phi(g),
\]
where \(\phi(x)=k_\sigma(x,\cdot)\) is the RKHS feature map.

\subsubsection{Boundary Band and KBC Coverage}

\begin{figure}[h]
\centering
\begin{tikzpicture}[scale=0.88, every node/.style={font=\small}]
  \def\a{3.15}
  \def\b{1.75}
  \def\e{0.48}
  \def\ai{2.67}
  \def\bi{1.27}

  \fill[blue!8] (0,0) ellipse ({\a} and {\b});
  \fill[red!10, even odd rule] (0,0) ellipse ({\a} and {\b}) (0,0) ellipse ({\ai} and {\bi});
  \fill[green!14] (0,0) ellipse ({\ai} and {\bi});
  \draw[blue!70!black, thick] (0,0) ellipse ({\a} and {\b});
  \draw[green!55!black, thick, dashed] (0,0) ellipse ({\ai} and {\bi});

  \draw[gray!60, line width=0.7pt] (-1.35,0.35) -- (-0.75,0.65) -- (-0.05,0.55) -- (0.55,0.30) -- (1.05,-0.15);
  \draw[gray!60, line width=0.7pt] (-0.75,0.65) -- (-0.60,-0.20) -- (-0.05,0.55);
  \draw[gray!60, line width=0.7pt] (-0.60,-0.20) -- (0.20,-0.42) -- (0.55,0.30);

  \foreach \p in {(-1.35,0.35),(-0.75,0.65),(-0.05,0.55),(0.55,0.30),(1.05,-0.15),(-0.60,-0.20),(0.20,-0.42)} {
    \fill[green!55!black] \p circle (2.1pt);
  }

  \foreach \p in {(-2.70,0.12),(-2.25,0.95),(-1.15,1.43),(1.72,1.05),(2.72,-0.20),(1.60,-1.18),(-1.95,-1.05)} {
    \draw[red!75!black, fill=red!18, thick] \p circle (2.3pt);
  }

  \draw[<->, very thick] (0,-\b) -- (0,-\bi);
  \node[right] at (0.05,-1.51) {$\varepsilon_\tau$};
  \draw[densely dotted] (-0.22,-\b) -- (0.22,-\b);
  \draw[densely dotted] (-0.22,-\bi) -- (0.22,-\bi);

  \node[blue!70!black] at (-2.35,1.95) {$S_{j,\lambda}$};
  \node[green!50!black] at (0.15,1.45) {$S_{j,\lambda}^{-\varepsilon_\tau}$};
  \node[red!75!black] at (2.65,-1.50) {$\partial_{\varepsilon_\tau}S_{j,\lambda}$};
  \node[align=left] at (-0.05,-2.45) {
    green dots: samples covered by \(G_j(\tau)\) \\
    red dots: samples that may be missed by \(G_j(\tau)\)
  };
\end{tikzpicture}
\caption{Levelset view of KBC coverage. The blue region is the density
superlevel set \(S_{j,\lambda}\). The dashed green region is the eroded
interior \(S_{j,\lambda}^{-\varepsilon_\tau}\), where
\(\varepsilon_\tau=\sigma\sqrt{2\log(1/\tau)}\) is the graph radius induced by
the KBC threshold \(\tau\). Under the valid level set regime, KBC covers the
interior samples in its selected core \(G_j(\tau)\); samples not covered by the
core can only lie in the boundary band \(\partial_{\varepsilon_\tau}S_{j,\lambda}\).}
\label{fig:kbc-boundary-band}
\end{figure}

\begin{definition}[Interior and boundary band]
For \(\varepsilon>0\), define the \(\varepsilon\)-interior of the superlevel
set \(S_{j,\lambda}\) by
\[
S_{j,\lambda}^{-\varepsilon}
=
\{x\in S_{j,\lambda}: d(x,S_{j,\lambda}^c)>\varepsilon\}.
\]
The corresponding inner boundary band is
\[
\partial_\varepsilon S_{j,\lambda}
=
S_{j,\lambda}\setminus S_{j,\lambda}^{-\varepsilon}
=
\{x\in S_{j,\lambda}: d(x,S_{j,\lambda}^c)\le \varepsilon\},
\]
\end{definition}
where $\mathrm{dist}(x,S_{j,\lambda}^c)$ is the smallest distance between $x$ to the points in $S_{j,\lambda}^c$.

\begin{assumption}[Valid KBC level set regime]
\label{ass:valid-kbc-level-set-regime}
For the threshold \(\tau\), let
\(\varepsilon_\tau=\sigma\sqrt{2\log(1/\tau)}\). Assume that KBC selects a
nonempty component \(G_j(\tau)\subseteq U_j\) satisfying
\[
U_j\cap S_{j,\lambda}^{-\varepsilon_\tau}\subseteq G_j(\tau).
\]
Equivalently, all dense region samples at distance more than
\(\varepsilon_\tau\) from the level set boundary are covered by the selected
KBC core; missed dense region samples may occur only in the
\(\varepsilon_\tau\)-boundary band.
\end{assumption}

\begin{remark}
Assumption~\ref{ass:valid-kbc-level-set-regime} is the formal point where the
KBC threshold enters the analysis. It is expected to hold when
\(\varepsilon_\tau\) is large enough to connect the empirical samples inside
one density component but small enough not to merge distinct density
components.
\end{remark}

\subsubsection{KBC Core Bias}

\begin{lemma}[Coverage decomposition]
\label{lem:kbc-coverage-decomposition}
Let \(G_j(\tau)\subseteq U_j\), let \(R_j(\tau)=U_j\setminus G_j(\tau)\), and
define
\[
\alpha_j(\tau)=\frac{|R_j(\tau)|}{|U_j|}
=1-\frac{|G_j(\tau)|}{s_j}.
\]
If \(R_j(\tau)\neq\emptyset\), then
\[
\left\|\hat\mu_{U_j}-\hat\mu_{G_j(\tau)}\right\|_{\mathcal H}
=
\alpha_j(\tau)
\left\|\hat\mu_{R_j(\tau)}-\hat\mu_{G_j(\tau)}\right\|_{\mathcal H}.
\]
If \(R_j(\tau)=\emptyset\), both sides are zero.
\end{lemma}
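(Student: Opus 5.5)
The identity is purely algebraic. The plan is to write $\hat\mu_{U_j}$ as a convex combination of the embeddings of the two disjoint pieces $G_j(\tau)$ and $R_j(\tau)$, then subtract $\hat\mu_{G_j(\tau)}$.

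Since $G_j(\tau)\subseteq U_j$ and $R_j(\tau)=U_j\setminus G_j(\tau)$, the set $U_j$ is the disjoint union of $G_j(\tau)$ and $R_j(\tau)$. Hence $s_j=|G_j(\tau)|+|R_j(\tau)|$. Splitting the sum defining $\hat\mu_{U_j}$ over this partition gives
\[
\hat\mu_{U_j}=\frac{1}{s_j}\Bigl(\sum_{g\in G_j(\tau)}\phi(g)+\sum_{q\in R_j(\tau)}\phi(q)\Bigr)
=(1-\alpha_j(\tau))\,\hat\mu_{G_j(\tau)}+\alpha_j(\tau)\,\hat\mu_{R_j(\tau)}.
\]
This uses $|G_j(\tau)|/s_j=1-\alpha_j(\tau)$ and $|R_j(\tau)|/s_j=\alpha_j(\tau)$. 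It also requires both $G_j(\tau)$ and $R_j(\tau)$ to be nonempty, so that both empirical means are defined. $G_j(\tau)$ is nonempty because it is a selected core, which Assumption~\ref{ass:valid-kbc-level-set-regime} stipulates. $R_j(\tau)$ is nonempty in the first case of the statement.

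Subtracting $\hat\mu_{G_j(\tau)}$ from both sides yields
\[
\hat\mu_{U_j}-\hat\mu_{G_j(\tau)}=\alpha_j(\tau)\bigl(\hat\mu_{R_j(\tau)}-\hat\mu_{G_j(\tau)}\bigr).
\]
Taking RKHS norms and using homogeneity with $\alpha_j(\tau)\ge 0$ gives the claimed equality.

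For the case $R_j(\tau)=\emptyset$, we have $G_j(\tau)=U_j$, so $\hat\mu_{U_j}=\hat\mu_{G_j(\tau)}$ and the left side is $0$. The right side is also $0$, since $\alpha_j(\tau)=0$; the undefined factor $\hat\mu_{\emptyset}$ is read as multiplied by zero, as the statement intends.

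There is no real obstacle here. The only point needing care is the well-definedness of the empirical means, which is exactly why the statement separates out the degenerate case.
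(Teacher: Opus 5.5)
Your proof is correct and follows the paper's argument: split $\hat\mu_{U_j}$ over the disjoint union $U_j=G_j(\tau)\cup R_j(\tau)$ into $(1-\alpha_j(\tau))\hat\mu_{G_j(\tau)}+\alpha_j(\tau)\hat\mu_{R_j(\tau)}$, subtract $\hat\mu_{G_j(\tau)}$ and take norms, treating $R_j(\tau)=\emptyset$ separately. One small point is that nonemptiness of $G_j(\tau)$ is already implicit in the lemma, since $\hat\mu_{G_j(\tau)}$ must be defined, so you need not take it from Assumption~\ref{ass:valid-kbc-level-set-regime}, which this lemma does not invoke.
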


\begin{proof}
Write \(m_j=|G_j(\tau)|\), \(r_j=|R_j(\tau)|\), and \(s_j=m_j+r_j\). If
\(r_j=0\), then \(U_j=G_j(\tau)\), and the result is immediate. Otherwise,
since \(U_j=G_j(\tau)\cup R_j(\tau)\) is a disjoint union,
\[
\hat\mu_{U_j}
=
\frac{m_j}{s_j}\hat\mu_{G_j(\tau)}
+
\frac{r_j}{s_j}\hat\mu_{R_j(\tau)}.
\]
Subtracting \(\hat\mu_{G_j(\tau)}\) gives
\[
\hat\mu_{U_j}-\hat\mu_{G_j(\tau)}
=
\frac{r_j}{s_j}
\left(\hat\mu_{R_j(\tau)}-\hat\mu_{G_j(\tau)}\right),
\]
and taking norms yields the claim.
\end{proof}

\begin{lemma}[Boundary band control of coverage loss]
\label{lem:kbc-boundary-coverage}
Under Assumption~\ref{ass:valid-kbc-level-set-regime},
\[
\alpha_j(\tau)
\le
\frac{1}{s_j}
\sum_{i=1}^{s_j}
\mathbf 1\{q_i\in \partial_{\varepsilon_\tau}S_{j,\lambda}\}.
\]
\end{lemma}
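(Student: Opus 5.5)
The argument is a direct counting inequality. It uses only the definition of $\alpha_j(\tau)$ from Lemma~\ref{lem:kbc-coverage-decomposition} and the containment in Assumption~\ref{ass:valid-kbc-level-set-regime}; no probabilistic or kernel estimates are needed.

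First, rewrite the coverage loss as an indicator average over the dense-region samples:
\[
\alpha_j(\tau)=\frac{|R_j(\tau)|}{s_j}=\frac{1}{s_j}\sum_{i=1}^{s_j}\mathbf 1\{q_i\notin G_j(\tau)\}.
\]
This holds because $G_j(\tau)\subseteq U_j$ and $R_j(\tau)=U_j\setminus G_j(\tau)$. If $U_j$ contains repeated points, I would read everything at the level of sample indices so that the counts match.

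Second, for each $i$, establish the pointwise inequality
\[
\mathbf 1\{q_i\notin G_j(\tau)\}\le \mathbf 1\{q_i\in\partial_{\varepsilon_\tau}S_{j,\lambda}\}.
\]
Suppose $q_i\notin G_j(\tau)$. The assumption gives $U_j\cap S_{j,\lambda}^{-\varepsilon_\tau}\subseteq G_j(\tau)$, so by contraposition $q_i\notin S_{j,\lambda}^{-\varepsilon_\tau}$. Next I need $q_i\in S_{j,\lambda}$. This holds because $q_i\sim\mathcal{Q}_j=\mathcal{P}_j(\cdot\mid S_{j,\lambda})$ lies in $S_{j,\lambda}$ almost surely; I would either discard the null event or note that $U_j$ is taken to be supported in $S_{j,\lambda}$. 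Combining the two, $q_i\in S_{j,\lambda}\setminus S_{j,\lambda}^{-\varepsilon_\tau}=\partial_{\varepsilon_\tau}S_{j,\lambda}$, which is exactly the definition of the boundary band.

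Third, sum the pointwise inequality over $i$ and divide by $s_j$ to obtain the claimed bound.

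The only delicate point is the almost-sure membership $q_i\in S_{j,\lambda}$. The statement should be understood either almost surely or under the convention that the dense-region samples lie in the superlevel set. Everything else is set inclusion.
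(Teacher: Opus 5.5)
Your proof is correct and follows the paper's route. The assumption pushes every omitted sample out of the eroded interior, and since $U_j\subseteq S_{j,\lambda}$ this gives $R_j(\tau)\subseteq U_j\cap\partial_{\varepsilon_\tau}S_{j,\lambda}$; dividing cardinalities by $s_j$ yields the bound, and your index-level counting and almost-sure caveat on $q_i\in S_{j,\lambda}$ only make explicit what the paper leaves implicit.
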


\begin{proof}
Assumption~\ref{ass:valid-kbc-level-set-regime} implies that any dense region
sample missed by \(G_j(\tau)\) cannot belong to the eroded interior
\(S_{j,\lambda}^{-\varepsilon_\tau}\). Since all samples in \(U_j\) lie in
\(S_{j,\lambda}\), every missed sample must lie in
\(\partial_{\varepsilon_\tau}S_{j,\lambda}\). Therefore
\[
R_j(\tau)\subseteq U_j\cap \partial_{\varepsilon_\tau}S_{j,\lambda}.
\]
Dividing cardinalities by \(s_j\) proves the result.
\end{proof}

\begin{theorem}[KBC Core Bias]
\label{thm:kbc-level-set-core-bias}
Fix \(0<\delta<1\) and a threshold \(\tau\in(0,1)\), and let
\[
p_j(\tau)=\mathcal{Q}_j(\partial_{\varepsilon_\tau}S_{j,\lambda}),
\qquad
\varepsilon_\tau=\sigma\sqrt{2\log(1/\tau)}.
\]
With probability at least \(1-\delta\) over
\(U_j=\{q_i\}_{i=1}^{s_j}\sim \mathcal{Q}_j^{s_j}\), the following implication holds:
if Assumption~\ref{ass:valid-kbc-level-set-regime} holds for the sampled
\(U_j\), then
\[
\left\|\hat\mu_{U_j}-\hat\mu_{G_j(\tau)}\right\|_{\mathcal H}
\le
2p_j(\tau)
+
2\sqrt{\frac{\log(1/\delta)}{2s_j}}.
\]
\end{theorem}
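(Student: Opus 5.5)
Our plan is to chain Lemma~\ref{lem:kbc-coverage-decomposition} and Lemma~\ref{lem:kbc-boundary-coverage} with a one-sided Hoeffding bound on the empirical boundary-band frequency.

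First, by Lemma~\ref{lem:kbc-coverage-decomposition}, the core bias equals \(\alpha_j(\tau)\|\hat\mu_{R_j(\tau)}-\hat\mu_{G_j(\tau)}\|_{\mathcal H}\), or is zero when \(R_j(\tau)=\emptyset\). For the Gaussian kernel, \(\|\phi(x)\|_{\mathcal H}=1\). Both \(\hat\mu_{R_j(\tau)}\) and \(\hat\mu_{G_j(\tau)}\) are averages of unit vectors, so each has norm at most \(1\), and the triangle inequality gives \(\|\hat\mu_{R_j(\tau)}-\hat\mu_{G_j(\tau)}\|_{\mathcal H}\le 2\). Hence, deterministically,
\[
\left\|\hat\mu_{U_j}-\hat\mu_{G_j(\tau)}\right\|_{\mathcal H}\le 2\alpha_j(\tau).
\]
On the event that Assumption~\ref{ass:valid-kbc-level-set-regime} holds for the sampled \(U_j\), Lemma~\ref{lem:kbc-boundary-coverage} then gives
\[
\left\|\hat\mu_{U_j}-\hat\mu_{G_j(\tau)}\right\|_{\mathcal H}\le \frac{2}{s_j}\sum_{i=1}^{s_j} Z_i,
\qquad Z_i:=\mathbf 1\{q_i\in\partial_{\varepsilon_\tau}S_{j,\lambda}\}.
\]

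Second, the \(Z_i\) are i.i.d.\ Bernoulli variables with mean \(p_j(\tau)=\mathcal Q_j(\partial_{\varepsilon_\tau}S_{j,\lambda})\). The band \(\partial_{\varepsilon_\tau}S_{j,\lambda}\) is a fixed deterministic set: it depends only on \(f_j,\lambda,\sigma,\tau\), not on the sample. One-sided Hoeffding therefore gives, with probability at least \(1-\delta\),
\[
\frac{1}{s_j}\sum_i Z_i\le p_j(\tau)+\sqrt{\frac{\log(1/\delta)}{2s_j}}.
\]

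Finally, we intersect this high-probability event with the (sample-dependent) event on which the assumption holds. The theorem is phrased as an implication that holds with probability at least \(1-\delta\), so no union bound or conditioning on the assumption is needed: on the Hoeffding event, whenever the assumption holds, the two displays combine to give the stated bound \(2p_j(\tau)+2\sqrt{\log(1/\delta)/(2s_j)}\).

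The main point requiring care is this probabilistic bookkeeping. Conditioning on the assumption would bias the distribution of the \(Z_i\), because the assumption itself depends on \(U_j\). The implication form avoids this, since Hoeffding is applied unconditionally to a fixed set. We will also verify that \(\partial_{\varepsilon_\tau}S_{j,\lambda}\) is measurable: it is the intersection of \(S_{j,\lambda}\) with the closed set \(\{x: d(x,S_{j,\lambda}^c)\le\varepsilon_\tau\}\), which is measurable because \(d(\cdot,S_{j,\lambda}^c)\) is continuous.
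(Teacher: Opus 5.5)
Your proposal is correct and follows essentially the same route as the paper. Both arguments apply one-sided Hoeffding to the indicators of the fixed boundary band, reduce the core bias to $2\alpha_j(\tau)$ via Lemma~\ref{lem:kbc-coverage-decomposition} and the unit-norm bound, and bound $\alpha_j(\tau)\le \frac{1}{s_j}\sum_i Z_i$ under the level-set assumption, combining everything on the Hoeffding event. Your remarks on why the implication form avoids conditioning on the sample-dependent assumption, and on measurability of the band, are sound additions that the paper leaves implicit.
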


\begin{proof}
We separate the probabilistic part from the deterministic KBC coverage part.
The only random quantity that needs concentration is the empirical fraction
of dense region samples falling inside the boundary band.

\emph{Step 1: concentration of the empirical boundary band mass.}
For each dense region sample \(q_i\), define
\[
Z_i=\mathbf 1\{q_i\in \partial_{\varepsilon_\tau}S_{j,\lambda}\}.
\]
Since \(q_1,\ldots,q_{s_j}\) are i.i.d. samples from \(\mathcal{Q}_j\), the variables
\(Z_1,\ldots,Z_{s_j}\) are i.i.d. Bernoulli random variables. Their common
mean is the population boundary band mass:
\[
\mathbb E[Z_i]
=
\mathcal{Q}_j(\partial_{\varepsilon_\tau}S_{j,\lambda})
=
p_j(\tau).
\]
Hoeffding's inequality therefore gives, for every \(t>0\),
\[
\mathbb P\!\left(
\frac{1}{s_j}\sum_{i=1}^{s_j}Z_i-p_j(\tau)>t
\right)
\le
\exp(-2s_jt^2).
\]
Taking \(t=\sqrt{\log(1/\delta)/(2s_j)}\), we obtain the event
\(\mathcal E_\delta\):
\[
\frac{1}{s_j}\sum_{i=1}^{s_j}
\mathbf 1\{q_i\in \partial_{\varepsilon_\tau}S_{j,\lambda}\}
\le
p_j(\tau)+\sqrt{\frac{\log(1/\delta)}{2s_j}}.
\]
The probability of the complement of this event is at most
\[
\exp\!\left(
-2s_j\cdot \frac{\log(1/\delta)}{2s_j}
\right)
=
\exp(-\log(1/\delta))
=
\delta.
\]
Therefore \(\mathbb P(\mathcal E_\delta)\ge 1-\delta\). The rest of the proof
is deterministic on \(\mathcal E_\delta\) and under
Assumption~\ref{ass:valid-kbc-level-set-regime}.

\emph{Step 2: reduce core bias to coverage loss.}
If \(R_j(\tau)=\emptyset\), then
\(\hat\mu_{U_j}=\hat\mu_{G_j(\tau)}\), and the claimed bound is trivial.
Assume \(R_j(\tau)\neq\emptyset\). 
By Lemma~\ref{lem:kbc-coverage-decomposition},
\[
\left\|\hat\mu_{U_j}-\hat\mu_{G_j(\tau)}\right\|_{\mathcal H}
=
\alpha_j(\tau)
\left\|\hat\mu_{R_j(\tau)}-\hat\mu_{G_j(\tau)}\right\|_{\mathcal H}.
\]
For the Gaussian kernel, \(k_\sigma(x,x)=1\), so
\(\|\phi(x)\|_{\mathcal H}=1\). Hence every empirical mean embedding has
RKHS norm at most one, and
\[
\left\|\hat\mu_{R_j(\tau)}-\hat\mu_{G_j(\tau)}\right\|_{\mathcal H}
\le 2.
\]
Thus
\[
\left\|\hat\mu_{U_j}-\hat\mu_{G_j(\tau)}\right\|_{\mathcal H}
\le 2\alpha_j(\tau).
\]

\emph{Step 3: control the coverage loss by the boundary band.}
We now use the KBC level-set assumption. By
Assumption~\ref{ass:valid-kbc-level-set-regime}, every sample in the eroded
interior \(U_j\cap S_{j,\lambda}^{-\varepsilon_\tau}\) is included in the
selected core \(G_j(\tau)\). Hence any sample omitted by the core must lie
outside this eroded interior. Since all samples in \(U_j\) are drawn from
\(\mathcal{Q}_j=\mathcal{P}_j(\cdot\mid S_{j,\lambda})\), they lie in \(S_{j,\lambda}\). Therefore
the omitted set satisfies
\[
R_j(\tau)
=
U_j\setminus G_j(\tau)
\subseteq
U_j\cap
\bigl(S_{j,\lambda}\setminus S_{j,\lambda}^{-\varepsilon_\tau}\bigr)
=
U_j\cap \partial_{\varepsilon_\tau}S_{j,\lambda}.
\]
Taking cardinalities and dividing by \(s_j=|U_j|\) gives
\[
\alpha_j(\tau)
=
\frac{|R_j(\tau)|}{s_j}
\le
\frac{|U_j\cap \partial_{\varepsilon_\tau}S_{j,\lambda}|}{s_j}
=
\frac{1}{s_j}\sum_{i=1}^{s_j}Z_i.
\]
On \(\mathcal E_\delta\), the empirical average of the \(Z_i\)'s is at most
\[
\frac{1}{s_j}\sum_{i=1}^{s_j}Z_i
\le
p_j(\tau)+\sqrt{\frac{\log(1/\delta)}{2s_j}}.
\]
Combining this with the previous display yields
\[
\alpha_j(\tau)
\le
p_j(\tau)+\sqrt{\frac{\log(1/\delta)}{2s_j}}.
\]
Finally, Step 2 gives
\[
\left\|\hat\mu_{U_j}-\hat\mu_{G_j(\tau)}\right\|_{\mathcal H}
\le
2\alpha_j(\tau)
\le
2p_j(\tau)
+
2\sqrt{\frac{\log(1/\delta)}{2s_j}}.
\]
Thus the claimed implication holds on \(\mathcal E_\delta\). Since
\(\mathbb P(\mathcal E_\delta)\ge 1-\delta\), the theorem follows.
\end{proof}

\begin{assumption}[Linear boundary band mass]
\label{ass:kbc-linear-boundary}
There exists a constant \(C_j>0\) and a radius \(\varepsilon_0>0\) such that,
for every \(0<\varepsilon\le \varepsilon_0\),
\[
\mathcal{Q}_j(\partial_\varepsilon S_{j,\lambda})\le C_j\varepsilon.
\]
\end{assumption}

\begin{corollary}[Explicit dependence on \(\tau\)]
\label{cor:kbc-tau-explicit}
Fix \(0<\delta<1\) and \(\tau\in(0,1)\). Suppose
Assumption~\ref{ass:kbc-linear-boundary} holds and
\(\varepsilon_\tau=\sigma\sqrt{2\log(1/\tau)}\le \varepsilon_0\). Then, with
probability at least \(1-\delta\), the following implication holds: if
Assumption~\ref{ass:valid-kbc-level-set-regime} holds for the sampled \(U_j\),
then
\[
\left\|\hat\mu_{U_j}-\hat\mu_{G_j(\tau)}\right\|_{\mathcal H}
\le
2C_j\sigma\sqrt{2\log(1/\tau)}
+
2\sqrt{\frac{\log(1/\delta)}{2s_j}}.
\]
\end{corollary}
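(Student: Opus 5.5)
This corollary follows directly from Theorem~\ref{thm:kbc-level-set-core-bias}; the only extra ingredient is to replace the boundary-band mass $p_j(\tau)$ by its linear bound from Assumption~\ref{ass:kbc-linear-boundary}. The probabilistic content is already in the theorem, so I would reuse the same high-probability event rather than build a new one.

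First, fix $\delta$ and $\tau$ and let $\mathcal E_\delta$ be the event from Step~1 of the proof of Theorem~\ref{thm:kbc-level-set-core-bias}, on which the empirical boundary-band fraction is at most $p_j(\tau)+\sqrt{\log(1/\delta)/(2s_j)}$. That theorem guarantees $\mathbb P(\mathcal E_\delta)\ge 1-\delta$. It also shows that on $\mathcal E_\delta$, whenever Assumption~\ref{ass:valid-kbc-level-set-regime} holds for the sampled $U_j$,
\[
\left\|\hat\mu_{U_j}-\hat\mu_{G_j(\tau)}\right\|_{\mathcal H}\le 2p_j(\tau)+2\sqrt{\frac{\log(1/\delta)}{2s_j}}.
\]

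Second, bound $p_j(\tau)$ deterministically. By definition $p_j(\tau)=\mathcal{Q}_j(\partial_{\varepsilon_\tau}S_{j,\lambda})$ with $\varepsilon_\tau=\sigma\sqrt{2\log(1/\tau)}$. Since $\tau\in(0,1)$, we have $\varepsilon_\tau>0$, and by hypothesis $\varepsilon_\tau\le\varepsilon_0$. Hence $\varepsilon=\varepsilon_\tau$ lies in the range where Assumption~\ref{ass:kbc-linear-boundary} applies, which gives $p_j(\tau)\le C_j\sigma\sqrt{2\log(1/\tau)}$. Substituting this into the display above yields the claimed bound on the same event $\mathcal E_\delta$, so the probability guarantee carries over unchanged.

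There is no real obstacle here. The only points to check are the following.
\begin{itemize}
\item $\varepsilon_\tau$ must be strictly positive and at most $\varepsilon_0$, so that the assumption covers it; both hold as shown above.
\item The implication structure ``with probability $1-\delta$, if the level-set assumption holds then the bound holds'' must be preserved. It is, because the substitution of $p_j(\tau)$ is deterministic and does not depend on the sample.
\end{itemize}
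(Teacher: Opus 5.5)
Your proposal is correct and follows the paper's proof: apply Theorem~\ref{thm:kbc-level-set-core-bias} on its high-probability event, then bound $p_j(\tau)=\mathcal{Q}_j(\partial_{\varepsilon_\tau}S_{j,\lambda})\le C_j\varepsilon_\tau$ deterministically via Assumption~\ref{ass:kbc-linear-boundary}. Your explicit checks that $0<\varepsilon_\tau\le\varepsilon_0$ and that the implication structure survives the substitution are details the paper's one-line proof leaves implicit.
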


\begin{proof}
Apply Theorem~\ref{thm:kbc-level-set-core-bias} and use
Assumption~\ref{ass:kbc-linear-boundary} with
\(\varepsilon=\varepsilon_\tau\):
\[
p_j(\tau)
=\mathcal{Q}_j(\partial_{\varepsilon_\tau}S_{j,\lambda})
\le C_j\varepsilon_\tau
=C_j\sigma\sqrt{2\log(1/\tau)}.
\]
\end{proof}

\subsubsection{Relation to the Generic Coupling Bound}

The generic core bias Lemma \ref{lemma: cbias} in the main text gives $\frac{r}{\sigma}+B$ whenever the dense samples can be coupled to the selected core with Euclidean
radius \(r\) and imbalance \(B\). The level set certificate above is
complementary rather than universally tighter: it is KBC-specific because
\(\tau\) determines the graph radius \(\varepsilon_\tau\), which determines
the selected empirical density component and the boundary band mass
\(\mathcal{Q}_j(\partial_{\varepsilon_\tau}S_{j,\lambda})\).

When both assumptions are available, one can use the combined certificate
\[
\|\hat\mu_{U_j}-\hat\mu_{G_j(\tau)}\|_{\mathcal H}
\le
\min\!\left\{
\frac{r_j}{\sigma}+B_j,\;
2p_j(\tau)
+
2\sqrt{\frac{\log(1/\delta)}{2s_j}}
\right\}.
\]

\begin{remark}[Analysis of \(\tau\)]
For the Gaussian kernel, choosing \(\tau\) is equivalent to choosing the graph
radius \(\varepsilon_\tau=\sigma\sqrt{2\log(1/\tau)}\). A large \(\tau\)
gives a small radius and may fragment one dense component; a small \(\tau\)
gives a large radius and may merge different components. The useful regime is
the middle one: the graph connects samples inside each density superlevel set
but does not connect across low density gaps. In this regime, KBC can miss only
boundary band samples, so the core-bias term is controlled by the boundary band
mass \(p_j(\tau)=\mathcal{Q}_j(\partial_{\varepsilon_\tau}S_{j,\lambda})\) plus the
finite sample term \(O(s_j^{-1/2})\). Thus the theorem is a level set
certificate for appropriate \(\tau\), not an unconditional guarantee for every
threshold.
\end{remark}


\subsection{Limitations and Future Work}
\label{appendix: future-work-and-limitation}
Our theoretical analyses are conducted with Gaussian kernel. In practice, CaD clustering \citep{ting2022point,ting2026achieve,zhang2025kernel2,zhu2023kernel} is more effective with adaptive
kernels such as Isolation Kernel \citep{ting2018isolation}. The current theory explains the CaD clustering via Gaussian kernel only rather than providing a
complete guarantee for all practical kernel choices. Extending the analysis to
adaptive kernels is an important direction for future work.

The geometric coverage assumptions required for the core bias bound (Lemma \ref{lemma: cbias} and Assumption \ref{ass:valid-kbc-level-set-regime}) are highly sensitive to the curse of dimensionality when using a standard Euclidean-based Gaussian kernel. In high-dimensional spaces, satisfying these conditions without an exponentially growing sample size is practically prohibitive. This limitation is one reason why practical CaD clustering methods often use Isolation Kernel rather than Gaussian kernel: A recent work \citep{ting2024possible} shows that Isolation Kernel is the only measure that has the distinguishability property in high-dimensional space---enabling meaningful exact nearest neighbor search in high dimensional space, as opposed to other measures that produce meaningless nearest neighbors \citep{NN-meaningful-1999}.

An important future direction is to explore the theoretical limits of CaD clustering. Existing impossibility results for clustering \citep{kleinberg2002impossibility} are formulated
under a set-oriented view, where a cluster is treated as a set of similar points satisfying
certain abstract axioms. This set-oriented view and its resultant impossibility theorem are a mismatch to the CaD clustering, where
each cluster is treated as a set of i.i.d. points sampled from an unknown distribution. The empirical evidence in literature \citep{ting2022point,zhang2025kernel,zhang2025kernel2,zhu2023kernel,ting2026achieve} and the theory established in this paper point to a potential Possibility Theorem of Clustering, should the premise is cluster-as-distribution. 

\section{Experiment Settings}
\label{appendix: experiment setting}

All experiments reported in this paper were run on a Windows 10 machine
(x64) equipped with an Intel Core i7-1360P processor and 32~GB RAM. The synthetic experiments
reported here are based on CPU; the integrated GPU on the machine is Intel
Iris Xe Graphics. All the codes are available in \url{https://github.com/IsolationKernel/Codes}.


All synthetic experiments use the Gaussian kernel
$k_\sigma(x,y)=\exp(-\|x-y\|^2/(2\sigma^2))$ and set the number of clusters to the ground-truth value. We evaluate sample sizes
$n \in \{50, 100, 200, 500, 1000, 2000, 5000, 10000\}$.

For the controlled synthetic diagnostics in the main text, KBC
hyperparameters are selected by an \emph{oracle} protocol. We search over
the finite grid of Gaussian bandwidths and graph thresholds,
\[
\sigma \in \{0.3, 0.5, 1.0, 2.0, 5.0\},
\qquad
\tau \in \{0.1, 0.2, \ldots, 0.9\},
\]
and select, for each dataset, the pair $(\sigma,\tau)$ that maximizes NMI.
These oracle-tuned settings are then reused in the error decomposition plots.
Accordingly, the synthetic results should be interpreted as controlled
validation of the theory under favorable hyperparameter choices, rather than
as a claim about fully unsupervised model selection.

The evaluation metric is normalized mutual information (NMI). Ground truth
labels are used only for reporting NMI and for oracle selection of
$(\sigma,\tau)$ on synthetic data; they are not used by KBC during core
extraction or one-pass assignment. In the decomposition experiments,
truncation bias, estimation bias, core bias, and total error are computed via
their empirical RKHS / MMD estimators using the same Gaussian kernel, and the
theoretical upper bounds are evaluated with confidence parameter
$\delta=0.05$.

For the appendix sensitivity studies, Stage~1 is run on a random $70\%$
subsample, repeated over $10$ resamples, with \texttt{post\_process=False}
to isolate the effect of the initial threshold $\tau$. In the one-pass versus
iterative comparison, KBC is deterministic once $(\sigma,\tau)$ is fixed,
whereas Kernel $k$-means is run with up to $10$ random restarts and $100$
iterations, and we report its best NMI under the same Gaussian kernel family.


\subsection{Dataset Descriptions}
\label{app:datasets}

All five datasets are 2D and consist of $n$ points drawn i.i.d. from the corresponding distribution.

\begin{description}[leftmargin=1.5em, labelindent=0pt]

  \item[Two-Moons.]
    Two interleaved half-moon arcs, each containing roughly half the points.
    The clusters are non-convex and non-linearly separable, making this a
    canonical benchmark for kernel-based methods.
    Noise is added by sampling each point from a thin Gaussian tube centred on
    the arc.

  \item[Concentric-Rings.]
    Two nested annular clusters of similar width and mass.
    The inner ring is fully enclosed by the outer ring, so any method relying
    on convexity or linear separation will fail.
    Points are drawn uniformly on each annulus.

  \item[G-Strip.]
    One elongated cluster (the ``strip'') and one compact
    circular cluster.

  \item[EqSize-Gauss.]
    Two isotropic Gaussian clusters of equal size placed close together so
    that their supports overlap substantially.
    This tests the algorithm's ability to resolve clusters that are
    well separated in density but not in raw Euclidean distance.

  \item[Imbal-Gauss.]
    Three isotropic Gaussian clusters with a severe size imbalance
    (approximate ratio $8:2:1$).
    The minority cluster is easily overwhelmed by majority cluster points in
    density-based approaches.
\end{description}

For the additional qualitative comparison in
Figure~\ref{fig:clustering_vis}, we also include three benchmark
datasets taken directly from prior literature. Spiral and AC are the official synthetic
benchmarks used in Fig.~4 of the IDKC paper: Spiral consists of three
intertwined non-convex spiral arms, while AC is a highly non-convex benchmark
with multiple curved and adjacent components that is designed to stress the
geometry bias of centroid or variance driven methods. Two-Strips is the
harder official benchmark released with the KBC paper; it contains two
elongated strip-like clusters arranged in a way that makes their separation
challenging for standard Kernel $k$-means. On these three datasets, we keep the
original ground truth labels and report qualitative clustering assignments
together with NMI.

\section{Additional Empirical Results on Gaussian Distribution}
\label{appendix: additional experiments}

\subsection{Empirical Behavior of Truncation Bias under Different $\sigma$ and $\lambda$}
\label{appendix:tbias-empirical}

To complement Lemma \ref{lemma: tbias}, we study $\mathcal{P}=\mathcal N(0,I)$ in $\mathbb{R}^2$ with dynamic $\eta$ and compare
\begin{equation}
\text{Empirical Truncation Bias} = \|\mu_\mathcal{P}-\mu_\mathcal{Q}\|_{\mathcal{H}},\text{ Leading Term of Upper Bound} = \frac{\eta}{\sigma}\|\mathbb{E}[\mathcal{Q}]-\mathbb{E}[\mathcal{T}]\|.
\end{equation}
Figure \ref{fig: truncation_sigma_lambda_appendix} shows the expected asymptotic pattern for $\sigma \in \{0.001,0.01,0.1,1,10,100,1000\}$: the gap is larger at small $\sigma$ and becomes small at large $\sigma$. Across $\lambda$, the curves can be non-monotonic because both $\eta$ and $\|\mathbb{E}[\mathcal{Q}]-\mathbb{E}[\mathcal{T}]\|$ change with truncation level.

\begin{figure}[h]
    \centering
\includegraphics[width=0.95\linewidth]{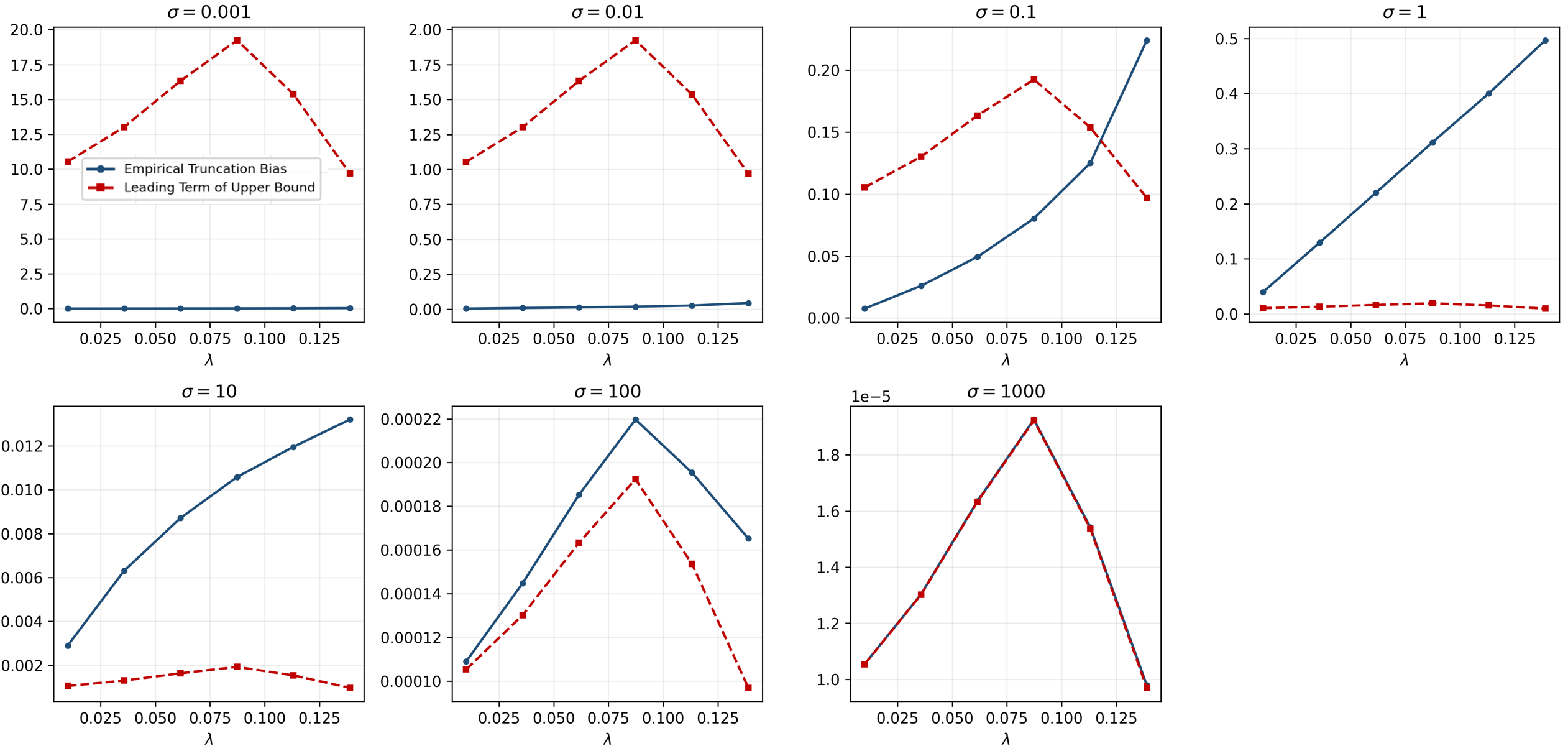}
    \caption{Empirical truncation bias and asymptotic leading term under different $\sigma$ values and density thresholds $\lambda$ (with dynamic $\eta$).}
    \label{fig: truncation_sigma_lambda_appendix}
\end{figure}

\subsection{Empirical Behavior of Estimation Bias under Different $\sigma$ and $\lambda$}
\label{appendix:ebias-empirical}

To complement Lemma \ref{lemma ebias}, we evaluate $\|\mu_\mathcal{Q}-\hat{\mu}_U\|_{\mathcal H}$ on the same 2D Gaussian setting ($\mathcal{P}=\mathcal N(0,I)$), varying $s$, with fixed $\delta=0.01$, multiple $\lambda$, and $\sigma\in\{10^{-3},10^{-2},10^{-1},1,10,100,1000\}$. Figure \ref{fig: estimation_s_lambda_sigma_appendix} uses a log-$y$ scale and shows that empirical estimation bias decreases with $s$ and follows the upper-bound trend $\frac{1+\sqrt{2\ln(1/\delta)}}{\sqrt{s}}$ up to sampling fluctuation.
Experimentally, for each $(\sigma,\lambda,s)$ setting, we first form $\mathcal{Q}$ by thresholding the same Gaussian sample pool, then repeatedly draw $s$ points $U$ from $\mathcal{Q}$ to build $\hat\mu_U$ and compute $\|\mu_\mathcal{Q}-\hat\mu_U\|_{\mathcal H}$. The solid curve is the trial mean, and the shaded band is mean $\pm$ one standard deviation across repeated trials, so the standard deviation quantifies Monte Carlo sampling variability under a fixed setting.

\begin{figure}[h]
    \centering
\includegraphics[width=0.98\linewidth]{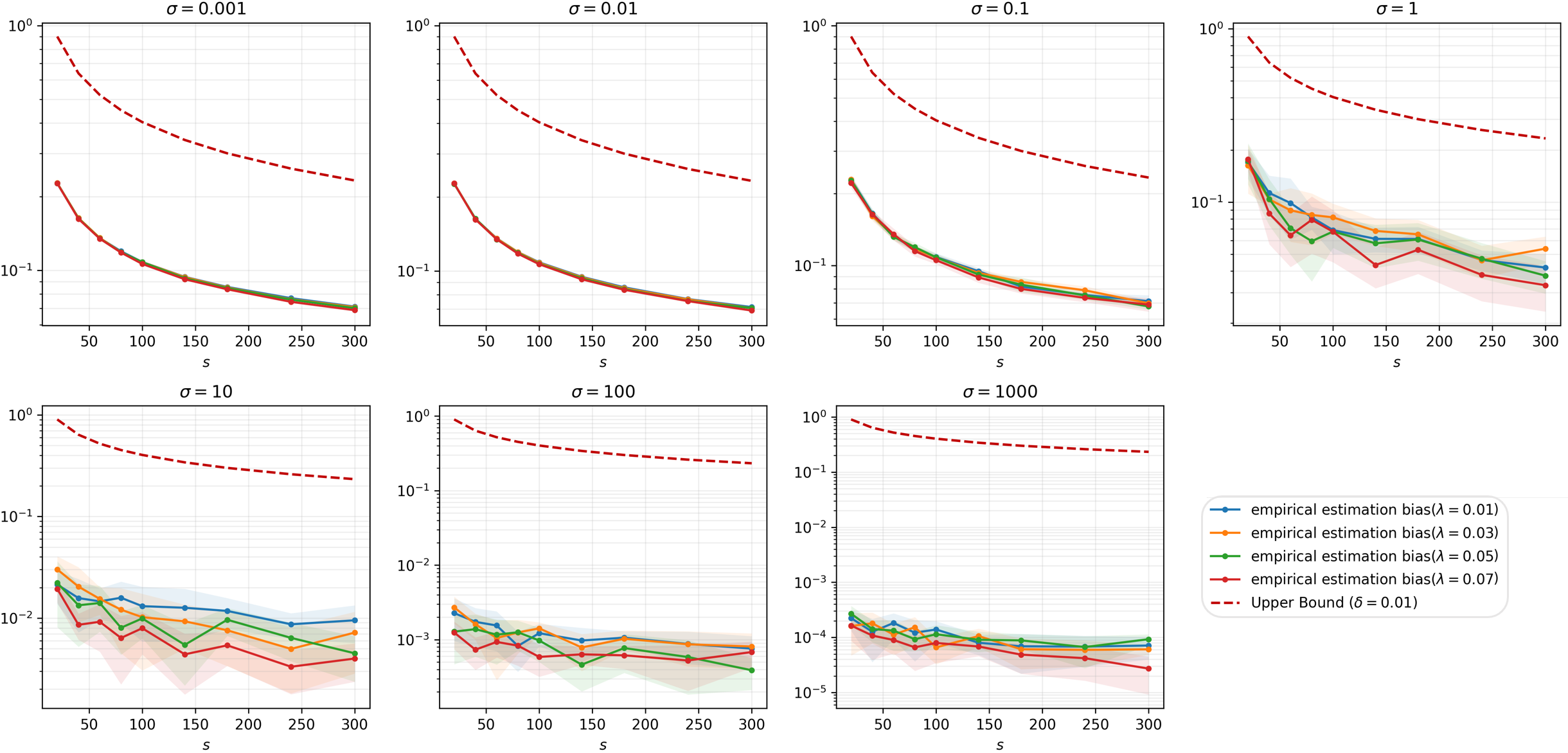}
    \caption{Empirical estimation bias versus $s$ under different $\sigma$ and $\lambda$, with fixed $\delta=0.01$, compared to the theoretical upper bound (log-$y$ scale).}
    \label{fig: estimation_s_lambda_sigma_appendix}
\end{figure}

\subsection{Empirical Behavior of Core Bias under Different $\tau$ (fixed $\sigma=1$)}
\label{appendix:cbias-empirical}

To complement Lemma \ref{lemma: cbias}, we evaluate
$\|\hat{\mu}_U-\hat{\mu}_G\|_{\mathcal H}$ on the same 2D Gaussian setting with fixed $\sigma=1$, varying $\tau$, multiple $\lambda$, and different $s$. Figure \ref{fig: core_bias_tau_appendix} compares empirical core bias with the bound $r/\sigma + B$ using a log-$y$ scale; the empirical curves remain below the bound across tested settings.
For each $(\tau,\lambda,s)$ setting, we repeatedly sample $s$ points $U$ from $\mathcal{Q}$, construct $G$ as the largest $\tau$-connected component, and compute both $\|\hat\mu_U-\hat\mu_G\|_{\mathcal H}$ and $r/\sigma+B$. The empirical curve reports the trial mean, while the shaded band is mean $\pm$ one standard deviation over trials; this standard deviation reflects randomness from repeated subset sampling (and the induced core extraction) under fixed parameters.

\begin{figure}[h]
    \centering
\includegraphics[width=0.98\linewidth]{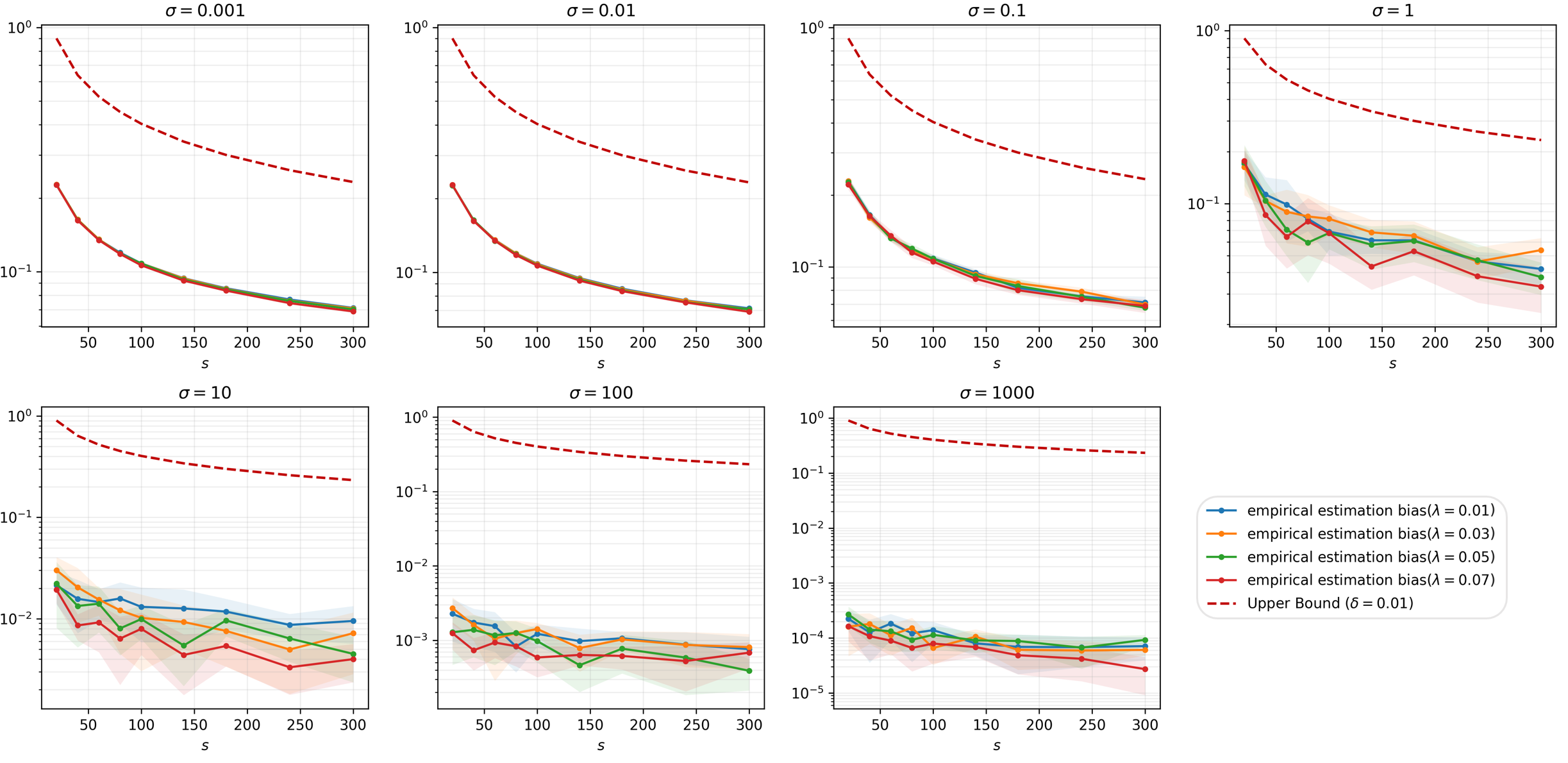}
    \caption{Core-bias validation with fixed $\sigma=1$ and varying $\tau$: empirical core bias versus the bound $r/\sigma + B$ under different $\lambda$ and $s$ (log-$y$ scale).}
    \label{fig: core_bias_tau_appendix}
\end{figure}

\subsection{Std Results corresponding to Figure \ref{fig:datasets_and_decomp}}
\label{appendix: stds}


For KBC, we report the detailed mean and std values corresponding to Figure \ref{fig:datasets_and_decomp} in Table \ref{tab:std_all}.
\begin{table}[ht]
\centering
\caption{KBC Mean $\pm$ std over 10 trials across all datasets and sample sizes.}
\label{tab:std_all}
\resizebox{\textwidth}{!}{%
\begin{tabular}{llcccccccc}
\toprule
Metric & Dataset & $n=50$ & $n=100$ & $n=200$ & $n=500$ & $n=1000$ & $n=2000$ & $n=5000$ & $n=10000$ \\
\midrule
\multirow{5}{*}{NMI} & Two-Moons & $0.203_{\pm 0.017}$ & $0.329_{\pm 0.000}$ & $0.889_{\pm 0.000}$ & $0.966_{\pm 0.000}$ & $0.926_{\pm 0.000}$ & $0.942_{\pm 0.002}$ & $0.958_{\pm 0.002}$ & $0.836_{\pm 0.279}$ \\
 & Concentric-Rings & $0.005_{\pm 0.005}$ & $0.003_{\pm 0.000}$ & $1.000_{\pm 0.000}$ & $1.000_{\pm 0.000}$ & $1.000_{\pm 0.000}$ & $1.000_{\pm 0.000}$ & $0.900_{\pm 0.300}$ & $1.000_{\pm 0.000}$ \\
 & G-Strip & $0.610_{\pm 0.119}$ & $1.000_{\pm 0.000}$ & $1.000_{\pm 0.000}$ & $1.000_{\pm 0.000}$ & $1.000_{\pm 0.000}$ & $1.000_{\pm 0.000}$ & $1.000_{\pm 0.000}$ & $1.000_{\pm 0.000}$ \\
 & EqSize-Gauss & $0.772_{\pm 0.144}$ & $0.728_{\pm 0.000}$ & $0.797_{\pm 0.000}$ & $0.569_{\pm 0.000}$ & $0.811_{\pm 0.000}$ & $0.754_{\pm 0.084}$ & $0.764_{\pm 0.045}$ & $0.761_{\pm 0.082}$ \\
 & Imbal-Gauss & $1.000_{\pm 0.000}$ & $0.724_{\pm 0.230}$ & $1.000_{\pm 0.000}$ & $1.000_{\pm 0.000}$ & $0.990_{\pm 0.000}$ & $0.987_{\pm 0.002}$ & $0.985_{\pm 0.003}$ & $0.988_{\pm 0.001}$ \\
\midrule
\multirow{5}{*}{Total Error} & Two-Moons & $0.796_{\pm 0.013}$ & $0.680_{\pm 0.004}$ & $0.307_{\pm 0.000}$ & $0.002_{\pm 0.000}$ & $0.001_{\pm 0.000}$ & $0.001_{\pm 0.000}$ & $0.000_{\pm 0.000}$ & $0.000_{\pm 0.000}$ \\
 & Concentric-Rings & $0.575_{\pm 0.000}$ & $0.284_{\pm 0.000}$ & $0.000_{\pm 0.000}$ & $0.000_{\pm 0.000}$ & $0.000_{\pm 0.000}$ & $0.000_{\pm 0.000}$ & $0.000_{\pm 0.000}$ & $0.000_{\pm 0.000}$ \\
 & G-Strip & $0.274_{\pm 0.006}$ & $0.126_{\pm 0.000}$ & $0.084_{\pm 0.000}$ & $0.002_{\pm 0.000}$ & $0.001_{\pm 0.000}$ & $0.000_{\pm 0.000}$ & $0.000_{\pm 0.000}$ & $0.000_{\pm 0.000}$ \\
 & EqSize-Gauss & $0.087_{\pm 0.000}$ & $0.638_{\pm 0.012}$ & $0.572_{\pm 0.010}$ & $0.462_{\pm 0.000}$ & $0.346_{\pm 0.000}$ & $0.261_{\pm 0.000}$ & $0.057_{\pm 0.000}$ & $0.022_{\pm 0.000}$ \\
 & Imbal-Gauss & $0.116_{\pm 0.000}$ & $0.054_{\pm 0.000}$ & $0.014_{\pm 0.000}$ & $0.007_{\pm 0.000}$ & $0.003_{\pm 0.000}$ & $0.001_{\pm 0.000}$ & $0.000_{\pm 0.000}$ & $0.000_{\pm 0.000}$ \\
\midrule
\multirow{5}{*}{Estimation Bias} & Two-Moons & $0.606_{\pm 0.136}$ & $0.245_{\pm 0.042}$ & $0.034_{\pm 0.000}$ & $0.000_{\pm 0.000}$ & $0.001_{\pm 0.000}$ & $0.001_{\pm 0.000}$ & $0.000_{\pm 0.000}$ & $0.000_{\pm 0.000}$ \\
 & Concentric-Rings & $0.378_{\pm 0.000}$ & $0.000_{\pm 0.000}$ & $0.000_{\pm 0.000}$ & $0.000_{\pm 0.000}$ & $0.000_{\pm 0.000}$ & $0.000_{\pm 0.000}$ & $0.000_{\pm 0.000}$ & $0.000_{\pm 0.000}$ \\
 & G-Strip & $0.056_{\pm 0.003}$ & $0.011_{\pm 0.000}$ & $0.000_{\pm 0.000}$ & $0.002_{\pm 0.000}$ & $0.001_{\pm 0.000}$ & $0.000_{\pm 0.000}$ & $0.000_{\pm 0.000}$ & $0.000_{\pm 0.000}$ \\
 & EqSize-Gauss & $0.087_{\pm 0.000}$ & $0.411_{\pm 0.104}$ & $0.267_{\pm 0.089}$ & $0.135_{\pm 0.000}$ & $0.104_{\pm 0.000}$ & $0.074_{\pm 0.000}$ & $0.024_{\pm 0.000}$ & $0.010_{\pm 0.000}$ \\
 & Imbal-Gauss & $0.024_{\pm 0.000}$ & $0.018_{\pm 0.000}$ & $0.007_{\pm 0.000}$ & $0.003_{\pm 0.000}$ & $0.001_{\pm 0.000}$ & $0.001_{\pm 0.000}$ & $0.000_{\pm 0.000}$ & $0.000_{\pm 0.000}$ \\
\midrule
\multirow{5}{*}{Core Bias} & Two-Moons & $0.796_{\pm 0.013}$ & $0.680_{\pm 0.004}$ & $0.307_{\pm 0.000}$ & $0.002_{\pm 0.000}$ & $0.001_{\pm 0.000}$ & $0.001_{\pm 0.000}$ & $0.000_{\pm 0.000}$ & $0.000_{\pm 0.000}$ \\
 & Concentric-Rings & $0.575_{\pm 0.000}$ & $0.284_{\pm 0.000}$ & $0.000_{\pm 0.000}$ & $0.000_{\pm 0.000}$ & $0.000_{\pm 0.000}$ & $0.000_{\pm 0.000}$ & $0.000_{\pm 0.000}$ & $0.000_{\pm 0.000}$ \\
 & G-Strip & $0.274_{\pm 0.006}$ & $0.126_{\pm 0.000}$ & $0.084_{\pm 0.000}$ & $0.002_{\pm 0.000}$ & $0.001_{\pm 0.000}$ & $0.000_{\pm 0.000}$ & $0.000_{\pm 0.000}$ & $0.000_{\pm 0.000}$ \\
 & EqSize-Gauss & $0.087_{\pm 0.000}$ & $0.638_{\pm 0.012}$ & $0.572_{\pm 0.010}$ & $0.462_{\pm 0.000}$ & $0.346_{\pm 0.000}$ & $0.261_{\pm 0.000}$ & $0.057_{\pm 0.000}$ & $0.022_{\pm 0.000}$ \\
 & Imbal-Gauss & $0.116_{\pm 0.000}$ & $0.054_{\pm 0.000}$ & $0.014_{\pm 0.000}$ & $0.007_{\pm 0.000}$ & $0.003_{\pm 0.000}$ & $0.001_{\pm 0.000}$ & $0.000_{\pm 0.000}$ & $0.000_{\pm 0.000}$ \\
\midrule
\multirow{5}{*}{Truncation Bias} & Two-Moons & $0.606_{\pm 0.136}$ & $0.245_{\pm 0.042}$ & $0.034_{\pm 0.000}$ & $0.000_{\pm 0.000}$ & $0.001_{\pm 0.000}$ & $0.001_{\pm 0.000}$ & $0.000_{\pm 0.000}$ & $0.000_{\pm 0.000}$ \\
 & Concentric-Rings & $0.378_{\pm 0.000}$ & $0.000_{\pm 0.000}$ & $0.000_{\pm 0.000}$ & $0.000_{\pm 0.000}$ & $0.000_{\pm 0.000}$ & $0.000_{\pm 0.000}$ & $0.000_{\pm 0.000}$ & $0.000_{\pm 0.000}$ \\
 & G-Strip & $0.056_{\pm 0.003}$ & $0.011_{\pm 0.000}$ & $0.000_{\pm 0.000}$ & $0.002_{\pm 0.000}$ & $0.001_{\pm 0.000}$ & $0.000_{\pm 0.000}$ & $0.000_{\pm 0.000}$ & $0.000_{\pm 0.000}$ \\
 & EqSize-Gauss & $0.087_{\pm 0.000}$ & $0.411_{\pm 0.104}$ & $0.267_{\pm 0.089}$ & $0.135_{\pm 0.000}$ & $0.104_{\pm 0.000}$ & $0.074_{\pm 0.000}$ & $0.024_{\pm 0.000}$ & $0.010_{\pm 0.000}$ \\
 & Imbal-Gauss & $0.024_{\pm 0.000}$ & $0.018_{\pm 0.000}$ & $0.007_{\pm 0.000}$ & $0.003_{\pm 0.000}$ & $0.001_{\pm 0.000}$ & $0.001_{\pm 0.000}$ & $0.000_{\pm 0.000}$ & $0.000_{\pm 0.000}$ \\
\bottomrule
\end{tabular}%
}
\end{table}

\subsection{Clustering Result Visualization}
\label{appendix:cvis}
We show the clustering result of KBC and the best clustering result of KKM in Figure \ref{fig:clustering_vis}. 

\begin{figure}[ht]
  \centering
\includegraphics[width=0.99\textwidth]{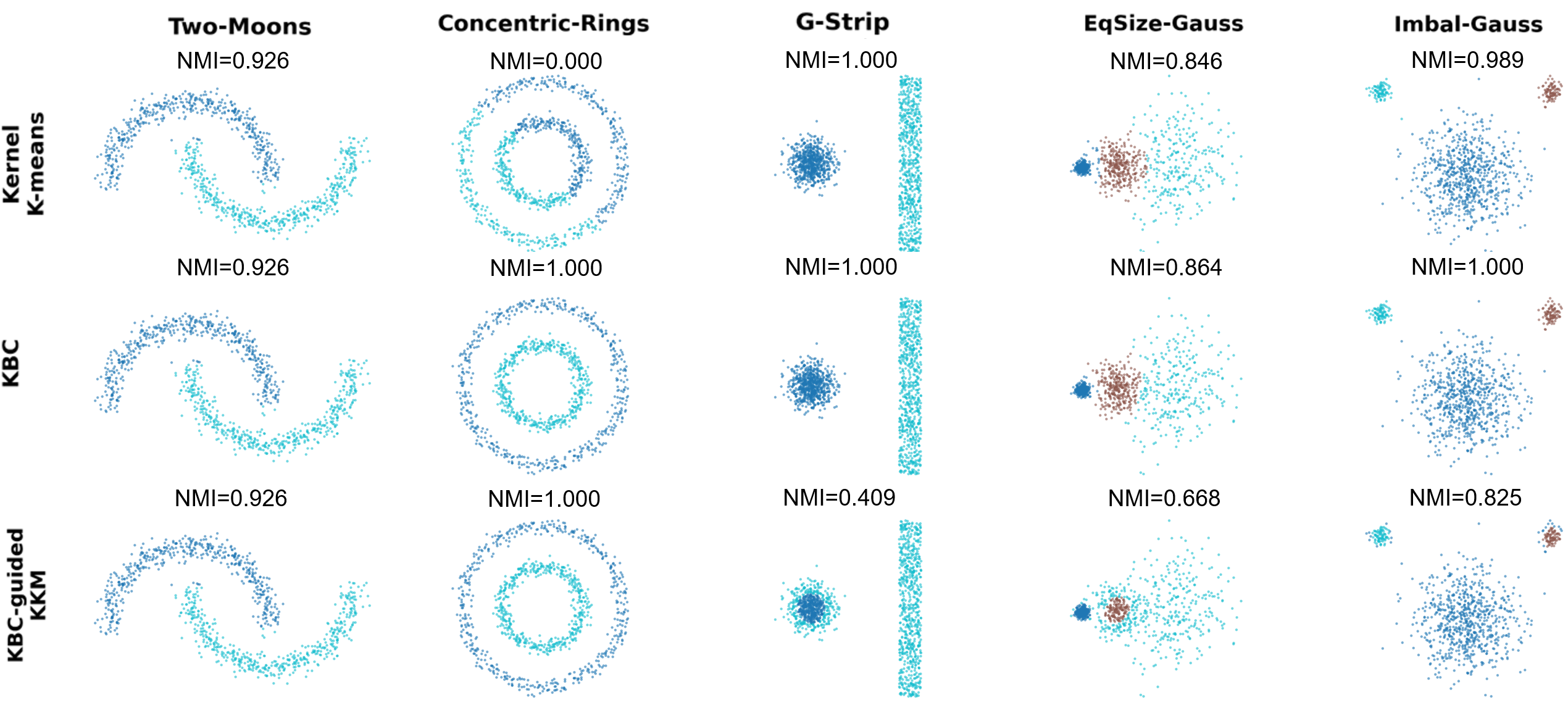}
\includegraphics[width=0.6\textwidth]{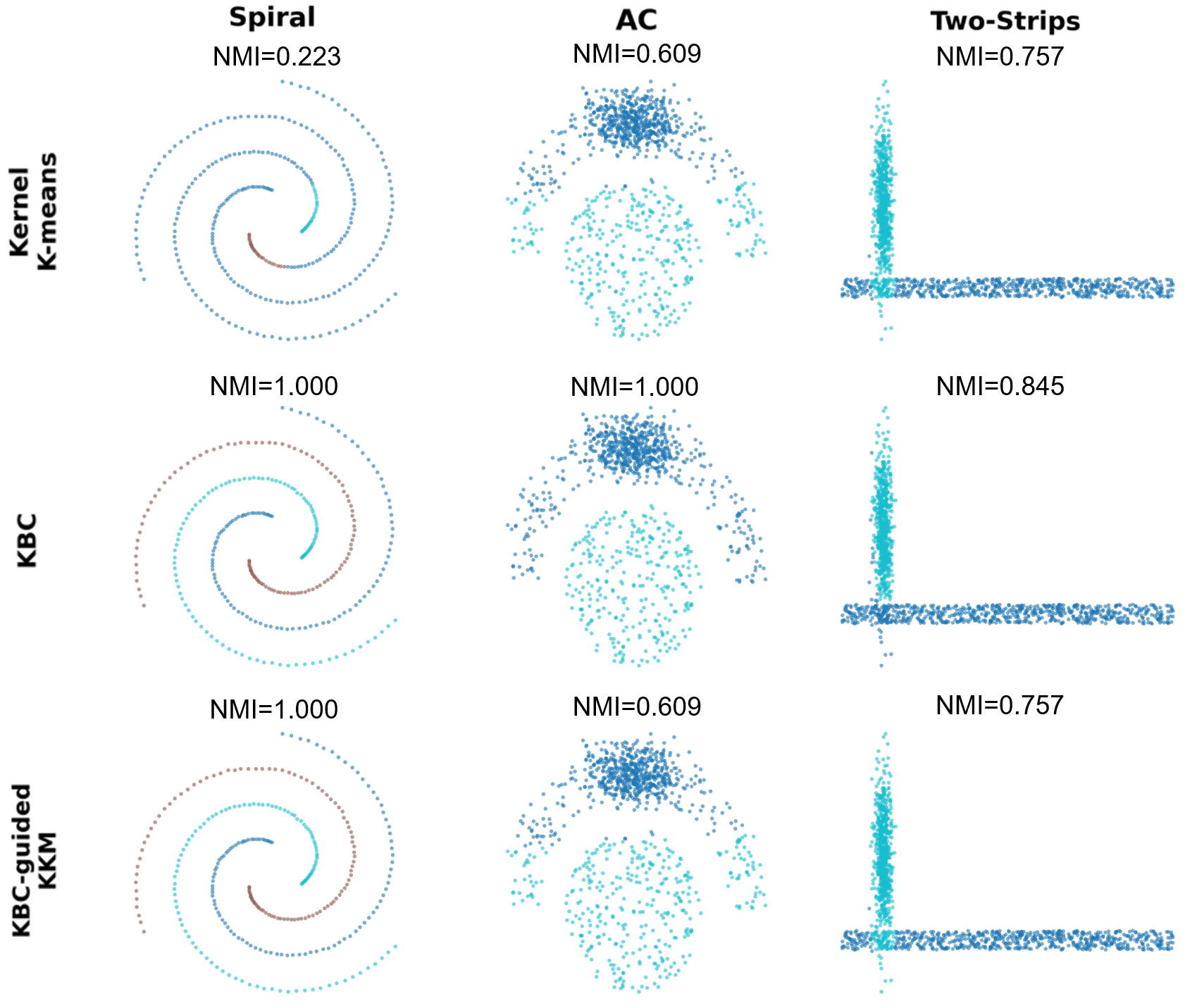}
\caption{Clustering Results.}
  \label{fig:clustering_vis}
\end{figure}

\subsection{Sensitivity of Core Bias}
\label{appendix:sensitivity core bias}
This subsection examines how the Stage 1 threshold $\tau$ in KBC affects core quality and, consequently, the final clustering performance of KBC. Across the five datasets, the experiments show that core bias is highly sensitive to $\tau$, and that the values of $\tau$ yielding lower core bias usually also produce higher or near-highest NMI. In particular, excessively small $\tau$ may make the threshold graph overly connected and prevent the extraction of $k$ valid cores, especially on non-convex datasets. Overall, these results confirm that the effectiveness of the assignment step depends critically on the quality of the extracted cores, making $\tau$ selection a central issue rather than a minor implementation detail.

To complement this sensitivity analysis, we further introduce a simple unsupervised heuristic for selecting $\tau$ directly from the connected components produced in the first step of KBC. The heuristic favors candidate thresholds that yield connected components with large coverage, strong internal cohesion, and balanced sizes, thereby reflecting the structural quality of the extracted cores without using label information. Empirically, the resulting $\tau$ values are well aligned with the best-performing choices on most datasets, indicating that the proposed criterion provides a practical and effective rule for threshold selection in KBC.

\subsubsection{Results on Synthetic Datasets}
\label{app_heuristic}
\begin{figure}[ht]
  \centering
  \includegraphics[width=\textwidth]{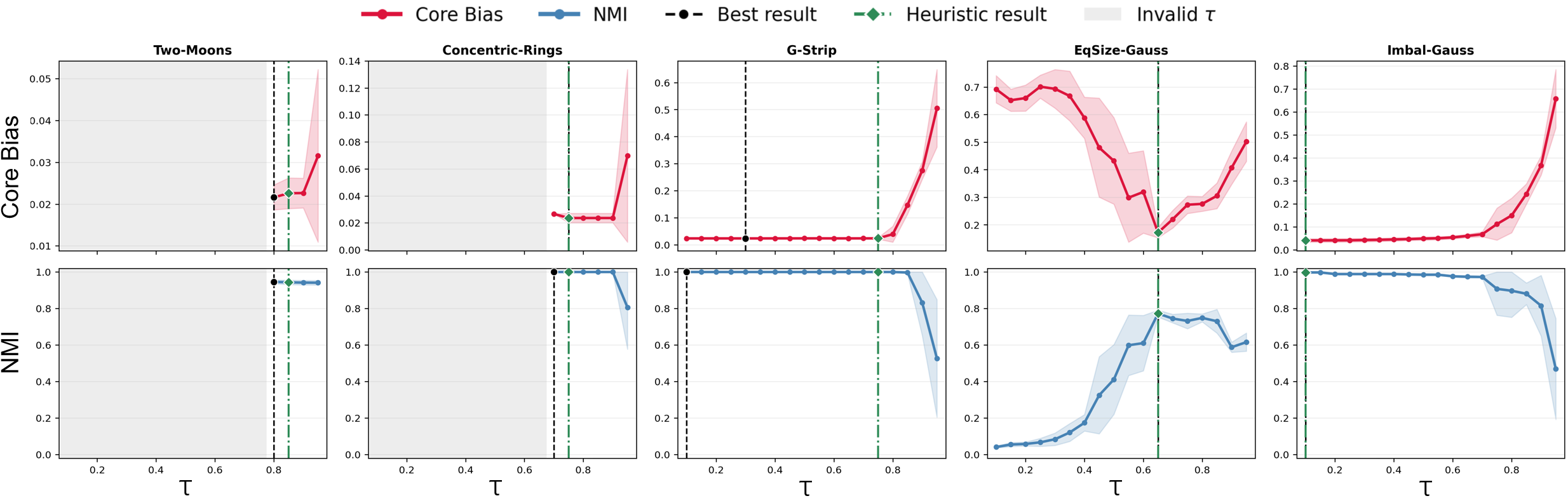}
  \caption{Sensitivity of one-pass KBC to the core extraction threshold $\tau$ across five datasets. For each dataset, we report the mean and standard deviation of core bias and final NMI over 10 subsamples. The gray area means that the invalid $\tau$ values (small $\tau$) makes the connected component contain all the points and hence core bias can not be computed. The sigma here is selected via common median heuristic.
  }
  \label{fig:core-bias-tau}
\end{figure}

To evaluate the importance of core quality, we study the sensitivity of KBC to the threshold parameter $\tau$ used in Stage 1 for core extraction. For each dataset and each value of $\tau$, we repeat the subsampling procedure 10 times using 70\% random subsamples, disable post-processing, and measure both the resulting core bias and the final one-pass NMI. The top row of Fig.~\ref{fig:core-bias-tau} reports the mean RKHS discrepancy between each true cluster and its matched extracted core, while the bottom row reports the corresponding clustering performance. 

Several consistent patterns emerge.

\begin{enumerate}
    \item First, the choice of $\tau$ has a direct and often substantial effect on core quality, confirming that core extraction is a determining factor in the overall method.
    \item Second, on most datasets, the range of $\tau$ values that minimizes core bias also yields the highest or near-highest NMI, showing that better cores lead to better one-pass assignments. 
    \item Third, the gray regions on non-convex datasets such as Two-Moons and Concentric-Rings indicate that small $\tau$ values can make the threshold graph overly connected, preventing the extraction of $k$ valid cores. 
\end{enumerate} Overall, these results reinforce the central message of the theory: \textbf{optimal assignment is meaningful only relative to the extracted cores, and therefore the quality of the cores is itself a crucial object of analysis}. The sensitivity of the core bias could be mitigated via specific kernel choices, as shown in Figure A3 in \citep{zhang2025kernel2}, which uses a more complicated Isolation Kernel.

\paragraph{A heuristic for choosing $\tau$ based on connected components.}
To select $\tau$ without using labels, we score each candidate $\tau$ by the quality of the top-$k$ connected components returned by the first step of KBC. Let $X_{\mathrm{sub}}$ be the subsample used in Step 1, and let $G_1(\tau),\dots,G_k(\tau)$ denote the $k$ largest connected components extracted from the kernel-threshold graph under parameter $\tau$. We define
\[
S_{\mathrm{heur}}(\tau)
:=
\operatorname{Cov}(\tau)\,
\operatorname{Coh}(\tau)\,
\sqrt{\operatorname{Bal}(\tau)},
\]
where
\[
\operatorname{Cov}(\tau)
:=
\frac{\sum_{j=1}^k |G_j(\tau)|}{|X_{\mathrm{sub}}|},
\qquad
\operatorname{Bal}(\tau)
:=
\frac{\min_{1 \le j \le k}|G_j(\tau)|}{\max_{1 \le j \le k}|G_j(\tau)|},
\]
and
\[
\operatorname{Coh}(\tau)
:=
\frac{1}{k}\sum_{j=1}^k
\frac{1}{|G_j(\tau)|}
\sum_{x \in G_j(\tau)} K\!\left(\delta(x), G_j(\tau)\right),
\]

where $K$ is distributional kernel, $\operatorname{Cov}(\tau)$ quantifies the coverage of the subsample by the extracted top-$k$ connected components, $\operatorname{Coh}(\tau)$ quantifies their average within-component cohesion in terms of the distribution kernel, and $\operatorname{Bal}(\tau)$ quantifies the size balance among these components. The square root is introduced to soften the effect of the balance term, so that it discourages severely imbalanced connected components while still allowing coverage and cohesion to remain the primary factors in selecting $\tau$.

 Equivalently,
\[
\operatorname{Coh}(\tau)
=
\frac{1}{k}\sum_{j=1}^k
\frac{1}{|G_j(\tau)|^2}
\sum_{x,g \in G_j(\tau)} k(x,g).
\]
The heuristic choice of $\tau$ is then
\[
\tau_{\mathrm{heur}}
\in
\arg\max_{\tau \in T} S_{\mathrm{heur}}(\tau),
\]
where $T$ is the candidate set of threshold parameters. Intuitively, this criterion prefers values of $\tau$ that produce connected components with large coverage, strong internal cohesion, and balanced component sizes.

Empirically, the proposed connected-component heuristic yields $\tau$ values that are well aligned with the best empirical choices. Overall, the heuristic offers an effective unsupervised rule for choosing $\tau$ in KBC, as shown in Figure \ref{fig:core-bias-tau}.





\subsubsection{Results on Real Datasets}
\label{appendix:heur}

\begin{figure}[h]
    \centering
\includegraphics[width=.95\textwidth]{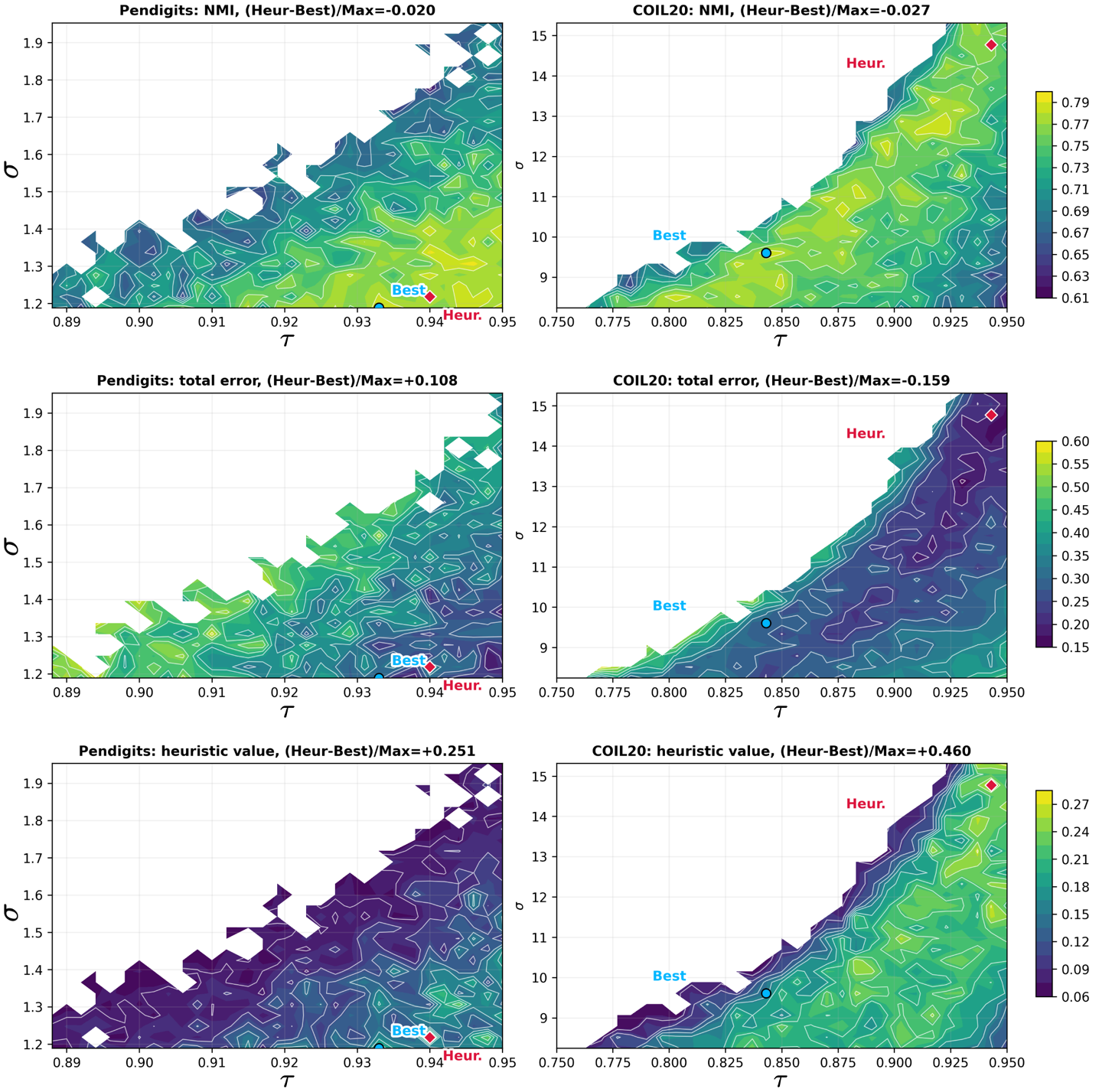}
    \caption{
    Refined feasible-region analysis of KBC on \textsc{Pendigits} and \textsc{COIL20} with MinMax normalization. 
    Each column shows the local $(\tau,\sigma)$ search region using three aligned heatmaps: NMI, total error, and heuristic value. 
    \textbf{Best} denotes the feasible grid point with the maximum NMI in the refined search region, while \textbf{Heur.} denotes the feasible grid point with the maximum heuristic value. 
    Hence, \textbf{Best} is selected strictly by clustering performance rather than by the heuristic criterion. 
    On both datasets, the heuristic optimum is close to the NMI-optimal point, showing that the heuristic can effectively localize a strong parameter region even when it does not exactly match the NMI maximizer.
    }
    \label{fig:kbc-pendigits-coil20-refined}
\end{figure}

Figure~\ref{fig:kbc-pendigits-coil20-refined} highlights the relationship between the heuristic-guided parameter choice and the true clustering optimum (w.r.t NMI) in the feasible $(\tau,\sigma)$ region. 
The main point is that \textbf{Best} is defined as the feasible parameter pair achieving the highest NMI over the grid, not the pair with the largest heuristic score. 
This distinction is crucial because it separates the role of the heuristic as a selection proxy from the actual evaluation criterion used to determine the best clustering result.



For \textsc{Pendigits}, the NMI optimal point is $(0.93,\,1.12)$ with NMI $0.71$, whereas the heuristic-selected point is $(0.94,\,1.22)$ with NMI $0.70$. 
For \textsc{COIL20}, the NMI optimal point is $(0.84,\,9.61)$ with NMI $0.7944$, while the heuristic-selected point is $(0.94,\,14.77)$ with NMI $0.77$. 
In both cases, the heuristic choice lies near the optimal region w.r.t NMI and achieves competitive performance, although it does not exactly coincide with the NMI maximizer.


Figure \ref{fig:kbc-pendigits-coil20-refined} examines whether the proposed heuristic can identify a useful hyperparameter region without directly optimizing NMI. On both Pendigits and COIL20, the heuristic-selected point lies close to the best feasible region under the ground-truth NMI grid. The selected pair does not exactly coincide with the NMI maximizer, but its performance loss is small: about 2.0\% on Pendigits and 2.7\% on COIL20 relative to the best observed NMI. This suggests that the heuristic captures the similar qualitative structure as the grid search.

The aligned heatmaps also show that low total error and high NMI tend to occur in nearby regions, supporting the link between core approximation quality and downstream assignment performance. However, the match is not perfect, which is expected on real datasets. Thus, Figure \ref{fig:kbc-pendigits-coil20-refined} should be read as practical evidence that the heuristic can localize good parameter choices.

\subsection{Empirical validation of another CaD clustering method: GDKC}
\label{app:gdkc}

\begin{figure}[htbp]
  \centering
  \includegraphics[width=\textwidth]{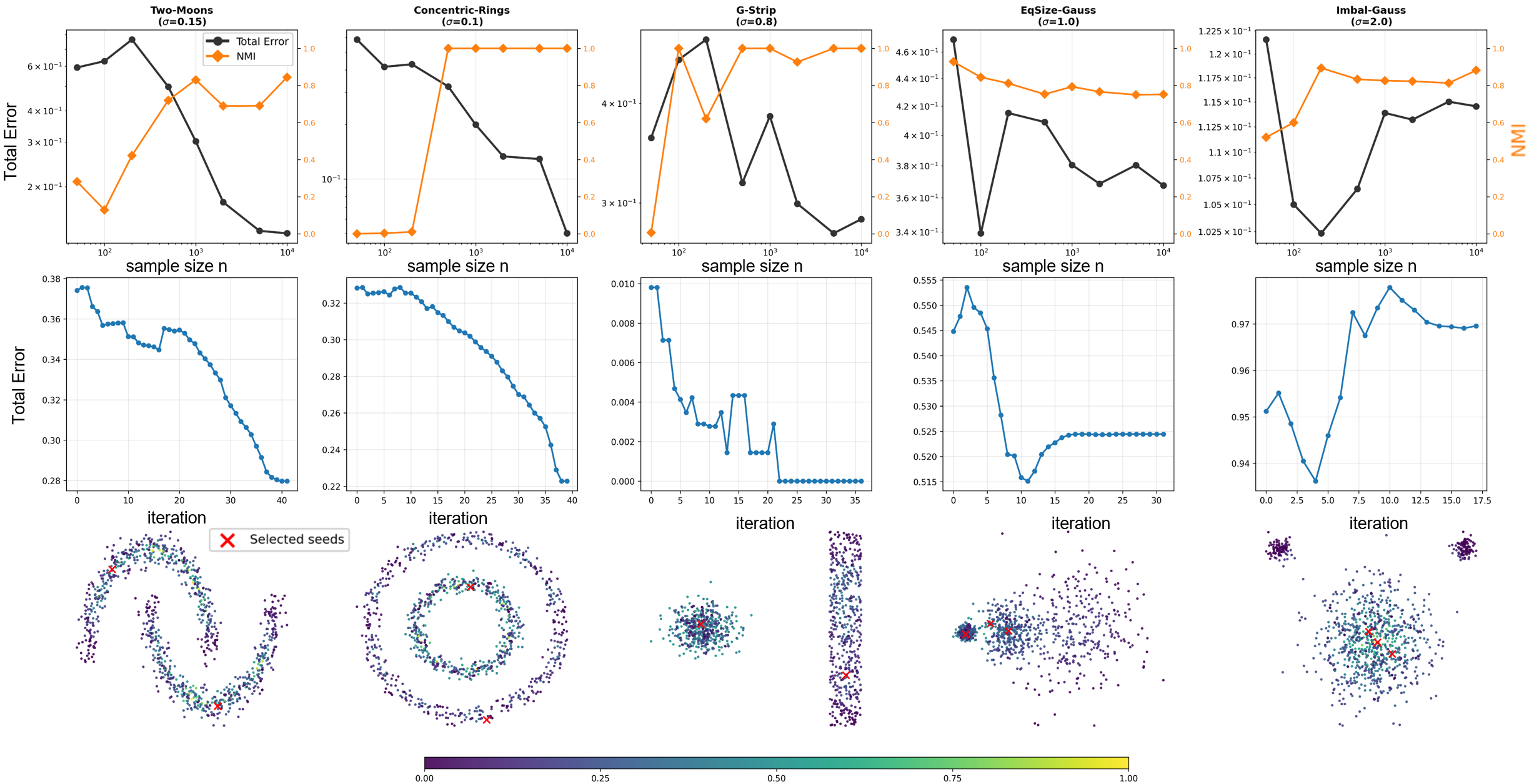}
  \caption{%
    \textbf{Results of GDKC on five synthetic datasets.}
    \emph{Top row}: Total Error 
    and NMI vs.\ dataset size $n$, averaged over 10 trials.
    \emph{Middle row}: Total Error across grow iterations at $n=1000$,
    showing how the $\tau$-decay growth phase reduces the total error. \emph{Bottom row}: seed selection score and selected seeds at $n=1000$. $\delta$ is selected using the heuristic method without the Cov term, because the core clusters are growing subsets rather than fixed ones in KBC. 
  }
  \label{fig:gdkc_combined}
\end{figure}

Figure~\ref{fig:gdkc_combined} reports the Total Error 
and NMI on five synthetic datasets.
For non-convex and strip clusters, total error decreases
monotonically with $n$ while NMI approaches $1.0$.
For Gaussian clusters the error converges to a positive constant, as the
density peak core is structurally a strict subset of the Gaussian component;
nevertheless NMI remains stable (${\approx}0.80$), confirming that
irreducible core bias does not impair clustering accuracy.
The grow traces (bottom row) show monotone error reduction for non-convex
data and a brief non-monotone transient for Gaussian data before settling
at the respective plateau.

For the two Gaussian datasets (EqSize-Gauss, Imbal-Gauss), the total error does not vanish as $n$ increases, because the seeds favor high density areas and the seeds are not  distributed in all clusters, resulting in a persistent distributional gap. The NMI is also lower due to bad cores. The NMI on imbal-Gauss is rather high because the largest cluster takes 8/11 of the mass. So the cluster results of the top two small Gaussian clusters have a smaller effect on the NMI.



\section{On the use of Large Language Models}
\label{appendix: LLM}

 Large Language Models (LLMs) were utilized in the polishing phase of this paper’s preparation. Specifically, LLMs were employed to optimize linguistic clarity, enhance stylistic coherence, and correct minor grammatical or syntactical inconsistencies. All core intellectual content, including conceptual frameworks, empirical observations, argumentative structure, and citation alignment, was developed, curated, and validated exclusively by the human authors.

 Portions of the code in this project were developed with the assistance of large language models (LLMs), including support for code scaffolding, debugging, and documentation refinement. All LLM-generated or LLM-modified content was manually reviewed, tested, and revised as necessary by the authors. The authors take full responsibility for the correctness and reliability of the final code and the resulting research outcomes.


\end{document}